\documentclass{article} 
\usepackage{arxiv_conference,times}

\usepackage{amsmath,amsfonts,bm}

\def\eqref#1{equation~\ref{#1}}

\def\1{\bm{1}}

\def\vzero{{\bm{0}}}

\def\vtheta{{\bm{\theta}}}

\def\ve{{\bm{e}}}

\def\vg{{\bm{g}}}

\def\vm{{\bm{m}}}

\def\vu{{\bm{u}}}
\def\vv{{\bm{v}}}

\def\mH{{\bm{H}}}
\def\mI{{\bm{I}}}

\def\mP{{\bm{P}}}

\DeclareMathAlphabet{\mathsfit}{\encodingdefault}{\sfdefault}{m}{sl}
\SetMathAlphabet{\mathsfit}{bold}{\encodingdefault}{\sfdefault}{bx}{n}

\def\gL{{\mathcal{L}}}

\newcommand{\R}{\mathbb{R}}

\usepackage{graphicx}
\usepackage[T1]{fontenc}
\usepackage[utf8]{inputenc}
\usepackage{booktabs}
\usepackage{amsthm}
\usepackage{subcaption}
\usepackage{wrapfig}

\usepackage{hyperref}
\usepackage{url}
\hypersetup{
    colorlinks=true,
    linkcolor=blue!70,
    citecolor=blue!70,
    urlcolor=blue!70,
    filecolor=blue!70
}

\title{The Road Taken: The Role of Optimizers\\at the Edge of Stability}

\author{Jaerin Lee \& Kyoung Mu Lee\\
Computer Vision Lab, ASRI\\
Seoul National University\\
Seoul 08826, Korea \\
\texttt{\{ironjr,kyoungmu\}@snu.ac.kr}
}

\newtheorem{theorem}{Theorem}[section]
\newtheorem{lemma}[theorem]{Lemma}
\newtheorem{proposition}[theorem]{Proposition}
\newtheorem{corollary}[theorem]{Corollary}
\newtheorem{remark}[theorem]{Remark}

\iclrfinalcopy 
\begin{document}

\maketitle

\begin{abstract}
The edge of stability refers to a phenomenon in deep learning with gradient-based optimizers where the Hessian eigenvalues of the loss remain stable above a threshold that the classical descent lemma predicts to be unstable.
Previous works formulate the edge of stability with respect to the maximum Hessian eigenvalue and the learning rate.
However, we observe that many first-order methods, including gradient descent, significantly violate the stability bound predicted by these theories by a factor as large as $\times 21.1$.
Moreover, this deviation turns out to be systematic and highly dependent on the underlying optimizer, which is not captured by previous formulations.
This calls for a new formulation of the stability threshold, which we derive from the directional Hessian and the gradient-alignment score with respect to the actual update taken by the optimizer, rather than the maximum curvature mode.
Our new formulation of the \emph{realized} edge of stability not only removes optimizer-dependent offsets and provides more consistent predictions of the stability threshold, but also introduces new diagnostic tools that reveal the unique role of the optimizer in actively balancing between the temporal and spatial budgets in first-order optimization.
\end{abstract}

\section{Introduction}
\label{sec:introduction}
The training dynamics of deep learning optimization still possess many mysteries.
One of the most intriguing phenomena is the \emph{edge of stability} (EoS)~\citep{cohen2021gradient}, where gradient descent operates in a region where classical theory predicts instability.
The good old descent lemma~\citep{ortega1970iterative,bertsekas1999nonlinear} suggests that the available step size $\eta$ for gradient descent is bounded by the curvature of the objective function:
\begin{equation}
\label{eq:descent_lemma}
\eta \lambda_{\max} (\mH) \leq 2,
\end{equation}
where $\lambda_{\max}$ is the maximum eigenvalue of the Hessian matrix $\mH = \nabla_\vtheta^2 \gL(\vtheta^\star)$ of the loss $\gL(\vtheta)$ at a local minimum $\vtheta^\star$.
Although this bound holds for convex objectives, its behavior becomes more subtle in highly nonconvex settings such as deep learning.
In those regimes, it has been reported that the relation (\ref{eq:descent_lemma}) behaves more like a stable attractor that draws the curvature $\lambda_{\max}$ towards a value near $2 / \eta$, rather than a strict stability threshold~\citep{lewkowycz2020large,cohen2021gradient}.
During the first few hundred iterations, the top Hessian eigenvalue $\lambda_{\max}$ either shrinks rapidly~\citep{lewkowycz2020large,zhu2024catapults} or increases gradually~\citep{cohen2021gradient} depending on the initialization, until it eventually reaches a stable point near $2 / \eta$, and then hovers around this value.
This mechanism is referred to as the \emph{edge of stability} (EoS)~\citep{cohen2021gradient}.

Rather than being unified into a single definition, previous works have suggested various alternative formulations for the edge of stability~\citep{lyu2022understanding,cohen2023adaptive,damian2023self,agarwala2023second,zhu2023understanding,andreyev2024edge,chen2024stability,cohen2025central,islamov2026noneuclidean,andreyev2026momentum,litman2026origin}.
Nevertheless, all share the same core idea and structure as a product of two or three separable quantities: (1) a learning rate scalar $\eta$, (2) a \emph{maximum curvature eigenmode} or \emph{sharpness} $S$ with respect to a (possibly preconditioned) Hessian, and optionally (3) a momentum-induced gain $\Gamma$ if the optimizer is stateful:
\begin{equation}
\label{eq:edge_of_stability}
\eta S \Gamma \leq 2.
\end{equation}
For instance, the original formulation \citep{cohen2021gradient} assigns $S = \lambda_{\max}(\mH)$ and $\Gamma = 1$.
Later, \citet{cohen2023adaptive} incorporate preconditioning by taking $S = \lambda_{\max}(\mP^{-1/2} \mH \mP^{-1/2})$ and introduce a factor $\Gamma = \Gamma(\beta)$ to account for the momentum, e.g., $\Gamma(\beta) = 1 / (1 + \beta)$ for Heavy Ball~\citep{polyak1964some} and $\Gamma(\beta) = (1 + 2 \beta) / (1 + \beta)$ for Nesterov's momentum~\citep{nesterov1983method}.

In summary, the edge of stability has been understood as a product of the \emph{worst-mode} sharpness and \emph{separable} temporal factors.
This formulation provides \emph{available} stability thresholds that are provably marginal for quadratic objectives.
However, we find that the actual relative position of the edge, $\chi := \eta S \Gamma / 2$, is not consistent with the theoretical prediction $\chi = 1$.
As Figure~\ref{fig:observation} shows, its values range from 0.79 to 21.07 across different types of optimizers and learning rates.
Even vanilla GD sits above $\chi = 1$ (top left).
The median of the saturated curve grow up to $\chi \approx 1.62$ as the learning rate increases.
More complicated optimizers, such as dual-momentum optimizers~\citep{lee2024grokfast,pagliardini2025ademamix} or pole-zero filters~\citep{ma2018quasi}, show their actual EoS rising to around $\chi = 2.65$ (columns 3 and 4).
The most extreme case is AdaGrad~\citep{duchi2011adaptive} (top right), which shows a clear edge starting from $\chi = 1.42$ at $\eta = 0.01$ and reaching $\chi = 21.07$ at $\eta = 0.1$.
The deviation of $\chi$ from the theoretical prediction of $\chi = 1.0$ depends strongly on the underlying optimization algorithm.
This systematic deviation poses a new challenge to the current formulation and interpretation of the edge of stability.

In fact, this formulation (\ref{eq:edge_of_stability}) is hiding a 
critical assumption: the worst-mode sharpness $S$ is always realized by 
the optimizer, which is not always the case.
Starting from modeling the actual dynamical system the optimizer operates 
on, we derive a new formulation of the stability threshold from the 
directional Hessian~\citep{lee2023ias,mishkin2024directional,
islamov2026noneuclidean} and the gradient-update alignment~\citep
{lee2025greedy}.
As shown in the bottom row of Figure~\ref{fig:observation}, this new formulation, called the \emph{realized} edge of stability $\zeta$, largely eliminates optimizer-dependent offsets.
Furthermore, it is much more efficient to compute and recovers the original formulation $\chi$ by decomposing it into interpretable factors.
This provides a richer understanding of the phenomenon itself and reveals the optimizer's role in steering along the edge of stability by actively calibrating the temporal gauge and spatial budget.

\begin{figure}[t]
\centering
\includegraphics[width=\linewidth]{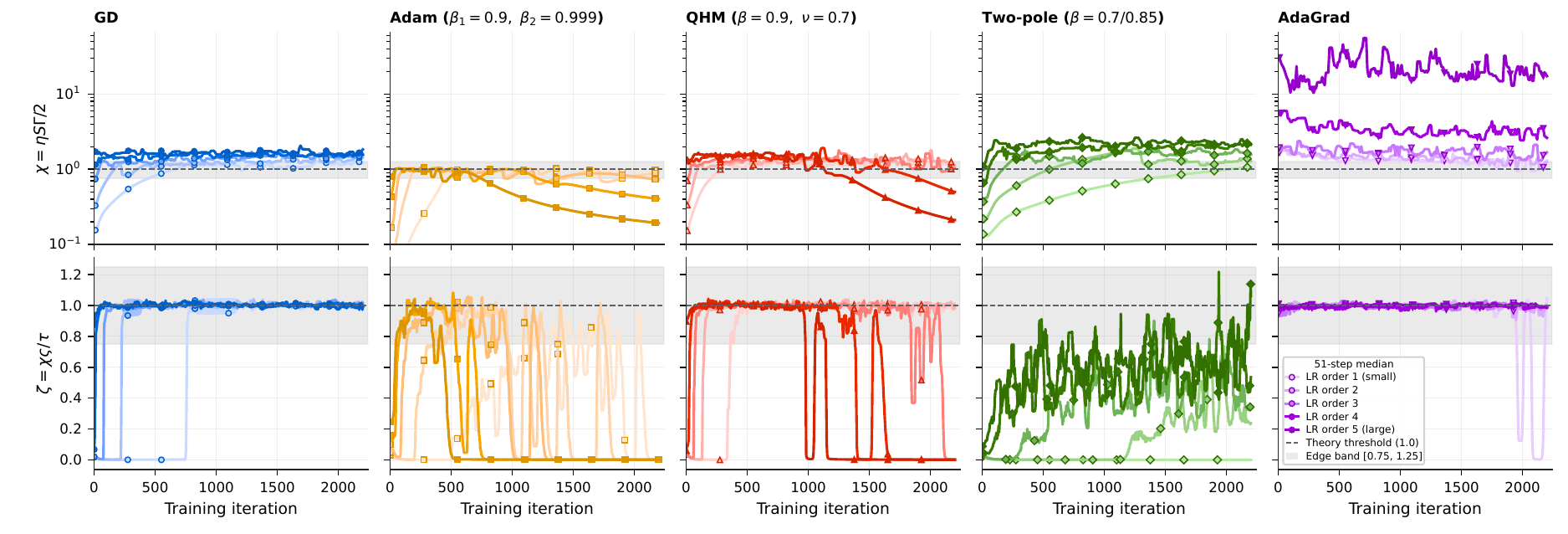}
\vspace{-1.8em}
\caption{%
\small
\textbf{Edge of stability across different families of optimizers.}
We train a fully-connected network on CIFAR-10 using various full-batch gradient-based optimizers: gradient descent (GD), Adam, GD with a pole-zero filter (QHM) and GD with a two-pole filter (Grokfast), and AdaGrad.
Theoretical prediction of the edge of stability $\chi := \eta S \Gamma / 2 = 1.0$ is occasionally violated in practice, and the offsets $\chi$ are heavily dependent on the type of optimizers in use.
This systematic deviations motivate a new formulation of the \emph{realized} edge of stability $\zeta := \chi \varsigma / \tau$ with $\varsigma$ and $\tau$ being the spatial and temporal calibrations, respectively.
This new formulation is more consistent with the empirical observations and removes the optimization-dependent offsets.
}
\label{fig:observation}
\vspace{-1em}
\end{figure}

Our contributions are:
\begin{itemize}
    \item We extend the original formulation of the edge of stability~\citep{cohen2021gradient,cohen2023adaptive} to general gradient-based optimizers with states, which is exact for fixed quadratic objectives.
    \item We derive a new formulation of the stability threshold, based on the directional Hessian and the gradient-update alignment score~\citep{lee2025greedy}, which is more consistent with empirical observations and eliminates optimizer-dependent offsets.
    \item We factorize the realized edge of stability $\zeta$ into interpretable components: the original edge of stability $\chi$, spatial participation $\varsigma$, and temporal calibration $\tau$, and show that the two EoS formulations coincide, $\zeta = \chi$, if and only if $\varsigma = \tau$.
    \item This provides a simple yet powerful diagnostic tool for understanding the main factors driving a training trajectory at the edge of stability, revealing how much of the potential loss decrease, $\vg^\top \vu = \mathrm d\gL / \mathrm dt$, carried out by the optimizer is actually converted into real progress $\Delta \gL$, and how much is lost to different components.
\end{itemize}

\section{Preliminaries: Edge of Stability}
\label{sec:preliminary}
To understand the edge of stability, we start by reviewing its previously established formulations.
Following the aforementioned lines of work, we focus on the full-batch settings where the analysis is more tractable and the effect is more pronounced.
Let us define a few notations to clarify the following discussion.
In gradient-based optimization, we train a parameter $\vtheta$ by minimizing a loss function $\gL(\vtheta)$ through its first-order gradients $\vg := \nabla_{\vtheta} \gL(\vtheta)$.
We can either use this gradient directly, or filter it by some input-dependent preconditioner $\mP$ and temporal filter $Q$ to produce a parameter update $\vu$ which has the same shape as the parameter $\vtheta$:
\begin{equation}\tag{{\color{blue!50}$\spadesuit$}}
\label{eq:gradient_filter}
\renewcommand{\boxed}[1]{\colorbox{lightgray!15}{\ensuremath{#1}}}
\boxed{\ %
\vtheta_{t+1} \;=\; \vtheta_t - \eta \vu_t \;, \quad \text{where} \quad \vu_t \;=\; \mP_t^{-1} (Q_t * \vg_{\le t})_t \;.
}
\end{equation}
With an input-dependent preconditioner $\mP_t$ and a causal gradient filter $Q_t$ acting on a history of gradient signal $\vg_{\le t}$, this gives a generic formulation for gradient-based optimizers with internal states.
This not only includes GD, Heavy Ball~\citep{polyak1964some}, Nesterov's Momentum~\citep{nesterov1983method}, and various preconditioned versions of these~\citep{duchi2011adaptive,tieleman2012lecture,kingma2015adam,loshchilov2019decoupled}, but also extends to more exotic temporal filters such as dual-momentum~\citep{lee2024grokfast,pagliardini2025ademamix} or quasi-hyperbolic momentum (QHM)~\citep{ma2018quasi}.
For example, GD corresponds to $\mP_t = I$ and $Q_t = I$, and Heavy Ball with momentum coefficient $\beta$ is repesented by $Q * \vg_{\le t} = \sum_{k=0}^{t} \beta^k \vg_{t-k}$.
Signal processing notation for the gradient filter $Q$ allows us to study this dynamics in frequency domain, i.e., $Q(z) = \sum_{t=0}^{\infty} Q_t z^{-t}$.
We also assume its dc gain is normalized to $Q(1) = 1$, so that the learning rate $\eta$ is solely responsible for the gain of the system.
This simplifies the analysis without loss of generality.

From extensive observations, \citet{cohen2021gradient,cohen2023adaptive} formulated the EoS as a \emph{separable} product of a function of the preconditioner $\mP$ and a function of the momentum coefficient $\beta$,
\begin{equation}
\label{eq:edge_preconditioner}
\eta \lambda_{\max}({\mP_t^{-1/2} \mH_t \mP_t^{-1/2}})\, \Gamma(\beta) \;<\; 2,
\end{equation}
where $\mH_t$ is the Hessian at time $t$.
For example, Heavy Ball gives $\Gamma(\beta) = 1/(1 + \beta)$, normalized Heavy Ball gives $\Gamma(\beta) = (1 - \beta) / (1 + \beta)$, Nesterov's momentum gives $\Gamma(\beta) = (1 + 2\beta) / (1 + \beta)$, and so on.
It is proven for each momentum type that this behaves as a marginal stability condition for the frozen quadratic system~\citep{cohen2023adaptive}.
The following proposition generalizes their results to any gradient-based optimizer of type (\ref{eq:gradient_filter}).

\begin{proposition}[Edge of Stability for Stateful Optimizers]
\label{prop:edge_preconditioner}
\textnormal{[\texttt{\hyperref[appx:proof:eos_prop]{proof}}]}
Consider a dynamical system given by the update rule (\ref{eq:gradient_filter}).
Let $(\lambda, \vv)$ be any eigenpair of the preconditioned Hessian $\mP_t^{-1/2} \mH_t \mP_t^{-1/2}$.
Then the gain $k_\star(Q)$ at the first crossing of the unit circle by the root locus of $Q(z)$ is given by
\begin{equation}
\label{eq:gain_at_first_crossing}
k_\star(Q) = \min_{\omega \in \Omega_{+}(Q)} \left| \frac{1 - e^{i\omega}}{Q(e^{i\omega})} \right|, \quad \text{where} \quad
\Omega_{+}(Q) \;:=\; \left\{\, \omega \in (0, \pi] : \frac{1 - e^{i\omega}}{Q(e^{i\omega})} \in \mathbb{R}_{>0} \,\right\},
\end{equation}
where $\Omega_{+}(Q)$ is the set of frequencies on the unit circle at which the locus can cross with a real positive gain.
If we write $\Gamma := 2/k_\star(Q)$, then the equivalent stability condition for the frozen quadratic system is attained by
\begin{equation}\tag{{\color{red!50}$\clubsuit$}}
\label{eq:marginal_stability_condition}
\renewcommand{\boxed}[1]{\colorbox{lightgray!15}{\ensuremath{#1}}}
\boxed{\ %
\eta \lambda \Gamma \;\le\; 2.
}
\end{equation}
Moreover, if the first unit-circle crossing is at the Nyquist frequency $z = -1$ with $Q(-1) > 0$, i.e., period-2 oscillation, then $\Gamma = Q(-1)$ and the stability condition becomes $\eta \lambda Q(-1) \le 2$.
\end{proposition}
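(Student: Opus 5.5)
We freeze the system: take $\mP_t \equiv \mP$, $\mH_t \equiv \mH$ and a time-invariant causal filter $Q$. In the change of variables $\tilde\vtheta = \mP^{1/2}\vtheta$, the update (\ref{eq:gradient_filter}) becomes linear, driven by $\mP^{-1/2}\mH\mP^{-1/2}$. Projecting onto an eigenvector $\vv$ with eigenvalue $\lambda$ decouples the dynamics into a scalar recursion for $x_t := \vv^\top \mP^{1/2}(\vtheta_t-\vtheta^\star)$:
\begin{equation*}
x_{t+1} = x_t - \eta\lambda\,(Q * x_{\le t})_t .
\end{equation*}
Taking $z$-transforms gives the characteristic equation
\begin{equation*}
(z-1) + k\,z\,Q(z)\cdot z^{-1}\cdot z = 0 ,
\end{equation*}
which after normalizing the indexing convention reads $1 - z^{-1}\cdot(\ldots)$. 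More cleanly, I will write it as
\begin{equation*}
\frac{1 - z}{Q(z)} = -k \quad\text{or}\quad \frac{1 - z^{-1}}{\ldots},
\end{equation*}
and I will fix the sign and shift convention so that it matches the paper's expression $(1-e^{i\omega})/Q(e^{i\omega}) = k$ with $k := \eta\lambda \ge 0$. This is a root-locus problem in the gain $k$. The mode is stable iff all roots of the resulting polynomial (or rational) equation lie strictly inside the unit disk.

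The key steps, in order, are as follows.

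\textbf{(i) Stability at small gain.} At $k=0$ the only root on the unit circle is $z=1$. Using $Q(1)=1$ and implicit differentiation, I will show that this root moves into the disk for small $k>0$: locally $z \approx 1 - k$. All other roots, coming from the filter's own poles, are assumed inside the disk, since the filter is stable and causal. Hence the system is stable for small $k$.

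\textbf{(ii) Continuity argument.} Roots depend continuously on $k$, so stability can only be lost when some root reaches $|z|=1$. At such a point $z=e^{i\omega}$, and the characteristic equation forces $(1-e^{i\omega})/Q(e^{i\omega}) = k \in \mathbb{R}_{>0}$. This is exactly the condition $\omega \in \Omega_+(Q)$, where the symmetry $\omega \mapsto -\omega$ from real coefficients lets us restrict to $(0,\pi]$, and $\omega = 0$ is excluded because it gives $k=0$. For each admissible $\omega$ the crossing gain is the modulus $|(1-e^{i\omega})/Q(e^{i\omega})|$. The first crossing is therefore the minimum over $\Omega_+$, giving $k_\star(Q)$, and stability for $k<k_\star$ is equivalent to $\eta\lambda \le k_\star = 2/\Gamma$, i.e.\ (\ref{eq:marginal_stability_condition}).

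\textbf{(iii) Nyquist case.} At $\omega=\pi$ we have $1-e^{i\pi}=2$, so the crossing gain is $2/Q(-1)$, which is positive exactly when $Q(-1)>0$. Then $\Gamma = 2/k_\star = Q(-1)$.

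The main obstacle is the claim that the first crossing is genuinely a loss of stability, i.e.\ that the root exits the disk rather than touching the circle and returning. A touch-and-return would not break the $\le$ condition at the minimum but would matter for "equivalent". I would handle this by checking the transversality sign $\mathrm{d}|z|/\mathrm{d}k$ at the crossing, or by stating marginality: for $k<k_\star$ all roots are strictly inside the disk, and at $k=k_\star$ a root lies on the circle. That is all (\ref{eq:marginal_stability_condition}) asserts as a marginal stability condition, mirroring the per-momentum proofs of \citet{cohen2023adaptive}. I will also verify the formula against the Heavy Ball case, which should give $Q(-1)=(1-\beta)/(1+\beta)$ under the dc normalization. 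This recovers their $\Gamma$ up to the normalization of $\eta$, and serves as a sanity check on the sign and shift convention.
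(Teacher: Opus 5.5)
Your proposal is correct in substance and follows the paper's proof essentially step for step. The shared steps are: whitened-coordinate mode decoupling, the $z$-domain characteristic equation, the implicit-function argument $z(k) = 1 - k + O(k^2)$ from $Q(1)=1$, continuity of roots so that stability is lost only at a unit-circle crossing parametrized by $\Omega_+(Q)$ (conjugate symmetry, $\omega = 0$ excluded), and the Nyquist specialization $k_\star = 2/Q(-1)$. Like your marginal reading, the paper's argument in fact only establishes stability for $k<k_\star$ and a root on the circle at $k_\star$, so it does not rule out touch-and-return either.

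The one thing to repair is your garbled characteristic equation, where there is no convention to choose: $zX = X - kQX$ gives directly $z - 1 + kQ(z) = 0$, i.e.\ $k = (1-z)/Q(z)$, so your intermediate $(1-z)/Q(z) = -k$ has the wrong sign. Clearing denominators as the paper does, with $\xi = z^{-1}$ and $c_k(\xi) = (1-\xi)a(\xi) + k\xi b(\xi)$, makes your continuity-of-roots step rigorous.
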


\begin{corollary}[Recovering \citet{cohen2023adaptive}'s formulae]
\label{cor:cohen_separable_momentum_coefficient}
\textnormal{[\texttt{\hyperref[appx:proof:cohen_gamma]{proof}}]}
The coefficient $\Gamma(\beta)$ in the inequality (\ref{eq:edge_preconditioner}) for momentum-based optimizers is attained at the Nyquist frequency $z = -1$, i.e., $\Gamma(\beta) = Q(-1)$ for (dc gain-normalized) Heavy Ball and Nesterov's momentum.
\end{corollary}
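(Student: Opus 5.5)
The plan is to apply Proposition~\ref{prop:edge_preconditioner} to each of the two momentum filters and show in both cases that $\Omega_{+}(Q)=\{\pi\}$. The minimum in (\ref{eq:gain_at_first_crossing}) is then attained at the Nyquist frequency, so $\Gamma = 2/k_\star(Q) = Q(-1)$ by the last clause of the proposition. It then remains to check that $Q(-1)$ equals \citet{cohen2023adaptive}'s $\Gamma(\beta)$ once the dc-gain normalization is accounted for.

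\textbf{Heavy Ball.} First I write the normalized filter as $Q(z) = (1-\beta)/(1-\beta z^{-1})$. The characteristic relation of a frozen eigenmode (obtained by $z$-transforming $e_{t+1} = e_t - \eta\lambda (Q*e)_t$) gives closed-loop poles satisfying $k = \eta\lambda = (1-z)/Q(z)$. On $z = e^{i\omega}$ this becomes
\[
\frac{(1-e^{i\omega})(1-\beta e^{-i\omega})}{1-\beta} = \frac{(1+\beta)(1-\cos\omega) - i(1-\beta)\sin\omega}{1-\beta}.
\]
For $\beta\in[0,1)$ the imaginary part vanishes on $(0,\pi]$ only at $\omega=\pi$, where the value is $2(1+\beta)/(1-\beta)>0$. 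Hence $\Omega_+=\{\pi\}$, $k_\star = 2(1+\beta)/(1-\beta)$ and $\Gamma = (1-\beta)/(1+\beta) = Q(-1)$. This is the normalized Heavy Ball value. Undoing the normalization, i.e.\ using $\tilde\eta = \eta/(1-\beta)$ with the unnormalized filter $1/(1-\beta z^{-1})$, converts $\eta\lambda(1-\beta)/(1+\beta)\le 2$ into $\tilde\eta\lambda/(1+\beta)\le 2$, which is Cohen's $\Gamma(\beta) = 1/(1+\beta)$.

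\textbf{Nesterov.} From $\vm_t=\beta\vm_{t-1}+\vg_t$ and $\vu_t = \vg_t + \beta \vm_t$, the unnormalized filter is $Q(z) = (1+\beta-\beta z^{-1})/(1-\beta z^{-1})$. Its dc gain is $1/(1-\beta)$, so I rescale by $1-\beta$ as before. At the Nyquist frequency the unnormalized filter gives $(1+2\beta)/(1+\beta)$, which is Cohen's formula after the same learning-rate reinterpretation.

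The main step is showing $\Omega_+=\{\pi\}$ here. I will multiply $(1-z)(1-\beta\bar z)$ by the conjugate $(1+\beta-\beta z)$ of the numerator of $Q$ and compute the imaginary part as a trigonometric polynomial in $\omega$. I expect it to factor as $\sin\omega\cdot p(\cos\omega)$ with $p$ of one sign for $\beta\in[0,1)$. If so, the only real crossing on $(0,\pi]$ is $\omega=\pi$, where the ratio is $2(1+\beta)/(1+2\beta)\cdot(1-\beta)^{-1}>0$. If $p$ is not sign-definite, the fallback is to show that any additional real crossing has a larger gain, so that the minimum is still attained at $\pi$.

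Finally, I confirm that the crossing at $\omega=\pi$ is genuinely the first one. As $k\to0^+$ the closed-loop poles tend to the open-loop poles $z=1$ (moving inward, since $Q(1)=1>0$) and $z=\beta$, both of which are stable. By continuity of the root locus, the first pole to reach the unit circle must do so at a frequency in $\Omega_+$, which is only $\pi$. This yields $\Gamma(\beta)=Q(-1)$ for both optimizers.
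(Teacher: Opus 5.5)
Your proof is correct. It reaches the same conclusion as the paper, $\Omega_+(Q)=\{\pi\}$ and hence $\Gamma=Q(-1)$, but by a different key step.

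\textbf{The paper's route.} It never computes $\operatorname{Im}k(\omega)$ filter by filter. Instead it proves a general lemma for positive one-pole mixtures $Q=\sum_j w_j(1-\beta_j)/(1-\beta_j z^{-1})$. The lemma uses the phase margin $\Psi(\omega)=\operatorname{Im}\bigl(e^{-i(\omega-\pi)/2}Q(e^{i\omega})\bigr)$, which must vanish at any crossing where $k(\omega)$ is real. The point is that $\Psi$ is $\mathbb{R}$-linear in $Q$, whereas $k=(1-z)/Q$ is not additive over mixture components. So one single-pole computation, $\Psi_\beta=(1-\beta)^2\cos(\omega/2)/(1-2\beta\cos\omega+\beta^2)>0$ on $(0,\pi)$, settles every positive mixture at once. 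Normalized Nesterov is then written as the convex mixture $(1-\beta)\cdot 1+\beta\cdot(1-\beta)/(1-\beta z^{-1})$, with poles $0$ and $\beta$. The unnormalized forms follow from the scaling lemma $\Gamma(cQ)=c\,\Gamma(Q)$.

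\textbf{Your route.} It is more elementary and self-contained, and the factorization you anticipated does hold, so the fallback is never needed. On $z=e^{i\omega}$,
\[
\operatorname{Im}\bigl[(1-z)(1-\beta\bar z)(1+\beta-\beta z)\bigr]=-\sin\omega\,\bigl(1+\beta-2\beta\cos\omega\bigr),
\]
and $1+\beta-2\beta\cos\omega\ge 1-\beta>0$. The only real crossing on $(0,\pi]$ is therefore $\omega=\pi$.

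\textbf{Comparison.} The paper's approach gives generality: QHM and parallel dual momentum come for free. It also explains why cascaded two-pole filters can escape off Nyquist, since phase lags add under multiplication but not under convex mixing. Yours gives explicit closed-form expressions with no auxiliary lemmas. Your final root-locus continuity check is already Step~1 of the proposition's proof, so you can simply cite it.

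\textbf{One slip.} Normalized Heavy Ball with rate $\eta$ coincides with the unnormalized recursion at rate $\tilde\eta=(1-\beta)\eta$, not $\eta/(1-\beta)$. With that correction, $\eta\lambda(1-\beta)/(1+\beta)\le 2$ becomes $\tilde\eta\lambda/(1+\beta)\le 2$ as you intended. The same correction applies to your Nesterov conversion.
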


\begin{wrapfigure}{r}{0.45\linewidth}
\centering
\vspace{-.7em}
\includegraphics[width=\linewidth]{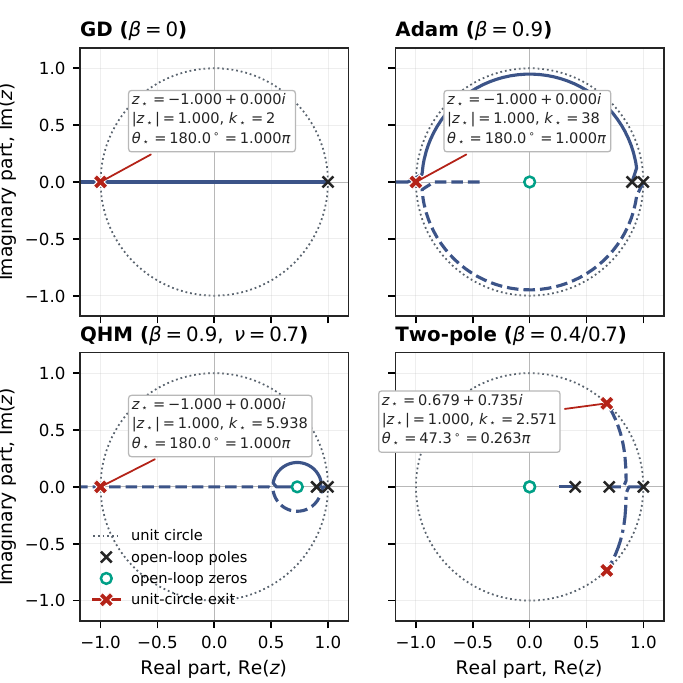}
\caption{\small
\textbf{Root locus of optimizers.}
It is possible to exit the unit circle at $z \neq -1$, i.e., $\Gamma \neq Q(-1)$.
}
\label{fig:rootlocus}
\vspace{1.7em}
\end{wrapfigure}

The detailed derivation is provided in Appendix~\ref{appx:proof:eos_prop}.
The gauge $\Gamma$ of each optimizer filter, including the cases $\Gamma \neq Q(-1)$, is derived in Appendix~\ref{appx:proof:gamma_table}.
The sketch of the proof is as follows:
Let the parameter along the $i$-th eigenvector be $x_i = \vv_i \cdot \vtheta$, then the dynamics of the parameter along this direction is given by (omit $i$ for brevity)
\begin{equation}
\label{eq:eigencomponent_dynamics}
x_{t+1} \;=\; x_t - \eta \lambda (Q_t * x_{\le t})_t.
\end{equation}
In $z$-domain, we have $X(z) = \sum_{t=0}^{\infty} x_t z^{-t}$ and the convolution becomes a multiplication:
\begin{equation}
\label{eq:eigencomponent_dynamics_frequency}
z X(z) \;=\; X(z) - \eta \lambda Q(z) X(z).
\end{equation}
Dividing both sides by $X(z)$, we obtain the \emph{characteristic transfer function} along this mode:
\begin{equation}
\label{eq:eigencomponent_transfer_function}
z - 1 + \eta \lambda Q(z) \;=\; 0.
\end{equation}
Classical control theory tells us that any linear discrete-time dynamic system must have all its roots $z$ inside the unit circle $|z| < 1$ in order to be stable~\citep{ogata1995discrete}, which gives the results.

From Proposition~\ref{prop:edge_preconditioner} and Corollary~\ref{cor:cohen_separable_momentum_coefficient}, we see that the original EoS~\citep{cohen2021gradient} and adaptive EoS~\citep{cohen2023adaptive} are special cases of this general stability condition.
The boxed bound~(\ref{eq:marginal_stability_condition}) is an \emph{available} stability limit on a frozen quadratic, which is exact in this case, as demonstrated in Appendix~\ref{appx:proof:quadratic_toy}: each eigenmode is independent, $\Gamma$ is a functional of $Q$ alone, and $\chi := \eta S \Gamma / 2 \le 1$ is necessary for no mode to be linearly unstable.
Using $\chi$ as the \emph{observed} hovering edge of a nonlinear trajectory is a different statement, which requires an additional \emph{occupation hypothesis}: over the relevant horizon the update $\vu$ keeps realizing the worst-case mode of that stability limit.
In other words, it stays in $\operatorname{span}(\vv_{\max})$ of a fixed (preconditioned) Hessian $\mP_t^{-1/2} \mH_t \mP_t^{-1/2}$ over the relevant horizon.
Still, the marginal stability condition (\ref{eq:marginal_stability_condition}) remains an exact stability limit for the frozen quadratic system on which the original theory is based.
It does not, by itself, imply that a nonlinear deep learning trajectory must hover at $\chi = 1$.
It is noteworthy that $Q$ does not play a role in the occupation hypothesis.
Section~\ref{sec:observation} tests the occupation hypothesis first on vanilla GD, where $\mP = I$, $Q \equiv 1$ and $\Gamma = 1$, so systematic mispredictions $\chi > 1$ cannot be blamed on a missing momentum or on a preconditioning artefact.

\section{Optimizer-Dependent Offsets}
\label{sec:observation}
The extended formulation of the edge of stability (\ref{eq:marginal_stability_condition}) allows us to test Cohen's formulation of EoS in various optimizers.
We train a 5-layer MLP with 200 hidden units with $\mathrm{tanh}$ activation on CIFAR-10~\citep{krizhevsky2009learning} using different full-batch optimizers and learning rates.
Specifically, we use gradient descent (GD), GD with Heavy Ball (GDM), GD with Nesterov's momentum (GDN), Adam~\citep{kingma2015adam}, AdamW~\citep{loshchilov2019decoupled}, AdaGrad~\citep{duchi2011adaptive}, RMSprop~\citep{tieleman2012lecture}, AdaFactor~\citep{shazeer2018adafactor}, NAdam~\citep{dozat2016incorporating}, PAdam~\citep{chen2020closing}, AMSGrad~\citep{reddi2018on}, GD with a quasi-hyperbolic momentum (QHM)~\citep{ma2018quasi}, and GD with a Grokfast-like two-pole filter~\citep{lee2024grokfast}.
For each optimizer, we vary the learning rates on a geometrically scaled grid of five values, depending on the optimizer family.
The results are summarized in Figure~\ref{fig:observation} and Figures~\ref{fig:eos-1-3} and~\ref{fig:eos-4-6}, and Table~\ref{tab:offsets} in Appendix~\ref{sec:experiment}.
To measure the discrepancy between the realized and theoretical edge of stability, we define the \emph{relative} edge of stability $\chi := \eta S \Gamma / 2$ as a unit-free measure.
$\chi = 1$ corresponds to theoretical exactness, and is displayed as dashed lines.
We additionally summarize the range of $\chi$ for different (optimizer, lr) in Figure~\ref{fig:observation_general}.
The range is computed as the median of $\chi$ in the interval between first entering $\chi > 0.75$ and last exiting $\chi < 0.75$ during each training trajectory.

\begin{figure}[t]
\centering
\includegraphics[width=\linewidth]{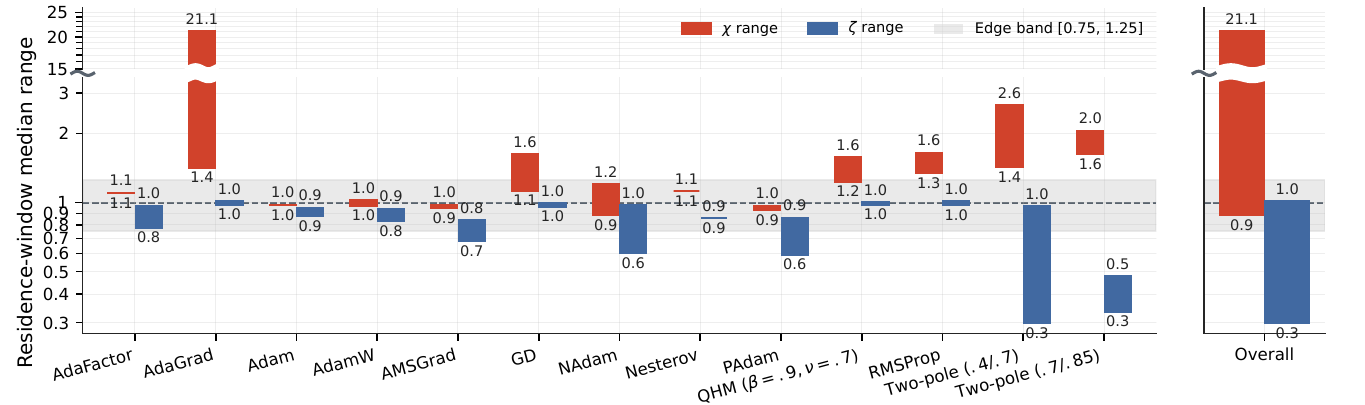}
\vspace{-1.8em}
\caption{\small
\textbf{Family-wise ranges of per-setting interval medians.}
Medians of $\chi$ taken from the interval between first $\chi>0.75$ and the final downward exit.
The previous formulation of the edge of stability $\chi \le 1$~\citep{cohen2021gradient,cohen2023adaptive} is occasionally violated by large, optimizer-dependent offsets.
Our \emph{realized} EoS $\zeta$ has its median universally upper bounded by near 1.0, which is more consistent with the name ``edge'' of stability.
}
\label{fig:observation_general}
\vspace{-1em}
\end{figure}

We observe that the deviation of $\chi$ varies in a very large range $\chi \in [0.79, 21.07]$ in a systematic manner.
Different optimizer families exhibit different patterns of deviation, and the offset generally increases with $\eta$, as observed in Figure~\ref{fig:observation} and Figures~\ref{fig:eos-1-3} and~\ref{fig:eos-4-6} in Appendix~\ref{sec:experiment}.
Offsets lower than 1.0 are easy to interpret: either the trajectory remains strictly within the stable region, or the optimizer is not able to drive the trajectory to the edge of stability.
Those with offsets greater than 1.0 are more interesting.
Complex filters, such as QHM and two-pole, raise $\chi$ up to $1.576$ and $2.646$, respectively.
At one extreme, AdaGrad severely violates the $\chi \le 1$ bound to reach $\chi = 21.070$.

Another observation that draws our attention is that even the vanilla full-batch GD violates our expectation to stay near $\chi = 1$: Table~\ref{tab:offsets} reports in-window median $\chi$ of $0.904$, $1.130$, $1.246$, $1.478$, $1.624$ across the five learning rates.
Its four of five cases sit above $\chi = 1$ bound, and the observed edge systematically increases with $\eta$.
Since GD uses $\mP = I$ and $Q \equiv 1$, hence $\Gamma = 1$ and $\chi = \eta S / 2$, this offset cannot be a missing $\Gamma(\beta)$.
By contrast, Heavy Ball, Adam, and AdamW stay inside the $\chi \le 1$ bound, with global median $0.949$, $0.960$, and $0.971$, respectively.
The momentum-induced gauge $\Gamma$ is what those stateful methods relieves from the violation of the edge of stability by GD.

The results not only imply a stabilization mechanism~\citep{damian2023self} that keeps the trajectory stable even outside the guaranteed region of stability by consistently reducing the sharpness, but also suggest that \emph{the attractor of $\chi$ itself lies outside the theoretically predicted stable margin}.
These optimizer-dependent offsets raise a significant concern that the current formulation of the edge of stability is incomplete, especially in how it identifies the empirical edge with $\lambda_{\max}$ rather than with the curvature along the update that is actually taken.

\section{Realized Edge of Stability}
\label{sec:theory}
The previous observation calls for an update to our formulation of the edge of stability to account for the \emph{actual} training dynamics that are representative of the true stability threshold.
Previous works~\citep{lee2023ias,mishkin2024directional,islamov2026noneuclidean} give us a hint to this problem: the worst-case eigenmode $\operatorname{span}(\vv_{\max})$ that defined the boundary of the region of stability in (\ref{eq:marginal_stability_condition}) may not actually be taken as update directions.
This idea encourages us to examine the actual update direction $\vu$ that the optimizer executes during training.
We go back to the generalized dynamical system formulation in equation~(\ref{eq:gradient_filter}), and ask:
\begin{center}
\setlength{\fboxsep}{8pt}
\colorbox{lightgray!15}{\parbox{0.92\linewidth}{
Given the update direction $\vu$ that the optimizer $(\mP, Q)$ actually takes, where is the local quadratic stability boundary along this particular ray?
}}
\end{center}
To answer this question, fix the iteration index $t$ and consider a ray in the parameter space along the update direction $\vu$ from the current parameter $\vtheta$: 
\begin{equation}
\label{eq:ray_parameterization}
\vtheta(s) \;:=\; \vtheta - s \vu, \quad 0 \le s \le 1.
\end{equation}
Then the Taylor expansion of the objective function along this ray is given by:
\begin{equation}
\label{eq:ray_quadratic_model}
\gL(\vtheta(s)) \;=\; \gL(\vtheta) - s \vg^\top \vu + \frac{s^2}{2} \vu^\top \nabla_{\vtheta}^2 \gL(\vtheta - \xi s \vu) \vu, \quad \text{for some } \xi \in [0, 1].
\end{equation}
The term $\bar \mH_{\eta,\vu} := \nabla_{\vtheta}^2 \gL(\vtheta - \xi s \vu)$ is the weighted secant Hessian of the objective function along the ray that can be obtained from the current Hessian:
\begin{equation}
\label{eq:weighted_secant_hessian}
\bar{\mH}_{\eta,\vu} \;:=\; 2 \int_{0}^{1} (1-s) \nabla_{\vtheta}^2 \gL(\vtheta - s \eta \vu) d s.
\end{equation}
Define the \emph{alignment score} $A := \vg^\top \vu$ and the \emph{curvature load} $B := \eta \vu^\top \bar{\mH}_{\eta,\vu} \vu / 2$.
Then
\begin{equation}
\label{eq:ray_quadratic_model_expanded}
\gL(\vtheta(s)) \;=\; \gL(\vtheta) - s A + \frac{s^2}{\eta} B.
\end{equation}
Assuming net-zero improvement $\gL(\vtheta(s)) = \gL(\vtheta)$, we have a quadratic equation in $s$, which has a nontrivial solution $s_* = \eta A / B$.
The actual step size $\eta$ should be smaller than $s_*$ to ensure positive improvement.
This yields the local quadratic stability boundary with respect to the update direction $\vu$ deliberately chosen by the optimizer, which we call the \textbf{realized edge of stability} $\zeta$:
\begin{equation}\tag{{\color{black!30}$\diamondsuit$}}
\label{eq:ray_stability_boundary}
\renewcommand{\boxed}[1]{\begingroup\setlength{\fboxsep}{4pt}\colorbox{lightgray!15}{\hspace*{.5em}\ensuremath{#1}\hspace*{.5em}}\endgroup}
\boxed{
  \displaystyle
  \zeta \;:=\; \frac{\eta}{s_*} \;=\; \frac{B}{A} \;=\; \frac{\eta}{2} \frac{\vu^\top \bar{\mH}_{\eta,\vu} \vu}{\vg^\top \vu} \le 1.
}
\end{equation}
It is helpful to notice that, by the chain rule, the alignment score is \emph{exactly} the learning rate-compensated loss drop rate for the full-batch update~\citep{lee2025greedy}:
\begin{equation}
\label{eq:alignment_power_chain_rule}
A \;=\; \vg^\top \vu \;=\; \nabla_{\vtheta} \gL^\top \left( - \frac{\mathrm d\vtheta}{\eta \mathrm dt} \right) \;=\; -\frac{1}{\eta} \frac{\mathrm d \gL}{\mathrm dt} \;=\; - \frac{\mathrm d \gL}{\mathrm ds}.
\end{equation}
In other words, the alignment score $A$ measures the actual \emph{learning work} done by the optimizer to decrease the loss along the update direction $\vu$, with a time granularity (sampling interval) $\eta \mathrm dt = \mathrm ds$.
Similarly, the curvature load $B$ measures the reduction in the loss drop rate due to the curvature of the loss landscape along the update direction $\vu$:
\begin{equation}
\label{eq:curvature_load_chain_rule}
B \;=\; \frac{\eta}{2} \vu^\top \bar{\mH}_{\eta,\vu} \vu \;=\; A + \frac{\gL(\vtheta - \eta \vu) - \gL(\vtheta)}{\eta} \;=\; A + \frac{\Delta \gL}{\eta} \;=\; - \frac{\mathrm d \gL}{\mathrm ds} - \left(- \frac{\Delta \gL}{\Delta s} \right),
\end{equation}
where $\Delta s = \eta \Delta t = \eta$ (since $\Delta t = 1$ for the iteration indices).
Borrowing a physics analogy, if we casually refer to the loss drop rate $A$ as the \emph{learning power}, we can call the curvature load $B$ the \emph{learning power dissipation} by the loss landscape along the update direction $\vu$.
Therefore, $\zeta$ is a unit-free ratio of the learning power dissipation to the learning power output of the optimizer $(\mP, Q)$.
In this sense, we may call the realized EoS $\zeta$ the \textbf{learning impedance} of the task.

To better understand the relationship between the realized $\zeta$ and the relative edge of stability $\chi$, let us define a few more quantities and decompose the realized EoS $\zeta$ into factors involving $\chi$.
The directional smoothness from previous works~\citep{lee2023ias,mishkin2024directional,islamov2026noneuclidean}, $D^{\|\cdot\|}$, is a normalized curvature load, written in our notation as $D^{\|\cdot\|_{\mP}} := 2B / (\eta \|\vu\|_{\mP}^2)$.
Recall that the maximum sharpness $S$ in (\ref{eq:marginal_stability_condition}) is defined as $S = \lambda_{\max}(\mP^{-1/2} \mH \mP^{-1/2}) = \max_{\vu \neq 0} \vu^\top \mH \vu / \|\vu\|_{\mP}^2$.
If the Hessian $\mH$ remains almost constant along the ray, then $\bar \mH_{\eta,\vu} \simeq \mH$ implies $S \simeq \max_{\vu \neq 0} D^{\|\cdot\|_{\mP}}$.
Hence, $D^{\|\cdot\|_{\mP}} \le S$.
Therefore, we can define a \emph{spatial participation factor} $\varsigma := D^{\|\cdot\|_{\mP}} / S$, a unit-free measure of how much of the maximally available sharpness is actuated by the curvature along the update direction $\vu$ taken by the optimizer.
Its value becomes 1 when we take the worst-case direction $\vu = \operatorname{span}(\vv_{\max})$, and is lower when we take other directions.

Also, recall that $\Gamma$ in (\ref{eq:marginal_stability_condition}) is the gain of the gradient filter $Q$ at the escape frequency of its root locus.
In other words, at this escape frequency, the optimizer applies the update $\vu = \mP^{-1} (\Gamma \vg)$.
Equivalently, $\vg = \Gamma^{-1} \mP \vu$.
If the update realizes the escape mode of Proposition~\ref{prop:edge_preconditioner}, then $\vu = \mP^{-1}(\Gamma \vg)$ and the learning power takes the form $A = (\Gamma^{-1} \mP \vu)^\top \vu = \|\vu\|_{\mP}^2 / \Gamma$.
Therefore, we can define a \emph{temporal calibration factor} $\tau := A / (\|\vu\|_{\mP}^2 / \Gamma) = \Gamma A / \|\vu\|_{\mP}^2$, a unit-free measure of how closely the actual learning power agrees with the power predicted by that escape mode.
Its value is 1 when the filtered stream is in phase with $\Gamma \vg$ in the $\mP$-inner product, and is identically $1$ for vanilla GD ($\Gamma = 1$, $\vu = \vg$).

Using these factors, we can now \emph{exactly} decompose the realized EoS $\zeta$ into the following factors:
\begin{equation}\tag{{\color{black}$\heartsuit$}}
\renewcommand{\boxed}[1]{\begingroup\setlength{\fboxsep}{4pt}\colorbox{lightgray!15}{\hspace*{.5em}\ensuremath{#1}\hspace*{.5em}}\endgroup}
\label{eq:optimizer_aware_eos_factor}
\boxed{
\displaystyle
\zeta
\;=\; \frac{\eta S \Gamma}{2} \cdot \frac{\vu^\top \bar{\mH}_{\eta,\vu} \vu}{S \|\vu\|_{\mP}^2} \cdot \frac{\|\vu\|_{\mP}^2}{\Gamma \vg^\top \vu}
\;=\; \chi \frac{\varsigma}{\tau}
\;=\; (\text{relative EoS}) {\times} \frac{(\text{spatial participation})}{(\text{temporal calibration})},
}
\end{equation}
Therefore $\zeta = \chi$ if and only if $\varsigma = \tau$.
The factors $\varsigma$ and $\tau$ account for the spatial and temporal discrepancies between the actual update and the binding mode of the available certificate, and their equivalence is not trivially guaranteed.
Refer to Corollary~\ref{cor:coincidence} and Appendix~\ref{appx:proof:coincidence} for the exact condition and the GD reduction $\tau \equiv 1$.

\section{Coincidence and Nonseparability}
\label{sec:discussion}
We now delve into the implications of the new formulation (\ref{eq:optimizer_aware_eos_factor}) in more detail.
We are interested in the conditions under which the two values $\zeta$ and $\chi$ are equal and when they become different.
The following corollary is the exact content of~(\ref{eq:optimizer_aware_eos_factor}).

\begin{corollary}[Coincidence]
\label{cor:coincidence}
\textnormal{[\texttt{\hyperref[appx:proof:coincidence]{proof}}]}
Fix an iteration and assume $\eta, S, \Gamma > 0$, $\vu \neq \vzero$, and $A = \vg^\top \vu \neq 0$.
Then~(\ref{eq:optimizer_aware_eos_factor}) is an identity, and
\begin{equation}
\label{eq:coincidence_condition}
\zeta \;=\; \chi
\quad\Longleftrightarrow\quad
\varsigma \;=\; \tau
\quad\Longleftrightarrow\quad
\vu^\top \bar{\mH}_{\eta, \vu} \vu \;=\; S \Gamma \, \vg^\top \vu.
\end{equation}
The separate conditions $\varsigma = 1$ and $\tau = 1$ are sufficient for coincidence and not necessary: they impose two scalar constraints where~(\ref{eq:coincidence_condition}) imposes one.
In particular, vanilla GD ($\mP = I$, $Q \equiv 1$) has $\Gamma = 1$ and $\tau \equiv 1$ at every step, so $\zeta = \chi \varsigma$ and coincidence reduces to occupation $\varsigma = 1$.
At the realized edge $\zeta = 1$ one must have $\chi = \tau / \varsigma$.
\end{corollary}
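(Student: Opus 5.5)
The argument is pure algebra on the definitions in Section~\ref{sec:theory}, and I would carry it out in four steps.

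\textbf{Step 1: the identity.} I first check that (\ref{eq:optimizer_aware_eos_factor}) is an identity under the stated hypotheses. The three factors are
\[
\chi=\eta S\Gamma/2,\qquad
\varsigma=D^{\|\cdot\|_{\mP}}/S=\vu^\top\bar{\mH}_{\eta,\vu}\vu/(S\|\vu\|_{\mP}^2),\qquad
\tau=\Gamma A/\|\vu\|_{\mP}^2 .
\]
Every denominator that appears is nonzero. We have $S,\Gamma,\eta>0$ by assumption. Since $\mP$ is positive definite (it is a preconditioner) and $\vu\neq\vzero$, we get $\|\vu\|_{\mP}^2>0$. Finally $A\neq 0$ by assumption, so $\tau\neq0$. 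Multiplying the three factors, $S$, $\Gamma$ and $\|\vu\|_{\mP}^2$ cancel and we obtain $\frac{\eta}{2}\vu^\top\bar{\mH}_{\eta,\vu}\vu/(\vg^\top\vu)=B/A=\zeta$, which is the definition in (\ref{eq:ray_stability_boundary}).

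\textbf{Step 2: the equivalences (\ref{eq:coincidence_condition}).} From $\zeta=\chi\varsigma/\tau$ with $\chi>0$, the equation $\zeta=\chi$ holds iff $\varsigma/\tau=1$, i.e.\ iff $\varsigma=\tau$. Substituting the definitions into $\varsigma=\tau$ and multiplying by the nonzero quantity $S\|\vu\|_{\mP}^2$ gives $\vu^\top\bar{\mH}_{\eta,\vu}\vu=S\Gamma\,\vg^\top\vu$. Every step is reversible because the multipliers are nonzero.

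\textbf{Step 3: sufficiency but not necessity of $\varsigma=1$ and $\tau=1$.} Sufficiency is immediate: if both equal $1$, then $\varsigma=\tau$. For non-necessity I would give a simple witness, taking vanilla GD on a quadratic $\gL(\vtheta)=\tfrac12\vtheta^\top\mH\vtheta$. Choose $\mH=\mathrm{diag}(2,1)$ and $\vtheta$ such that $\vg=\vu$ is not an eigenvector. Then $\tau=1$ and $\varsigma<1$, so coincidence fails.

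Non-necessity needs a point where $\varsigma=\tau\neq1$, so the witness must have $\Gamma\neq1$ or non-GD geometry. The simplest construction is to prescribe $\vu$ and $\vg$ directly, since the statement is pointwise at a fixed iteration. Take $\mP=\mI$, $\mH=\mathrm{diag}(2,1)$ (so $S=2$), $\Gamma=1$, and $\vu=\ve_2$, which gives $\varsigma=1/2$. Then choose $\vg$ with $\vg^\top\vu=1/2$, for instance $\vg=\tfrac12\ve_2+\ve_1$, so that $\tau=1/2$. Such a $(\vg,\vu)$ pair is realizable by a stateful filter whose state differs from the current gradient, for example a momentum buffer. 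I will just remark that the corollary is a pointwise statement about the scalars. I would also state the counting argument: one scalar equation versus two.

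\textbf{Step 4: GD and the realized edge.} For GD, $\mP=\mI$ and $Q\equiv1$, so $Q(z)=1$ and the characteristic equation is $z-1+\eta\lambda=0$. By Proposition~\ref{prop:edge_preconditioner} the crossing is at $z=-1$ with $k_\star=2$, hence $\Gamma=1$. Since $\vu=\vg$, we get $\tau=\|\vg\|^2/\|\vg\|^2=1$ and therefore $\zeta=\chi\varsigma$. Finally, setting $\zeta=1$ in the identity gives $\chi=\tau/\varsigma$, which requires $\varsigma\neq0$. This holds automatically because $\zeta=1\neq0$ forces $\varsigma\neq0$.

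The only mildly delicate point is Step 3, which needs a concrete non-necessity example with $\varsigma=\tau\neq1$. I would construct it by direct choice of $(\vg,\vu,\mH)$ as above.
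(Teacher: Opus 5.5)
Your Steps 1, 2 and 4 follow the paper's argument. You cancel $S$, $\Gamma$, $\|\vu\|_{\mP}^2$ to get the identity, clear the nonzero factor $S\|\vu\|_{\mP}^2$ to get the primitive condition, and use $\Gamma = Q(-1) = 1$, $\vu = \vg$ for GD. Your remark that $\zeta = 1$ forces $\varsigma \neq 0$, so that $\chi = \tau/\varsigma$ is well defined, is a small piece of care the paper leaves implicit.

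The genuine difference is the non-necessity witness in Step 3. You prescribe $(\vg,\vu,\mH)$ at a single point: $\mH=\operatorname{diag}(2,1)$, $\vu=\ve_2$, $\vg=\ve_1+\tfrac12\ve_2$, $\Gamma=1$. This gives $\varsigma=\tau=\tfrac12$, which is correct. It suffices if the corollary is read purely as a statement about the scalars, which its hypotheses (``fix an iteration and assume\ldots'') permit. The paper instead runs dc-normalized Heavy Ball ($\beta=\tfrac12$, $\eta=\tfrac1{10}$, zero buffer) on $\operatorname{diag}(4,1)$. It evaluates $\varsigma-\tau$ at $t=1$ as a function of the initial gradient direction $\phi$, which changes sign on $[0,\pi/2]$. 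The intermediate value theorem then gives $\varsigma=\tau\approx 0.430\neq 1$. What that route buys is a witness generated by the update rule itself, not merely an algebraically consistent tuple.

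Your aside that your pair is ``realizable by a momentum buffer'' is unsupported and, as written, doubtful. The standard momentum filters have $\Gamma\neq 1$: by Lemma~\ref{lem:appx_nyquist_mixture}, the only dc-normalized one-pole mixture with $\Gamma=1$ is GD, and GD forces $\vu=\vg$. Moreover, on a diagonal quadratic with a zero-initialized linear filter the coordinates decouple. Then $\vu_t$ and $\vg_t$ are coordinatewise multiples of $\vg_0$, so $u^{(1)}_t=0$ with $g^{(1)}_t\neq 0$ would need a fine-tuned zero crossing.

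You have two clean fixes. Either drop the claim and state the pointwise reading explicitly (the value of $\Gamma$ is inessential; just choose $\vg$ with $\Gamma\,\vg^\top\vu=\tfrac12$), or adopt the paper's construction. Also delete the first ``witness'' in Step 3 (GD with $\vg$ not an eigenvector). It only shows that coincidence can fail, which says nothing about non-necessity.
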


The proof is the cancellation in Appendix~\ref{appx:proof:coincidence}.
In plain language, the original edge $\chi$ equals the realized edge $\zeta$ exactly when spatial participation $\varsigma$ matches temporal calibration $\tau$.
For GD, since its $\tau$ is fixed to unity, the systematic violation of $\chi > 1$ immediately translates to $\varsigma < 1$.
We observe that Adam, AdamW, and Heavy Ball in Section~\ref{sec:observation} have both $\chi$ and $\zeta$ near $1$.
This translates to $\varsigma \approx \tau$.

The following corollary isolates the one special case in which coincidence is structural rather than accidental, namely $\varsigma = \tau = 1$ rather than a mere cancellation $\varsigma = \tau$.

\begin{corollary}[Coincidence on a mode-locked quadratic system]
\label{cor:mode_locked_coincidence}
\textnormal{[\texttt{\hyperref[appx:proof:mode_locked]{proof}}]}
Assume the following over the memory horizon of the filter $Q$.
(Q1) The loss is quadratic over the memory horizon: $\vg_s = \mH(\vtheta_s - \vtheta_\star)$ and therefore $\bar{\mH}_{\eta, \vu} = \mH$.
(Q2) The preconditioner is frozen: $\mP_s \equiv \mP \succ 0$.
(Q3) The trajectory is locked to the sharpest preconditioned eigenmode: $\vtheta_s - \vtheta_\star = x_s \mP^{-1/2} \vv_{\max}$ for scalars $x_s$, where $x_t \neq 0$.
(Q4) The root locus of $Q$ escapes at the Nyquist frequency: $\Gamma = Q(-1)$, and the locked mode is in period-2 oscillation, $x_{t-k} = (-1)^k x_t$.
Then the alignment score is automatically nondegenerate, $A = \Gamma S^2 x_t^2 > 0$, and $\varsigma = \tau = 1$, and therefore the realized EoS $\zeta$ is exactly the relative edge of stability $\chi$: $\zeta = \chi = \eta S Q(-1)/2$.
Dropping the period-2 part of (Q4), even while keeping $\Gamma = Q(-1)$, preserves $\varsigma = 1$ but gives
\begin{equation}
\label{eq:mode_locked_tau}
\tau \;=\; \frac{\Gamma x_t}{(Q * x_{\le t})_t},
\end{equation}
which equals 1 only when the filtered stream is in phase with the gradient at the escape gain.
\end{corollary}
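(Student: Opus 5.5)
The plan is a direct computation in the locked eigen-coordinate. Let $\vv := \vv_{\max}$ be the unit-norm top eigenvector of $\mP^{-1/2}\mH\mP^{-1/2}$ with eigenvalue $S$. Under (Q1)–(Q3), each gradient in the memory horizon is
$$\vg_s = \mH\mP^{-1/2}\vv\, x_s.$$
Since $\mP^{-1/2}\mH\mP^{-1/2}\vv = S\vv$, we have $\mH\mP^{-1/2}\vv = S\,\mP^{1/2}\vv$, so $\vg_s = S x_s \mP^{1/2}\vv$. The preconditioner is frozen by (Q2) and the filter is linear. Writing $y_t := (Q * x_{\le t})_t$ for the scalar filtered stream, the update is therefore
$$\vu_t = \mP^{-1}(Q*\vg_{\le t})_t = S\, y_t\, \mP^{-1/2}\vv.$$
Everything then reduces to scalars in $x_t$ and $y_t$.

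Next I compute the three ingredients of (\ref{eq:optimizer_aware_eos_factor}). By (Q1), $\bar{\mH}_{\eta,\vu}=\mH$. Then $\|\vu\|_{\mP}^2 = S^2 y_t^2$ and $\vu^\top\mH\vu = S^2y_t^2\,\vv^\top\mP^{-1/2}\mH\mP^{-1/2}\vv = S^3 y_t^2$. This gives $\varsigma = \vu^\top\mH\vu/(S\|\vu\|_{\mP}^2)=1$, independently of any temporal assumption. The alignment score is $A = \vg_t^\top\vu_t = S^2 x_t y_t$. Hence
$$\tau = \frac{\Gamma A}{\|\vu\|_{\mP}^2} = \frac{\Gamma x_t}{y_t},$$
which is exactly (\ref{eq:mode_locked_tau}). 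This uses only $\Gamma = Q(-1)$ from (Q4) and not the period-2 clause, which proves the last part of the statement.

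For the main claim I impose $x_{t-k}=(-1)^k x_t$. Then $y_t=\sum_k Q_k(-1)^k x_t = Q(-1)x_t$, evaluating $Q(z)=\sum_k Q_kz^{-k}$ at $z=-1$. So $y_t=\Gamma x_t$, and therefore $\tau=1$ and $A=\Gamma S^2x_t^2$. Positivity of $A$ follows from $x_t\neq0$, $S>0$ and $\Gamma=Q(-1)>0$; the latter is the Nyquist-escape case of Proposition~\ref{prop:edge_preconditioner}, and I would cite that proposition for it. With $A\neq0$ and $\vu\neq\vzero$ (which holds since $y_t\neq0$), the hypotheses of Corollary~\ref{cor:coincidence} hold. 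Since $\varsigma=\tau=1$, that corollary yields $\zeta=\chi=\eta S Q(-1)/2$.

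The step I expect to need the most care is not algebraic but the bookkeeping of the filter's memory. The identity $y_t = Q(-1)x_t$ requires the period-2 pattern and the locking to hold over the entire support of $Q$. For IIR filters such as Heavy Ball this support is infinite, so I would read ``memory horizon'' as the full history up to $t$ and need absolute convergence of $\sum_k Q_k(-1)^k$. I would first argue this from the stability (decay) of $Q$'s poles implied by the dc-normalization $Q(1)=1$ being finite. I would also take care, in the period-2-free case, that $y_t\neq0$, so that $\tau$ is well defined.
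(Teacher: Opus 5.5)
Your proposal is correct and follows the paper's proof essentially step for step. You reduce $\vg_s$ and $\vu_t$ to the locked direction via $\mH\mP^{-1/2}\vv = S\mP^{1/2}\vv$, compute $A$, $\|\vu_t\|_{\mP}^2$ and $\vu_t^\top\mH\vu_t$ to obtain $\varsigma=1$ and $\tau=\Gamma x_t/(Q*x_{\le t})_t$, and then use the period-2 history to get $(Q*x_{\le t})_t=Q(-1)x_t$.

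One small correction to your side remark: absolute convergence of $\sum_k Q_k(-1)^k$ should be drawn from hypothesis (H1) in the proof of Proposition~\ref{prop:edge_preconditioner}, namely that all poles of $Q$ lie strictly inside the unit disc. Finiteness of $Q(1)$ is not enough, since a rational filter with a pole outside the disc, such as $-1/(1-2z^{-1})$, also has $Q(1)=1$.
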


The proof is provided in Appendix~\ref{appx:proof:mode_locked}.
Corollary~\ref{cor:mode_locked_coincidence} reveals assumptions that are used to derive the original boundary $\chi \le 1$ in previous works~\citep{cohen2021gradient,cohen2023adaptive}.
In this worst-mode-locked quadratic regime, our realized EoS $\zeta \le 1$ of (\ref{eq:ray_stability_boundary}) is exactly the original EoS $\chi \le 1$ of (\ref{eq:marginal_stability_condition}).
Our formulation generalizes the previous works in this precise sense.

Each of the assumptions~(Q1)--(Q4) fails in an identifiable way, and the observations of Section~\ref{sec:observation} split across those failure types.
The decomposition of (\ref{eq:optimizer_aware_eos_factor}), therefore, not only serves as an evidence that $\zeta$ generalizes $\chi$, but also a useful diagnostic toolkit for analyzing the edge of stability:
(Q1) fails whenever the third-order term along the ray is non-negligible, so that $\bar{\mH}_{\eta,\vu} \neq \mH$;
(Q2) fails for adaptive preconditioners, whose $\mP_t$ drifts over the memory horizon of $Q$;
(Q3) fails whenever the update actuates a non-maximal direction, \emph{including} vanilla GD, where $\vu = \vg$;
and (Q4) fails for any filter whose root locus escapes away from $z = -1$, or whose locked mode is not in period-2 oscillation.
We can categorize the observation results in Section~\ref{sec:observation} into the following cases: GD is~(Q3) with $\tau \equiv 1$ ($\chi = 1.124 > 1$, $\zeta = 0.983 \approx 1$);
Adam, AdamW, and Heavy Ball are near coincidence after $\Gamma$-scaling ($\chi$ at $0.960$, $0.971$, and $0.949$ against $\zeta$ at $0.964$, $0.919$, and $0.983$, respectively);
AdaGrad is~(Q2)--(Q3) ($\chi = 21.070 \gg 1$, yet $\zeta = 1.002 \approx 1$);
QHM mixes a moderate $\chi$ offset with $\zeta = 1.000$;
and the two-pole cascades are~(Q4), with $\chi = 2.646 > 1$ and no $\zeta$ reaches the edge.
This identifies the actual causes of the offsets as elaborated in Section~\ref{sec:extension}.

We now record what Corollary~\ref{cor:coincidence} implies for the available stability limit: $\chi$ determines $\zeta$ only through the ratio $\varsigma / \tau$, which the gauges $(S, \Gamma)$ do not control.

\begin{theorem}[Insufficiency of $\chi$ for the realized edge]
\label{thm:nonseparability}
\textnormal{[\texttt{\hyperref[appx:proof:nonseparability]{proof}}]}
Assume $\eta, S, \Gamma > 0$, $\vu \neq \vzero$, and $A \neq 0$.
By Corollary~\ref{cor:coincidence}, $\zeta = \chi$ if and only if $\varsigma = \tau$.
Consequently $(\eta, S, \Gamma)$ determine $\zeta$ if and only if they determine the ratio $\varsigma / \tau$.
For vanilla GD ($\mP = I$, $Q \equiv 1$) one has $\tau \equiv 1$ and $\zeta = \chi \varsigma$, so $\chi$ determines $\zeta$ if and only if occupation $\varsigma$ is a function of $(\eta, S)$ alone.
This already fails on any frozen quadratic in dimension at least $2$ whose gradient is not an eigen-direction of $\mH$: then $\varsigma$ is the Rayleigh ratio $\vg^\top \mH \vg / (S \|\vg\|^2)$ and is not determined by $S = \lambda_{\max}(\mH)$.
\end{theorem}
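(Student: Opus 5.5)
The plan is to read everything off the identity (\ref{eq:optimizer_aware_eos_factor}), which Corollary~\ref{cor:coincidence} states holds exactly under the standing assumptions $\eta, S, \Gamma > 0$, $\vu \neq \vzero$, $A \neq 0$. Those assumptions make every denominator in $\varsigma$, $\tau$ and $\zeta$ well defined and nonzero. One small point: $\|\vu\|_{\mP}^2 > 0$ requires $\mP \succ 0$, which I take as implicit.

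The first two claims then follow directly. The equivalence $\zeta = \chi \iff \varsigma = \tau$ is Corollary~\ref{cor:coincidence}. For the ``consequently'' part, I write $\zeta = \chi \cdot (\varsigma/\tau)$, where $\chi = \eta S\Gamma/2$ is a function of $(\eta, S, \Gamma)$ and is strictly positive.
\begin{itemize}
\item If $\varsigma/\tau = f(\eta, S, \Gamma)$, then $\zeta = \chi f$ is determined by $(\eta, S, \Gamma)$.
\item Conversely, if $\zeta = g(\eta, S, \Gamma)$, then $\varsigma/\tau = g/\chi$, which is legitimate because $\chi > 0$.
\end{itemize}
Here ``determine'' means functional dependence over the class of configurations $(\mH, \mP, Q, \vtheta, \eta)$ under consideration.

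For vanilla GD, $\mP = \mI$ and $Q \equiv 1$ give $\Gamma = 1$ and $\vu = \vg$. Then $\tau = \Gamma A/\|\vu\|^2 = \vg^\top\vg/\|\vg\|^2 = 1$, so $\zeta = \chi\varsigma$. Applying the same positive-factor argument with $\chi = \eta S/2$ shows that $\chi$ determines $\zeta$ if and only if $\varsigma$ is a function of $(\eta, S)$.

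For the counterexample, I work on a frozen quadratic $\gL(\vtheta) = \frac12(\vtheta - \vtheta_\star)^\top\mH(\vtheta - \vtheta_\star)$ with $\mH$ symmetric positive semidefinite and nonzero, in dimension $d \ge 2$. Then $\bar{\mH}_{\eta,\vu} = \mH$ by (\ref{eq:weighted_secant_hessian}), since $\int_0^1 2(1-s)\,ds = 1$. Hence $\varsigma = \vg^\top\mH\vg/(S\|\vg\|^2)$, the Rayleigh ratio divided by $\lambda_{\max}$.

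I then fix $\eta$ and $\mH$, say $\mH = \mathrm{diag}(\lambda_1, \lambda_2, \dots)$ with $\lambda_1 > \lambda_2$ and the remaining eigenvalues at most $\lambda_2$, so that $S = \lambda_1$ is fixed. I vary the starting point $\vtheta$:
\begin{itemize}
\item If $\vtheta - \vtheta_\star \propto \ve_1$, then $\vg \propto \ve_1$ and $\varsigma = 1$.
\item If $\vtheta - \vtheta_\star = \ve_1 + c\,\ve_2$ with $\lambda_2 > 0$ and $c \neq 0$, then $\vg = \lambda_1\ve_1 + c\lambda_2\ve_2$, which is not an eigen-direction. In that case
\[
\varsigma = \frac{\lambda_1^3 + c^2\lambda_2^3}{\lambda_1(\lambda_1^2 + c^2\lambda_2^2)} < 1,
\]
and $\varsigma$ varies continuously with $c$.
\end{itemize}
So $(\eta, S)$ are held fixed while $\varsigma$, and hence $\zeta$, takes different values. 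This shows $\varsigma$ is not a function of $(\eta, S)$.

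The statement as written (``any frozen quadratic whose gradient is not an eigen-direction'') is best read as saying that on such a quadratic $\varsigma$ is strictly below 1 and is not pinned down by $S$. I would prove the strict inequality from the eigen-expansion $\vg^\top\mH\vg = \sum_i \lambda_i g_i^2 \le \lambda_{\max}\|\vg\|^2$. Equality holds only if $\vg$ lies in the top eigenspace, which makes $\vg$ an eigenvector and is excluded. Comparing this with the eigen-direction case $\varsigma = 1$ at the same $S$ then exhibits non-determination.

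The main obstacle is stating precisely what ``determined by'' means. I fix it as functional dependence across the admissible class, so the counterexample only needs two admissible configurations with equal $(\eta, S)$ and different $\zeta$. I also need the degenerate case where $\mH$ has only one nonzero eigenvalue handled by choosing $\lambda_2 > 0$, so that $A = \vg^\top\vg \neq 0$ and all the assumptions hold.
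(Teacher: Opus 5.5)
Your proposal is correct and follows the paper's proof essentially step for step. You read $\zeta = \chi\,\varsigma/\tau$ off the identity of Corollary~\ref{cor:coincidence}, with your explicit use of $\chi > 0$ for the converse being a welcome detail; you use $\tau \equiv 1$ for GD; and you exhibit two gradients on a fixed diagonal quadratic with equal $S$ but different Rayleigh ratios. The only difference is cosmetic: the paper contrasts $\ve_1$ with $\ve_2$ on $\operatorname{diag}(4,1)$ ($\varsigma = 1$ versus $1/4$), whereas your $\ve_1 + c\,\ve_2$ and strict-inequality argument match the theorem's ``not an eigen-direction'' hypothesis more literally, since $\ve_2$ is itself an eigenvector.
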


\begin{figure}[t]
\centering
\begin{minipage}{\linewidth}
\centering
\includegraphics[width=\linewidth]{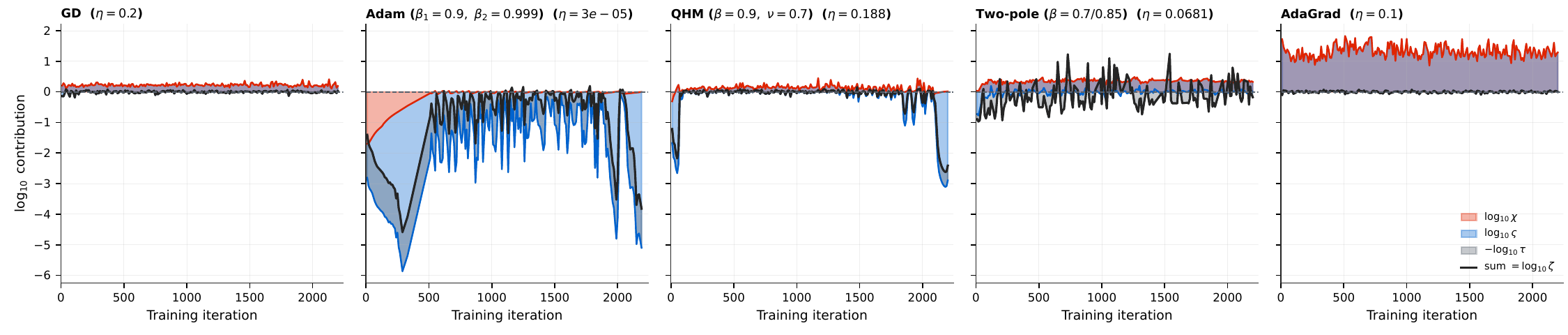}
\vspace{-2.2em}
\caption*{%
\small
    {(a)}\, Decomposition of the realized edge of stability $\zeta$ from a single experiment.
}
\end{minipage}
\begin{minipage}{\linewidth}
\centering
\includegraphics[width=\linewidth]{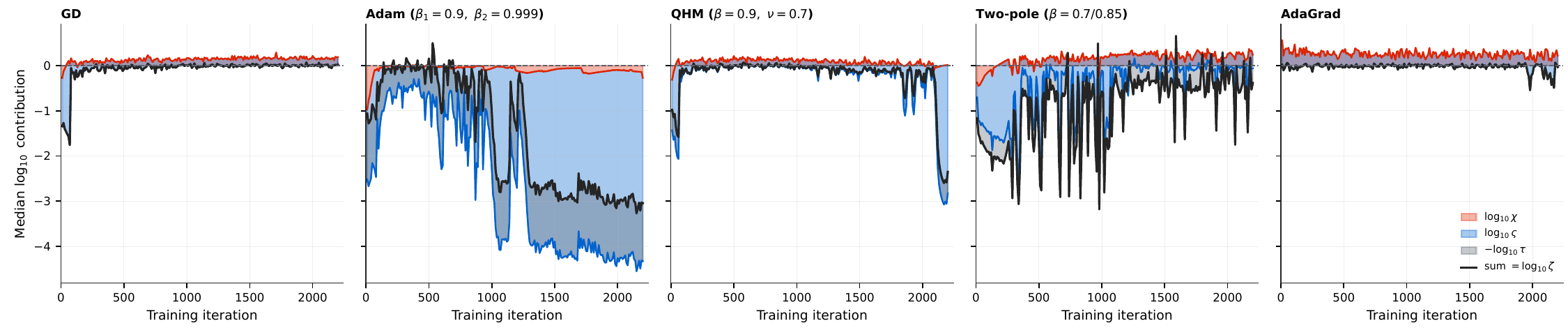}
\vspace{-2.2em}
\caption*{%
\small
    {(b)}\, Decomposition of optimizer-family-wise contributions to $\zeta$, median from an LR sweep of size 5.
}
\end{minipage}

\vspace{-.7em}
\caption{%
\small
\textbf{EoS stack plot.}
Decomposition of the realized EoS $\zeta$ into factors in log-scale, involving the (relative) edge of stability $\chi$ ({\color{red!50} red} area and line), the spatial participation factor $\varsigma$ ({\color{blue!50} blue} area and line), and the temporal calibration factor $\tau$ ({\color{white!30!black} grey} overlay and black line).
Note that $\log \tau$ operates in the opposite direction by subtraction.
}
\label{fig:decomposition_of_learning_impedance}
\vspace{-1em}
\end{figure}

The proof of Theorem~\ref{thm:nonseparability} is the identity of Corollary~\ref{cor:coincidence} together with the GD reduction $\tau \equiv 1$, detailed in Appendix~\ref{appx:proof:nonseparability}.
It is important to mention that this result does not falsify previous works' contributions on formulating the EoS~\citep{cohen2021gradient,cohen2023adaptive}, but rather we circumscribe the scope of their claims to the case where the update direction is fixed to the fixed maximum curvature, which is condition (Q3) of Corollary~\ref{cor:mode_locked_coincidence}.
Violation of this condition implies that the optimizer is more than just a prescribed algorithmic procedure, but rather an active decision-maker that chooses the edge it wants to occupy among the available options provided by the current loss landscape.

\section{Discussion}
\label{sec:extension}
Previous formulations of the edge of stability~\citep{cohen2021gradient,cohen2023adaptive,islamov2026noneuclidean} and its generalized version in Proposition~\ref{prop:edge_preconditioner} predict available worst-case marginality $\chi$.
Our realized EoS $\zeta$ reveals whether and how this available margin is actually trod upon by the optimizer.
This distinction offers a new way to understand the training dynamics of different optimizer families.

\subsection{Diagnostic Applications}
\label{sec:discussion:diagnostic_applications}

If we take the logarithm of the multiplicative relationship in (\ref{eq:ray_stability_boundary}), we can decompose the realized EoS $\zeta$ into additive factors:
\begin{equation}
\label{eq:log_decomposition}
\log \zeta = \log \chi + \log \varsigma - \log \tau.
\end{equation}
This simple additive relationship holds for every iteration, allowing us to draw an \textbf{EoS stack plot}, as in Figure~\ref{fig:decomposition_of_learning_impedance}, to visualize how each factor contributes to the realized EoS $\zeta$ during the training process.
We can use this stack plot as a new diagnostic tool to study training dynamics online, for various instances and families of optimizers.

The first thing we notice from the EoS stack plot in Figure~\ref{fig:decomposition_of_learning_impedance} is that the spatial participation factor $\varsigma$ compensates for $\chi$ even without a nontrivial preconditioner.
Vanilla GD has $\tau \equiv 1$, so the stack is $\log \zeta = \log \chi + \log \varsigma$.
The compansation from $\chi = 1.124$ to $\zeta = 0.983$ is solely due to $\varsigma < 1$.
The same happens to AdaGrad's extreme offset $\chi \approx 21.1$, which is mostly compensated by $\varsigma$, resulting in $\zeta \approx 1$.
In other words, the optimizer often selects updates $\vu$ in nonmaximal sharpness directions, and GD already does so.
This explains why the previous edge of stability formulation based on the worst-mode update $\vu_{\max}$ was not sufficient to predict the actual EoS $\zeta$.
Another interesting observation is that the temporal calibration factor $\tau$ can act in either direction, depending on the type of gradient filter $Q$.
For Adam and QHM, $\tau$ is mostly negative, but for more complicated filters like two-pole filters, $\tau$ can be positive, pulling the dynamics away from the edge of stability.
Adam's near-coincidence $\chi \approx \zeta \approx 1$ is therefore $\varsigma \approx \tau$ after $\Gamma$ is included in $\chi$, not a restoration of occupation $\varsigma = 1$.
This single observation already explains the main cause of the family-dependent offsets in the edge of stability $\chi$ and allows us to visualize how different optimizer families compensate for this effect and actively shape their training dynamics online.

\begin{table}[t]
\begin{minipage}{0.62\linewidth}
\centering
\caption{\small
\textbf{Computation requirements.}
For one full-batch optimizer step (Adam, params=777k, batch=50k, 8-step Lanczos, 1$\times$ RTX 5090).
$\zeta$ requires no HVPs, reducing computation time by 95.3\%.
}
\label{tab:decomposition-component-cost}
\resizebox{\linewidth}{!}{%
\begin{tabular}{@{}llccc@{}}
    \toprule
    \textbf{Component} & \textbf{Calculation formula} & \textbf{HVPs} & \textbf{Time (ms)} & \textbf{Peak mem.\ (MiB)} \\
    \midrule
    $\chi=\eta S\Gamma$ & $k$-step Lanczos sharpness & 8 & 110.15 $\pm$ 2.56 & 767.3 \\
    $\varsigma=2B/(\eta D S)$ & $k$-step Lanczos sharpness & 8 & 114.84 $\pm$ 1.15 & 767.3 \\
    $\tau=2\Gamma A/D$ & gradient contractions & 0 & 5.22 $\pm$ 0.00 & 305.4 \\
    \midrule
    $\zeta=B/A$ & gradient contractions & 0 & 5.20 $\pm$ 0.01 & 305.4 \\
    \bottomrule
\end{tabular}
}
\end{minipage}
\hfill
\begin{minipage}{0.36\linewidth}
\centering
\includegraphics[width=\linewidth]{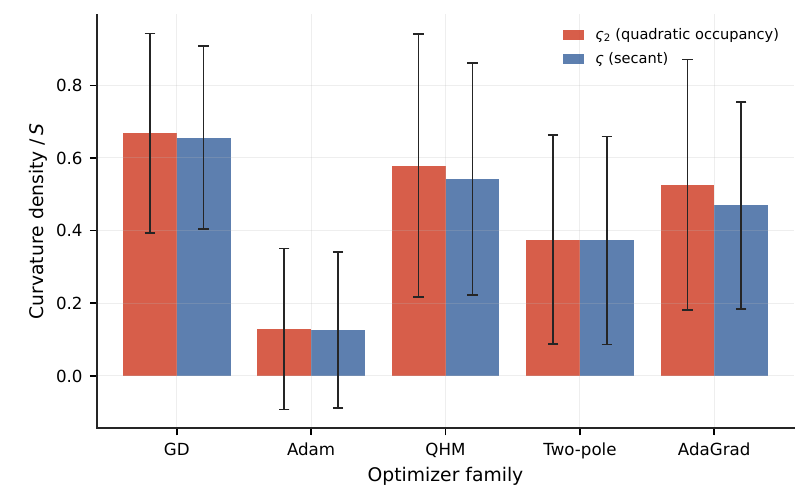}
\vspace{-1.8em}
\captionof{figure}{\small
$\varsigma$ vs. $\varsigma_2$ for optimizers.}
\label{fig:comparison}
\end{minipage}
\vspace{-1em}
\end{table}

\subsection{Cost Benefits}
\label{sec:discussion:approximation_cost_and_error_bound}

One additional benefit of realized EoS $\zeta$ is that it does not require heavy calculation.
Since
\begin{equation}
\zeta \;=\; \frac{B}{A} \;=\; \frac{A + \Delta \gL / \eta}{A} = 1 + \frac{\Delta \gL}{\eta \vg^\top \vu},
\end{equation}
we can compute $\zeta$ directly from the current gradient $\vg$, the chosen update direction $\vu$, and the resulting change in loss $\Delta \gL$ at each training iteration.
In comparison, previous definition of the EoS involves a maximizer on preconditioned Hessian~\citep{cohen2021gradient,cohen2023adaptive,islamov2026noneuclidean}, which is not only computationally expensive, requiring multiple Hessian-vector products (HVPs), but also forces us to use approximate measures.
Table~\ref{tab:decomposition-component-cost} summarizes the computation costs for each component of (\ref{eq:log_decomposition}), all measured from the same CIFAR-10 training using the 200-5-tanh model in the initial report~\citep{cohen2021gradient}.
We highlight that our formulation is both cost-effective and \emph{exact} for full-batch optimization.

\subsection{Effect of Higher-Order Terms}
\label{sec:discussion:effect_of_higher_order_terms}

Similar to the original edge of stability~\citep{cohen2021gradient,cohen2023adaptive,islamov2026noneuclidean}, Figure~\ref{fig:observation} shows how actual training trajectories are attracted to, oscillate around, and are eventually pushed away from the edge of stability $\zeta = 1$.
As in previous works on EoS~\citep{damian2023self,mulayoff2026nonlinear}, we can derive sufficient conditions for the stabilization mechanism to take effect.
Since our main focus is to resolve the optimizer-dependent offsets found in the empirical EoS, we defer this analysis to Appendix~\ref{sec:third_order}.
In our formulation of $\zeta$, the higher-order terms are absorbed into the weighted secant Hessian $\bar{\mH}_{\eta,\vu}$ and thus into the spatial participation factor $\varsigma$.
If we want to isolate the effect of the higher-order terms, we can define the second-order-only spatial participation factor $\varsigma_2 := \vu^\top \mH \vu / (S\|\vu\|_{\mP}^2)$ and compare it with $\varsigma$, as shown in Figure~\ref{fig:comparison}.
Although higher-order terms are essential for maintaining stability beyond the edge of stability, their magnitudes contribute negligibly to the realized EoS $\zeta$ when we study the position of the edge.

\subsection{Limitations}
\label{sec:discussion:limitations}

This work focuses on full-batch optimization, where the theory is exactly realized in practice.
In mini-batch optimization, the terms $A = \vg^\top \vu$, $B = \eta \vu^\top \mH \vu / 2$, and their ratio $\zeta = B / A$ become stochastic due to changes in the instantaneous loss landscape $\gL$ along the trajectory.
This makes the analysis more challenging and complicates the interpretation of the ratio $\zeta$.
Previous works treat the incorporation of stochastic scenarios as an independent problem, which is sometimes referred to as the edge of stochastic stability (EoSS) \citet{andreyev2024edge,andreyev2026momentum}.
We likewise leave the extension of our framework to the stochastic setting for future work.

\section{Conclusion}
\label{sec:conclusion}
The edge of stability has been studied primarily as an effect related to the maximum curvature mode of the loss landscape.
This turns out to be only half of the story.
The other half is the effect of time-varying stateful optimizers that actually take the updates, given a loss landscape and gradient history.
Neglecting this aspect leads to optimizer-dependent systematic offsets in estimating the edge of stability, which are largely eliminated in our framework on the \emph{realized} edge of stability.
Decomposing the realized EoS into factors results in diagnostic tools that provide a richer understanding of training dynamics under gradient-based optimizers.

\bibliography{main}
\bibliographystyle{iclr2027_conference}

\appendix

\section{Related Work}
\label{sec:survey}
\paragraph{Edge of stability.}
It has been known that the learning rate constrains the curvature of the loss landscape during the course of training~\citep{xing2018walk,jastrzebski2019relation,gilmer2022loss}.
Later, \citet{cohen2021gradient} conducted an extensive study on this phenomenon and first coined the term \emph{edge of stability} to describe the attractive behavior of the maximum Hessian eigenvalue $\lambda_{\max}(\mH)$ towards a value reciprocal to the learning rate $\lambda_{\max} \approx 2 / \eta$ under full-batch gradient descent.
Similar phenomena have been reported under different optimization settings~\citep{lyu2022understanding,cohen2023adaptive,damian2023self,agarwala2023second,zhu2023understanding,andreyev2024edge,chen2024stability,cohen2025central,islamov2026noneuclidean,andreyev2026momentum,litman2026origin}.
This includes full gradient descent~\citep{cohen2021gradient}, mini-batch gradient descent~\citep{andreyev2024edge}, momentum-based optimizers~\citep{cohen2023adaptive,andreyev2026momentum}, and adaptive method~\citep{cohen2023adaptive} such as RMSProp~\citep{tieleman2012lecture} or Adam~\citep{kingma2015adam}.
Following these lines of work, we primarily focus on the full-batch settings where the analysis is more tractable and the effect is more pronounced.
In case of stochastic mini-batch gradient descent, \citet{andreyev2024edge,andreyev2026momentum} empirically found that the edge of stability extends beyond determinism and applies to curvature of the batch statistics.

\paragraph{Mechanisms: catapult, progressive sharpening, self-stabilization, and bifurcation.}
Existing explanations for the EoS phenomenon can be categorized into four mechanisms:
(1) The \emph{catapult}~\citep{lewkowycz2020large}, described prior to the formalization of the edge of stability, explains how a large learning rate drives a sudden loss spike that initiates the reduction of the curvature down to the edge of stability.
(2) \citet{cohen2021gradient,wang2022analyzing} demonstrate the opposite, called \emph{progressive sharpening}, where curvature initially increases up to the EoS bound.
The EoS phenomenon also implies the existence of a coincidental stabilization mechanism that consistently reduces the curvature to the edge, causing it to hover just over the theoretical stability bound.
(3) \citet{damian2023self} attribute this \emph{self-stabilization} to curvature compensation by third-order derivatives.
(4) \citet{jastrzebski2020breakeven,song2023trajectory} identify a period-2 \emph{flip bifurcation} in the discrete dynamical system governed by gradient descent.
These four mechanisms form the basis of the EoS phenomenon, and many follow-up works have sought to explain them using simpler models.
\citet{arora2022understanding} use a two-timescale decomposition;
\citet{ma2022multiscale} employ a one-dimensional model;
\citet{chen2023beyond} analyze two-step fixed points that extract period-2 motions;
\citet{kreisler2023gfs} study scalar linear networks; and
\cite{hofmann2026map} use a simple quartic model.
Continuous-time reductions are also explored, including central flows~\citep{cohen2025central}, Edge Flow~\citep{marion2026edge}, and Rod Flow~\citep{regis2026rod,regis2026adam}.
In structured models, EoS is used as an implicit-bias mechanism for diagonal linear networks~\citep{even2023sgd}, logistic regression~\citep{wu2023implicit,wu2024large}, threshold neurons~\citep{ahn2023learning}, and deep matrix factorization~\citep{ghosh2025learning}; \citet{wang2025good} unify EoS, balancing, and catapult as large-learning-rate biases resulting from regularity, and \citet{jiang2025understanding} track the NTK at EoS.
None of these works reformulate the dependence of the EoS $\chi = \eta S\Gamma / 2$ on the maximum sharpness direction $\vv_{\max}$ to incorporate actual executed updates carried out \emph{by the optimizers}, which is the gap we address.

\paragraph{Directional smoothness.}
We are mainly inspired by prior work examining how directional smoothness influences the stability bound of full-batch gradient descent.
\citet{ahn2022unstable} formalize unstable convergence in terms of directional smoothness and relative progress, finding structured oscillations rather than divergence past $2/\eta$.
\citet{lee2023ias} replace $\lambda_{\max}$ with an interaction-aware sharpness (batch-gradient--Hessian coupling), and report that this quantity hovers around a concentration measure under mini-batch SGD.
\citet{mishkin2024directional,islamov2026noneuclidean} investigate the directional smoothness of the executed step as a convergence diagnostic.
Their works are analogous to our realized EoS $\zeta = B/A$ along an update direction $\vu$; however, their quantities are not directly representative of the edge of stability, as they resort to the maximum sharpness instead of the executed update.
Furthermore, previous works have not accounted for general filtered optimizers.
Therefore, our work extends these works rather than competing with them.

\paragraph{Gradient-based optimizers as filters.}
Momentum methods~\citep{polyak1964some,nesterov1983method} are linear time-invariant (LTI) filters applied to the gradient stream.
\citet{qian1999momentum} identified Heavy Ball~\citep{polyak1964some} as a discrete-time low-pass filter.
Control systems theory makes this viewpoint precise: first-order methods are modeled as LTI plants in feedback with gradient stream signals $\vg$.
These systems can be analyzed using transfer functions, Bode plots, and integral quadratic constraints~\citep{lessard2016iqc,hu2017control}, and can be designed as higher-order filters such as triple momentum~\citep{vanscoy2018triple}.
\citet{gitman2019momentum} treat Heavy Ball~\citep{polyak1964some}, Nesterov's momentum~\citep{nesterov1983method}, and quasi-hyperbolic momentum~\citep{ma2018quasi} as a single IIR family, and map stability regions in $(\beta,\nu)$.
\citet{lee2024grokfast,pagliardini2025ademamix} design a two-timescale EMA filter with preconditioner~\citep{kingma2015adam,loshchilov2019decoupled}.
\citet{li2025frequency} apply the $z$-transform to momentum methods and interpret the coefficients as a (time-varying) frequency response.
\citet{lee2025greedy} proposes an adaptive technique to determine the optimal filter coefficients, e.g., the momentum hyperparameter $\beta$, further developing this signal processing perspective.
In summary, momentum-based optimizers can be understood as filters: GD acts as an identity filter, Heavy Ball and Nesterov's momentum are one-pole filters, QHM is a one-tap pole-zero filter, and two-timescale EMAs~\citep{ghosh2025learning} such as Grokfast~\citep{lee2024grokfast} and AdEMAMix~\citep{pagliardini2025ademamix} function as cascaded two-pole filters.
The latter serve as practical counterparts of $Q$, whose root locus does not necessarily escape at the Nyquist frequency $z = -1$.
We draw extensively on this perspective to study the EoS phenomenon under general filtered optimizers.

\section{Proofs}
\label{sec:proofs}
\subsection{The Mode Reduction}
\label{appx:proof:mode_reduction}

Throughout this subsection we work in the \emph{frozen quadratic} regime on which the original edge-of-stability arguments are based: over the memory horizon of the filter the loss is $\gL(\vtheta) = \tfrac12 (\vtheta - \vtheta_\star)^\top \mH (\vtheta - \vtheta_\star)$ with $\mH$ constant and symmetric, the preconditioner is frozen at $\mP_t \equiv \mP \succ 0$, and the
filter is time invariant, $Q_t \equiv Q$.
Let $(\lambda_i, \vv_i)$ be the eigenpairs of $\mP^{-1/2} \mH \mP^{-1/2}$ with $\{\vv_i\}$
orthonormal, and define the \emph{whitened} coordinates
\begin{equation}
\label{eq:appx_whitened_coordinate}
x_t^{(i)} \;:=\; \vv_i^\top \mP^{1/2} (\vtheta_t - \vtheta_\star) .
\end{equation}
Two remarks on~(\ref{eq:appx_whitened_coordinate}).
The factor $\mP^{1/2}$ applies before the mode decomposition, and the offset $\vtheta_\star$ makes the gradient proportional to the coordinate.
The shorthand $x_i = \vv_i \cdot \vtheta$ used below~(\ref{eq:eigencomponent_dynamics}) incorporates both effects for brevity.

\begin{lemma}[Exact decoupling]
\label{lem:appx_decoupling}
Under the frozen quadratic regime, the update rule~(\ref{eq:gradient_filter}) is equivalent to the family of decoupled scalar recursions
\begin{equation}
\label{eq:appx_scalar_recursion}
x^{(i)}_{t+1} \;=\; x^{(i)}_t - \eta \lambda_i \, (Q * x^{(i)}_{\le t})_t ,
\qquad i = 1, \dots, d ,
\end{equation}
which is~(\ref{eq:eigencomponent_dynamics}) with the coordinates~(\ref{eq:appx_whitened_coordinate}).
\end{lemma}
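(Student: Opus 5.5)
The plan is a direct computation: write the gradient in whitened coordinates, push it through the linear filter $Q$ and the preconditioner $\mP^{-1}$, and project onto each eigenvector.

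\textbf{Gradient.} Under the frozen quadratic regime, $\vg_s = \mH(\vtheta_s - \vtheta_\star)$ for every $s$ in the memory horizon. Let $\ve_s := \vtheta_s - \vtheta_\star$ and let $\vm_s := \mP^{1/2}\ve_s$ be the whitened error. Since $\mP \succ 0$ is symmetric, $\mP^{1/2}$ is symmetric and invertible, so $\ve_s = \mP^{-1/2}\vm_s$.

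\textbf{Update.} Because $Q$ is a time-invariant linear convolution and $\mH$ is constant, $\mH$ commutes with the convolution:
\[
(Q * \vg_{\le t})_t = \mH\,(Q * \ve_{\le t})_t .
\]
Substituting into the update rule (\ref{eq:gradient_filter}) gives
\[
\ve_{t+1} = \ve_t - \eta\, \mP^{-1}\mH\,(Q * \ve_{\le t})_t .
\]
Left-multiplying by $\mP^{1/2}$ and inserting $\ve = \mP^{-1/2}\vm$ yields
\[
\vm_{t+1} = \vm_t - \eta\, \mP^{-1/2}\mH\mP^{-1/2}\,(Q * \vm_{\le t})_t .
\]
Here I use that $\mP^{-1/2}$, being a fixed matrix, also commutes with the convolution.

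\textbf{Projection.} Project onto $\vv_i$. Since $\mP^{-1/2}\mH\mP^{-1/2}$ is symmetric with $\vv_i^\top \mP^{-1/2}\mH\mP^{-1/2} = \lambda_i \vv_i^\top$, and $\vv_i^\top$ commutes with the convolution, we get (\ref{eq:appx_scalar_recursion}) for $x^{(i)}_t = \vv_i^\top \vm_t$.

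\textbf{Equivalence.} The converse holds because $\{\vv_i\}$ is an orthonormal basis and $\mP^{1/2}$ is invertible, so the map $\vtheta_t \mapsto (x^{(i)}_t)_i$ is a bijection. Reassembling $\vm_t = \sum_i x^{(i)}_t \vv_i$ therefore recovers the vector recursion, and hence (\ref{eq:gradient_filter}). Initial conditions and filter states correspond under the same linear bijection.

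The only delicate step is making precise that linear maps commute with the causal convolution $Q * (\cdot)$. This is immediate from linearity of finite or absolutely summable sums $\sum_k Q_k (\cdot)_{t-k}$. It does require $Q_t$, $\mH$ and $\mP$ to be time-invariant over the horizon, which is exactly the frozen regime assumed here; in particular Q2 is used. Nothing else should pose an obstacle.
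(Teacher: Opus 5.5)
Your proof is correct and follows essentially the same route as the paper. Both whiten with $\mP^{1/2}$, commute the constant matrices with the time-domain convolution $Q*(\cdot)$, and project onto the orthonormal eigenvectors $\vv_i$ of $\mP^{-1/2}\mH\mP^{-1/2}$; the paper whitens the gradient first instead of pulling $\mH$ through $Q$, which is an immaterial reordering. Your explicit converse via the linear bijection $\vtheta_t \mapsto (x^{(i)}_t)_i$ is a small improvement, since the paper asserts ``equivalent'' but only writes out the forward direction.
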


\begin{proof}
Since $\vg_t = \mH (\vtheta_t - \vtheta_\star)$, inserting $\mP^{\pm 1/2}$ gives $\mP^{-1/2} \vg_t = (\mP^{-1/2} \mH \mP^{-1/2}) \, \mP^{1/2} (\vtheta_t - \vtheta_\star)$, so $\vv_i^\top \mP^{-1/2} \vg_t = \lambda_i x_t^{(i)}$.
Applying $\mP^{1/2}$ to $\vtheta_{t+1} - \vtheta_\star = \vtheta_t - \vtheta_\star - \eta \mP^{-1} (Q * \vg_{\le t})_t$ and using $\mP^{-1/2} (Q * \vg_{\le t})_t = (Q * \mP^{-1/2}\vg_{\le t})_t$
(which holds because the convolution acts on the time index and $\mP$ is constant),
we obtain $\mP^{1/2}(\vtheta_{t+1} - \vtheta_\star) = \mP^{1/2}(\vtheta_t - \vtheta_\star) - \eta (Q * \mP^{-1/2}\vg_{\le t})_t$.
Projecting onto $\vv_i$ yields~(\ref{eq:appx_scalar_recursion}).
Since no two modes interact, the reduction is exact rather than an approximation.
\end{proof}

From here on we can fix one mode without loss of generality, drop the index, and write the \emph{loop gain}
\begin{equation}
\label{eq:appx_loop_gain}
k \;:=\; \eta \lambda \;>\; 0 ,
\end{equation}
assuming $\lambda > 0$.
Modes with $\lambda \le 0$ do not contribute to unit-circle crossing for $k > 0$, and need not be considered.

\subsection{Proof of Proposition~\ref{prop:edge_preconditioner}}
\label{appx:proof:eos_prop}

\begin{proof}
We first record the two standing hypotheses that the statement of the proposition leaves implicit, but generallly holds true for practical filters.
Write $Q(z)$ in a rational form $Q(z) = b(\xi) / a(\xi)$ with $\xi := z^{-1}$, where $a, b$ are coprime real polynomials normalized so that $a(0) = 1$.
\begin{enumerate}
\item[(H1)] $Q$ is causal and internally stable: every pole of $Q$ lies in $|z| < 1$, equivalently every root of $a$ lies in $|\xi| > 1$.
\item[(H2)] $Q(1)$ is finite and nonzero; by the dc normalization of Section~\ref{sec:preliminary}, $Q(1) = 1$ always.
\end{enumerate}
Without (H1) the filter is unstable on its own and no gain $k$ stabilizes the mode, so the phrase ``first crossing'' has no meaning.
Without (H2) the branch of the locus that starts at $z = 1$ does not move into the disc.

\medskip\noindent\textit{The characteristic polynomial.}
Taking $z$-transforms of~(\ref{eq:appx_scalar_recursion}) with zero initial data gives $z X(z) = X(z) - k Q(z) X(z)$, hence~(\ref{eq:eigencomponent_transfer_function}),
\begin{equation}
\label{eq:appx_char_z}
z - 1 + k Q(z) \;=\; 0 .
\end{equation}
Multiplying~(\ref{eq:appx_char_z}) by $z^{-1} a(z^{-1})$ and substituting $\xi= z^{-1}$ clears both the pole at $z = 0$ and the denominator of $Q$, yielding a characteristic polynomial
\begin{equation}
\label{eq:appx_char_zeta}
c_k(\xi) \;:=\; (1 - \xi)\, a(\xi) \;+\; k\, \xi\, b(\xi) \;=\; 0 .
\end{equation}
Because $|z| < 1 \iff |\xi| > 1$, the mode is asymptotically stable exactly when every root of $c_k$ lies strictly outside the closed unit disc.
Roots of $c_k$ move continuously with $k$ on the Riemann sphere, and the exterior $\{|\xi| > 1\} \cup \{\infty\}$ is open, so a root can enter the closed unit disc only by passing through the circle $|\xi| = 1$.
A root escaping to $\xi = \infty$, which happens when the leading coefficient of~(\ref{eq:appx_char_zeta}) degenerates, is a deadbeat pole at $z = 0$ and never destabilizes.

\medskip\noindent\textit{Step 1: the locus starts inside.}
At $k = 0$ we have $c_0(\xi) = (1 - \xi) a(\xi)$, whose roots are $\xi = 1$ together with the roots of $a$, which all lie in $|\xi| > 1$ by (H1).
The root at $\xi = 1$ is simple because $a(1) \neq 0$ by (H2).
Since $\partial_\xi c_k |_{(1,0)} = -a(1)$ and $\partial_k c_k |_{(1,0)} = b(1)$, the implicit function theorem gives a branch $\xi(k)$ with
\begin{equation}
\label{eq:appx_locus_departure}
\left. \frac{d\xi}{dk} \right|_{k=0} \;=\; \frac{b(1)}{a(1)} \;=\; Q(1) \;=\; 1 \;>\; 0 ,
\qquad\text{so}\qquad \xi(k) = 1 + k + O(k^2) .
\end{equation}
Equivalently $z(k) = 1 - k + O(k^2)$: the dc pole leaves $z = 1$ along the negative real direction and enters the disc.
Hence there is $k_0 > 0$ such that the mode is asymptotically stable for every $k \in (0, k_0)$, and the locus starts inside the disc.

\medskip\noindent\textit{Step 2: the crossings are exactly $\Omega_+(Q)$.}
For $k > 0$, its corresponding characteristic polynomial $c_k$ has a root on $|\xi| = 1$ if and only if~(\ref{eq:appx_char_z}) is solved by some $z = e^{i\omega}$ on the unit circle, i.e.
\begin{equation}
\label{eq:appx_crossing_gain}
k \;=\; \frac{1 - e^{i\omega}}{Q(e^{i\omega})} \;=:\; k(\omega) .
\end{equation}
The frequency $\omega = 0$ forces $k = 0$ and is excluded.
Because $a, b$ are real, $k(-\omega) = \overline{k(\omega)}$, so the crossings appear symmetrically, i.e., $\pm\omega$ occur at the same gain, and it suffices to take $\omega \in (0, \pi]$.
Finally $k$ must be a real positive number, which is precisely the defining condition of $\Omega_+(Q)$ in~(\ref{eq:gain_at_first_crossing}).
On $\Omega_+(Q)$ the value $k(\omega)$ coincides with its modulus, i.e., $k(\omega) = |k(\omega)|$.

\medskip\noindent\textit{Step 3: conclusion.}
Combining Steps 1 and 2, the set of gains at which some root sits on the unit circle is $\{ k(\omega) : \omega \in \Omega_+(Q) \}$, and the mode is stable on the whole interval below its smallest element.
Therefore
\begin{equation}
\label{eq:appx_kstar}
k_\star(Q) \;=\; \min_{\omega \in \Omega_+(Q)} \left| \frac{1 - e^{i\omega}}{Q(e^{i\omega})} \right| ,
\end{equation}
with the convention $k_\star(Q) = +\infty$ and $\Gamma = 0$ when $\Omega_+(Q) = \emptyset$, which is exactly the claim~(\ref{eq:gain_at_first_crossing}).
Writing $\Gamma := 2 / k_\star(Q)$, the mode is stable iff $\eta \lambda < 2/\Gamma$ and marginal at equality, i.e.\ iff $\eta \lambda \Gamma \le 2$, which recovers~(\ref{eq:marginal_stability_condition}).
Since $\Gamma$ depends on $Q$ alone and not on $\lambda$ (mode decoupling), the binding mode is the sharpest one, and the system is not exponentially unstable iff $\eta \lambda_{\max}(\mP^{-1/2}\mH\mP^{-1/2}) \, \Gamma \le 2$, which is precisely the boxed condition~(\ref{eq:marginal_stability_condition}).

\medskip\noindent\textit{Step 4: the Nyquist case.}
If the minimum in~(\ref{eq:appx_kstar}) is attained at the Nyquist frequency $\omega = \pi$, i.e., $z = e^{i\omega} = -1$, and $Q(-1) > 0$, then
\begin{equation}
\label{eq:appx_nyquist_gauge}
k_\star(Q) \;=\; \frac{1 - (-1)}{Q(-1)} \;=\; \frac{2}{Q(-1)} ,
\qquad\text{hence}\qquad
\Gamma \;=\; \frac{2}{k_\star(Q)} \;=\; Q(-1) ,
\end{equation}
and the stability condition reads $\eta \lambda Q(-1) \le 2$.
The marginal root $z = -1$ has $z^2 = 1$, so the marginal motion is a period-2 oscillation.
This proves the last sentence of Proposition~\ref{prop:edge_preconditioner}.
\end{proof}

\subsection{Two Lemmas on the Root Locus Gauge}
\label{appx:proof:gauge_lemmas}

The gauge $\Gamma$ is a functional of the filter $Q$ alone, so it can be computed once per optimizer family, and is not dependent on the preconditioner.
The following two lemmas consolidates this: the first removes the dc normalization $Q(1)$, and the second identifies a large class of filters for which the escape is provably at Nyquist $z = -1$.

\begin{lemma}[Positive scaling]
\label{lem:appx_scaling}
Let $c > 0$. Then $\Omega_+(cQ) = \Omega_+(Q)$, $k_\star(cQ) = k_\star(Q)/c$, and
\begin{equation}
\label{eq:appx_scaling}
\Gamma(cQ) \;=\; c \, \Gamma(Q) .
\end{equation}
In particular the dc-normalized filter $\tilde Q := Q / Q(1)$ satisfies $\Gamma(\tilde Q) = \Gamma(Q) / Q(1)$, and $\Gamma(Q) = Q(-1)$ holds for $Q$ if and only if it holds for $\tilde Q$.
\end{lemma}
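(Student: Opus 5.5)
The plan is to work directly from the definitions in~(\ref{eq:gain_at_first_crossing}) and exploit that $c>0$ is a real positive scalar, so it commutes with taking arguments, real parts, and moduli. Writing $k_Q(\omega) := (1-e^{i\omega})/Q(e^{i\omega})$ for the crossing gain of Step~2 in the proof of Proposition~\ref{prop:edge_preconditioner}, we have, for every $\omega$ at which $Q(e^{i\omega}) \neq 0$,
\begin{equation}
k_{cQ}(\omega) \;=\; \frac{1 - e^{i\omega}}{c\, Q(e^{i\omega})} \;=\; \frac{k_Q(\omega)}{c}.
\end{equation}
Since multiplication by $c^{-1} > 0$ maps $\mathbb{R}_{>0}$ bijectively onto itself, $k_{cQ}(\omega) \in \mathbb{R}_{>0}$ if and only if $k_Q(\omega) \in \mathbb{R}_{>0}$. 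This gives $\Omega_+(cQ) = \Omega_+(Q)$ immediately. Zeros of $Q$ on the unit circle are zeros of $cQ$ as well, so the domain of definition is unchanged; I will remark that such frequencies are excluded from both sets identically.

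Next, the minimum in~(\ref{eq:gain_at_first_crossing}) is taken over the same index set for both filters, and each term is scaled by the same constant $1/c$. Hence $k_\star(cQ) = \min_{\omega \in \Omega_+(Q)} |k_Q(\omega)|/c = k_\star(Q)/c$. The empty-set convention ($k_\star = +\infty$, $\Gamma = 0$) is consistent with this, since $+\infty/c = +\infty$ and $c \cdot 0 = 0$. Then $\Gamma(cQ) = 2/k_\star(cQ) = c \cdot 2/k_\star(Q) = c\,\Gamma(Q)$, which is~(\ref{eq:appx_scaling}).

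For the ``in particular'' clause, take $c = 1/Q(1)$. This requires $Q(1) > 0$, which I will state as the standing assumption, matching (H2) together with the dc-normalization convention. It gives $\Gamma(\tilde Q) = \Gamma(Q)/Q(1)$ at once. For the Nyquist equivalence, $\tilde Q(-1) = Q(-1)/Q(1)$, so $\Gamma(\tilde Q) = \tilde Q(-1)$ is equivalent to $\Gamma(Q)/Q(1) = Q(-1)/Q(1)$, which in turn is equivalent to $\Gamma(Q) = Q(-1)$. I will also note that the minimizing frequency itself is unchanged by scaling, since each term is multiplied by the same factor. Therefore ``escape at Nyquist'' in the sense of Step~4 is also scale-invariant, and the sign condition $Q(-1) > 0$ is preserved because $c > 0$.

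There is no real obstacle here. The only care needed is bookkeeping: handling the empty-$\Omega_+$ convention and possible zeros of $Q$ on the circle, and making explicit that $Q(1) > 0$ (not merely nonzero) is needed for the normalization clause. If $Q(1) < 0$, scaling by a negative constant would reflect the locus and could change $\Omega_+$.
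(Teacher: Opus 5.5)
Your proposal is correct and follows essentially the same route as the paper. In both, the crossing gain $k(\omega)$ becomes $k(\omega)/c$, and multiplication by $c^{-1}>0$ preserves membership in $\mathbb{R}_{>0}$ while scaling every modulus by $c^{-1}$; hence $\Omega_+$ is unchanged, $k_\star$ scales by $1/c$ and $\Gamma$ by $c$, and the Nyquist equivalence follows because $\Gamma$ and $Q(-1)$ carry the same factor. Your extra bookkeeping is a sound refinement that the paper leaves implicit: the empty-$\Omega_+$ convention, zeros of $Q$ on the unit circle, and the point that the normalization clause needs $Q(1)>0$ rather than merely $Q(1)\neq 0$.
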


\begin{proof}
$k(\omega)$ in~(\ref{eq:appx_crossing_gain}) is replaced by $k(\omega)/c$.
Multiplication by $c^{-1} > 0$ preserves membership in $\R_{>0}$, so $\Omega_+$ is unchanged, and it scales every modulus by $c^{-1}$, so the minimum scales by $c^{-1}$ and $\Gamma = 2/k_\star$ by $c$.
The last claim follows since $\Gamma$ and $Q(-1)$ carry the same factor $c$.
\end{proof}

\begin{lemma}[Nyquist escape for positive one-pole mixtures]
\label{lem:appx_nyquist_mixture}
Let
\begin{equation}
\label{eq:appx_mixture}
Q(z) \;=\; \sum_{j=1}^{J} w_j \, \frac{1 - \beta_j}{1 - \beta_j z^{-1}} ,
\qquad w_j > 0, \quad \sum_{j=1}^{J} w_j = 1, \quad \beta_j \in [0, 1) .
\end{equation}
Then $Q(1) = 1$, hypotheses (H1) and (H2) hold, and
\begin{equation}
\label{eq:appx_mixture_gauge}
\Omega_+(Q) = \{\pi\},
\qquad
k_\star(Q) = \frac{2}{Q(-1)},
\qquad
\Gamma \;=\; Q(-1) \;=\; \sum_{j=1}^{J} w_j \, \frac{1 - \beta_j}{1 + \beta_j} \;\in\; (0, 1] ,
\end{equation}
with $\Gamma = 1$ if and only if every $\beta_j = 0$.
\end{lemma}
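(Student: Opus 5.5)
The plan is to verify the hypotheses directly and then pin down $\Omega_+(Q)$ by a sign argument on an imaginary part, avoiding any explicit root-locus tracing.

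First the easy parts. Each summand satisfies $(1-\beta_j)/(1-\beta_j\cdot 1) = 1$, so $Q(1) = \sum_j w_j = 1$, which gives (H2). Merge any repeated $\beta_j$ by adding their weights. Then $Q$ is a rational function of $z^{-1}$ whose only poles are the distinct $\beta_j \in [0,1)$, all inside $|z|<1$. Writing $Q = b(\xi)/a(\xi)$ with $a(\xi) = \prod_j (1-\beta_j \xi)$, the roots of $a$ satisfy $|\xi| = 1/\beta_j > 1$. The coprimality required in the proof of Proposition~\ref{prop:edge_preconditioner} holds after cancelling common factors, which does not change $Q$. This gives (H1), so Proposition~\ref{prop:edge_preconditioner} applies.

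Next I characterize $\Omega_+(Q)$. For $\omega \in (0,\pi]$ we have $1-e^{i\omega} \neq 0$, so $k(\omega) = (1-e^{i\omega})/Q(e^{i\omega})$ is real and positive if and only if
\begin{equation*}
M(\omega) := Q(e^{i\omega})\,\overline{(1-e^{i\omega})}
\end{equation*}
is real and positive. Here I use $k(\omega) = |1-e^{i\omega}|^2 / M(\omega)$, provided $Q(e^{i\omega}) \neq 0$. For a single pole, multiply numerator and denominator by the conjugate to get
\begin{equation*}
\frac{1-\beta}{1-\beta e^{-i\omega}}\,(1-e^{-i\omega}) = \frac{(1-\beta)(1-\beta e^{i\omega})(1-e^{-i\omega})}{|1-\beta e^{-i\omega}|^2}.
\end{equation*}
Expanding, $(1-\beta e^{i\omega})(1-e^{-i\omega}) = (1+\beta)(1-\cos\omega) + i(1-\beta)\sin\omega$. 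Hence
\begin{equation*}
\operatorname{Im} M(\omega) = \sin\omega \sum_j w_j \frac{(1-\beta_j)^2}{|1-\beta_j e^{-i\omega}|^2}, \qquad
\operatorname{Re} M(\omega) = (1-\cos\omega)\sum_j w_j \frac{(1-\beta_j)(1+\beta_j)}{|1-\beta_j e^{-i\omega}|^2}.
\end{equation*}
Since $w_j > 0$ and $\beta_j < 1$, every summand of $\operatorname{Im} M$ is strictly positive, so $\operatorname{Im} M(\omega) > 0$ on $(0,\pi)$. At $\omega = \pi$ it vanishes, and $\operatorname{Re} M(\pi) > 0$. The same positivity shows $Q(e^{i\omega}) \neq 0$ wherever it is needed. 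Therefore $\Omega_+(Q) = \{\pi\}$.

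At $\omega = \pi$ each summand equals $(1-\beta_j)/(1+\beta_j)$, so $Q(-1) = \sum_j w_j (1-\beta_j)/(1+\beta_j) > 0$ and $k_\star(Q) = 2/Q(-1)$. Step 4 of the proof of Proposition~\ref{prop:edge_preconditioner} then gives $\Gamma = Q(-1)$.

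Finally, $(1-\beta)/(1+\beta) \in (0,1]$ for $\beta \in [0,1)$, with value $1$ if and only if $\beta = 0$. A convex combination with strictly positive weights therefore lies in $(0,1]$ and equals $1$ if and only if every $\beta_j = 0$.

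I expect the only real obstacle to be Step 2: excluding crossings at non-Nyquist frequencies. The conjugate-multiplication trick above reduces it to a termwise sign check, and that check is where positivity of the weights is essential. With mixed-sign $w_j$, or with cascaded rather than summed poles, the imaginary part can change sign, and the crossing can move away from $z=-1$, as happens for the two-pole filters.
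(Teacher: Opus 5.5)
Your proof is correct and follows the paper's argument. Your $M(\omega) = Q(e^{i\omega})\,\overline{(1-e^{i\omega})}$ equals $2\sin(\omega/2)\,e^{-i(\omega-\pi)/2}\,Q(e^{i\omega})$, so $\operatorname{Im} M = 2\sin(\omega/2)\,\Psi(\omega)$ is the paper's phase margin up to a positive factor, and both proofs finish with the same single-pole positivity plus $\R$-linearity over positive weights; your explicit checks of $\operatorname{Re} M(\pi) > 0$, of $Q \neq 0$ on the circle, and of the "$\Gamma = 1$ iff all $\beta_j = 0$" step just spell out points the paper states more briefly.
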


\begin{proof}
Each summand equals $w_j$ at $z = 1$, so $Q(1) = 1$, giving (H2).
Also, the poles are $z = \beta_j \in [0,1)$, giving (H1).
For $\omega \in (0, \pi]$ the elementary identity
\begin{equation}
\label{eq:appx_half_angle}
1 - e^{i\omega} \;=\; 2 \sin\left(\frac{\omega}{2}\right) \; e^{i(\omega - \pi)/2},
\qquad 2\sin\left(\frac{\omega}{2}\right) > 0 ,
\end{equation}
shows that $k(\omega) \in \R_{>0}$ holds iff $e^{i\theta} / Q(e^{i\omega}) \in \R_{>0}$ with $\theta := (\omega - \pi)/2$.
Writing $e^{i\theta}/Q = e^{i\theta}\overline{Q}/|Q|^2$ and take the conjugate.
The reality requirement $e^{i\theta} / Q(e^{i\omega}) \in \R_{>0}$ is exactly the vanishing of the \emph{phase margin}
\begin{equation}
\label{eq:appx_phase_margin}
\Psi(\omega) \;:=\; \operatorname{Im}\!\left( e^{-i(\omega - \pi)/2} \, Q(e^{i\omega}) \right) .
\end{equation}
So it suffices to show that $\Psi > 0$ on $(0,\pi)$, which rules out every interior crossing.

Consider one pole, $Q_\beta(z) := (1-\beta)/(1 - \beta z^{-1})$, so that $Q_\beta(e^{i\omega}) = (1-\beta)/(1 - \beta e^{-i\omega})$.
Rationalizing gives
\begin{equation}
\label{eq:appx_single_pole_margin}
\Psi_\beta(\omega)
\;=\; \frac{1 - \beta}{|1 - \beta e^{-i\omega}|^2} \,
\operatorname{Im}\!\left( e^{-i\theta} \left( 1 - \beta e^{i\omega} \right) \right)
\;=\; \frac{1 - \beta}{|1 - \beta e^{-i\omega}|^2}
\left( -\sin\theta - \beta \sin(\omega - \theta) \right) .
\end{equation}
Since $-\theta = \tfrac{\pi - \omega}{2}$ and $\omega - \theta = \tfrac{\omega + \pi}{2}$, both sines collapse onto the same value, $-\sin\theta = \cos\tfrac{\omega}{2} = \sin(\omega - \theta)$, and we have
\begin{equation}
\label{eq:appx_single_pole_margin_closed}
\Psi_\beta(\omega)
\;=\; \frac{(1-\beta)^2 \, \cos(\omega/2)}{1 - 2\beta\cos\omega + \beta^2}
\;>\; 0
\qquad \text{for } \omega \in (0, \pi),
\end{equation}
because $\beta < 1$ and $\cos(\omega/2) > 0$ for $\omega \in (0, \pi)$.
Also, the denominator $|1 - \beta e^{-i\omega}|^2 \ge (1-\beta)^2 > 0$.
The single pole therefore clears the critical line strictly, and it does so by a margin proportional to $(1-\beta)^2$: the closer $\beta$ is to $1$, the closer the filter comes to admitting an interior crossing.

Since $\Psi$ is $\R$-linear in $Q$, the mixture inherits the sign:
$\Psi(\omega) = \sum_j w_j \Psi_{\beta_j}(\omega) > 0$ on $(0,\pi)$, as every $w_j > 0$.
Hence $\Omega_+(Q) \cap (0, \pi) = \emptyset$.
At $\omega = \pi$ we have $\theta = 0$ and $Q(-1) = \sum_j w_j (1-\beta_j)/(1+\beta_j) > 0$ is real, so $\Psi(\pi) = 0$ and $k(\pi) = 2/Q(-1) \in \R_{>0}$, i.e.\ $\pi \in \Omega_+(Q)$.
Thus $\Omega_+(Q) = \{\pi\}$ and~(\ref{eq:appx_mixture_gauge}) follows from
Step 4 of Appendix~\ref{appx:proof:eos_prop}.
Finally $(1-\beta)/(1+\beta) \in (0, 1]$ with equality iff $\beta = 0$, and a convex combination of such numbers obeys the same bounds.
\end{proof}

\begin{remark}[Root locus on various filters]
\label{rem:appx_positivity}
The vast majority of existing gradient-based optimizers rely on one-pole momentum, either in Heavy Ball~\citep{polyak1964some} or Nesterov's Accelerated Gradient~\citep{nesterov1983method}.
As the following section shows, these are the main examples where Lemma~\ref{lem:appx_nyquist_mixture} applies.
The actual root loci of various optimizers with and without preconditioners are shown in Figure~\ref{fig:rootlocus} of the main manuscript and in subsequent figures in Appendix~\ref{sec:more_math}.
It is clear that (1) the root locus of optimizers using a single momentum parameter $\beta$ (or $\beta_1$) is identical and independent of their preconditioners, and (2) their crossings always appear at the Nyquist frequency.
\end{remark}

\subsection{Proof of Corollary~\ref{cor:cohen_separable_momentum_coefficient}}
\label{appx:proof:cohen_gamma}

All that remains is to write each of the widely-used optimizer filters in the form~(\ref{eq:appx_mixture}) and read off $Q(-1)$.
We use the convention that $\vu_t$ is the vector multiplying $\eta$ in~(\ref{eq:gradient_filter}) with $\mP = \mI$, and $\vm_t$ denotes the momentum buffer.

\begin{proof}
\medskip\noindent\textit{Heavy Ball.}
From $\vm_t = \beta \vm_{t-1} + \vg_t$ and $\vu_t = \vm_t$ we get $\vm_t = \sum_{k \ge 0}\beta^k \vg_{t-k}$, the geometric filter quoted below~(\ref{eq:gradient_filter}), so
\begin{equation}
\label{eq:appx_Q_hb}
Q^{\mathrm{HB}}(z) \;=\; \sum_{k \ge 0} \beta^k z^{-k} \;=\; \frac{1}{1 - \beta z^{-1}} ,
\qquad Q^{\mathrm{HB}}(1) = \frac{1}{1 - \beta} .
\end{equation}
Its dc-normalized version $Q^{\mathrm{nHB}} = (1-\beta) Q^{\mathrm{HB}}$ is exactly~(\ref{eq:appx_mixture}) with $J = 1$, $w_1 = 1$, $\beta_1 = \beta$, so Lemma~\ref{lem:appx_nyquist_mixture} applies and gives $\Gamma^{\mathrm{nHB}} = (1-\beta)/(1+\beta)$.
To recover unnormalized results, we apply Lemma~\ref{lem:appx_scaling} with $c = 1/(1-\beta)$ yielding $\Gamma^{\mathrm{HB}} = 1/(1+\beta)$.
Both equal the respective $Q(-1)$ and recover the results of~\citet{cohen2023adaptive}.

\medskip\noindent\textit{Nesterov's momentum.}
With $\vm_t = \beta \vm_{t-1} + \vg_t$ and $\vu_t = \vg_t + \beta \vm_t$,
\begin{equation}
\label{eq:appx_Q_nag}
Q^{\mathrm{NAG}}(z) \;=\; 1 + \frac{\beta}{1 - \beta z^{-1}} ,
\qquad Q^{\mathrm{NAG}}(1) = \frac{1}{1-\beta} .
\end{equation}
Multiplying by $(1-\beta)$ and splitting the constant term off as a pole at the origin,
\begin{equation}
\label{eq:appx_Q_nnag_mixture}
Q^{\mathrm{nNAG}}(z)
\;=\; (1-\beta) \cdot 1 \;+\; \beta \cdot \frac{1 - \beta}{1 - \beta z^{-1}} ,
\end{equation}
which is~(\ref{eq:appx_mixture}) with weights $(1-\beta, \beta)$ and poles $(0, \beta)$.
The weights are positive and sum to one for $\beta \in (0,1)$, so Lemma~\ref{lem:appx_nyquist_mixture} applies and
\begin{equation}
\label{eq:appx_gamma_nnag}
\Gamma^{\mathrm{nNAG}}
\;=\; (1-\beta) \cdot 1 + \beta \cdot \frac{1-\beta}{1+\beta}
\;=\; \frac{(1-\beta)(1 + 2\beta)}{1 + \beta} ,
\qquad
\Gamma^{\mathrm{NAG}} \;=\; \frac{1 + 2\beta}{1 + \beta} ,
\end{equation}
the second by Lemma~\ref{lem:appx_scaling}.
Again, both recover the results reported in~\citet{cohen2023adaptive}.

This proves Corollary~\ref{cor:cohen_separable_momentum_coefficient}: for both Heavy Ball and
Nesterov's momentum, normalized or not, $\Omega_+ = \{\pi\}$ and therefore
$\Gamma(\beta) = Q(-1)$, matching the values quoted below~(\ref{eq:edge_preconditioner}).
\end{proof}

Similar results on the remaining filters of Section~\ref{sec:observation}, such as QHM~\citep{ma2018quasi}, parallel dual momentum, and cascaded two-pole momenta~\citep{lee2024grokfast,pagliardini2025ademamix} are derived in Appendix~\ref{appx:proof:gamma_table} with their gauges summarized in Table~\ref{tab:appx_gauge_table}.
Also refer to the root locus plots in Appendix~\ref{sec:more_math} for visualization of these results.

\subsection{Proof of Corollary~\ref{cor:coincidence}}
\label{appx:proof:coincidence}

This subsection proves Corollary~\ref{cor:coincidence} and the GD reduction used in Theorem~\ref{thm:nonseparability}.
Fix an iteration and recall the definitions $A = \vg^\top \vu$, $B = \eta \vu^\top \bar{\mH}_{\eta,\vu} \vu / 2$, $\|\vu\|_{\mP}^2 = \vu^\top \mP \vu$,
\begin{equation}
\label{eq:appx_coincidence_defs}
\zeta \;=\; \frac{B}{A}, \qquad
\chi \;=\; \frac{\eta S \Gamma}{2}, \qquad
\varsigma \;=\; \frac{D^{\|\cdot\|_{\mP}}}{S} \;=\; \frac{\vu^\top \bar{\mH}_{\eta,\vu} \vu}{S \|\vu\|_{\mP}^2}, \qquad
\tau \;=\; \frac{\Gamma A}{\|\vu\|_{\mP}^2}.
\end{equation}
The assumptions $\eta, S, \Gamma > 0$, $\vu \neq \vzero$, and $A \neq 0$ ensure all four quantities are finite, with $\chi > 0$ and $\tau \neq 0$.
This identity provides additional insight into when the worst-case circumscription $\chi$ is fully realized along the actual optimization trajectory, which is summarized in the following proposition.

\begin{proof}
\medskip\noindent\textit{The decomposition is exact.}
Substituting~(\ref{eq:appx_coincidence_defs}) directly,
\begin{equation}
\label{eq:appx_exact_decomposition}
\chi \frac{\varsigma}{\tau}
\;=\; \frac{\eta S \Gamma}{2} \cdot \frac{\vu^\top \bar{\mH}_{\eta,\vu} \vu}{S \|\vu\|_{\mP}^2} \cdot \frac{\|\vu\|_{\mP}^2}{\Gamma A}
\;=\; \frac{\eta}{2} \frac{\vu^\top \bar{\mH}_{\eta,\vu} \vu}{\vg^\top \vu}
\;=\; \frac{B}{A} \;=\; \zeta.
\end{equation}
Every occurrence of $S$, $\Gamma$, and $\|\vu\|_{\mP}^2$ cancels, so~(\ref{eq:optimizer_aware_eos_factor}) is an identity: neither $\bar{\mH}_{\eta,\vu} \simeq \mH$ nor occupation of $\vv_{\max}$ is used anywhere in~(\ref{eq:appx_exact_decomposition}).
This is precisely why the decomposition constrains only the ratio $\varsigma / \tau$.

\medskip\noindent\textit{Coincidence condition.}
From~(\ref{eq:appx_exact_decomposition}) and $\chi > 0$, it is trivial that $\zeta = \chi$ if and only if $\varsigma = \tau$.
Clearing the common denominator $\|\vu\|_{\mP}^2 > 0$ and multiplying by $S > 0$,
\begin{equation}
\label{eq:appx_coincidence_primitive}
\zeta = \chi
\iff \varsigma = \tau
\iff \frac{\vu^\top \bar{\mH}_{\eta,\vu} \vu}{S} = \Gamma A
\iff \vu^\top \bar{\mH}_{\eta,\vu} \vu = S \Gamma \, \vg^\top \vu,
\end{equation}
which gives the coincidence conditio of (\ref{eq:appx_exact_decomposition}).

\medskip\noindent\textit{Conditions on the individual factors.}
Because $S > 0$, the definition of $\varsigma$ gives $\varsigma = 1 \iff \vu^\top \bar{\mH}_{\eta,\vu} \vu / \|\vu\|_{\mP}^2 = S$.
Because $\|\vu\|_{\mP}^2 > 0$, the definition of $\tau$ gives $\tau = 1 \iff \Gamma A = \vu^\top \mP \vu \iff \vu^\top (\Gamma \vg - \mP \vu) = 0$.
Sufficiency of $\varsigma = \tau = 1$ is immediate from~(\ref{eq:appx_exact_decomposition}).
Necessity fails, and Appendix~\ref{appx:remark:coincidence_counterexample} exhibits an admissible instance with $\varsigma = \tau \neq 1$.

\medskip\noindent\textit{Form at the realized edge.}
Setting $\zeta = 1$ in~(\ref{eq:appx_exact_decomposition}) and solving for $\chi$ gives $\chi = \tau / \varsigma$.
If $\bar{\mH}_{\eta,\vu} \simeq \mH$, then $D^{\|\cdot\|_{\mP}} \le S$ by the variational characterization of $S$ recalled in Section~\ref{sec:theory}, hence $\varsigma \le 1$ and $\chi \ge \tau$.

\medskip\noindent\textit{GD reduction.}
Take $\mP = I$ and $Q \equiv 1$. Then $\Gamma = Q(-1) = 1$ and $\vu = \vg$, so
$A = \vg^\top \vu = \|\vg\|^2$ and $\|\vu\|_{\mP}^2 = \|\vg\|^2$, hence $\tau = \Gamma A / \|\vu\|_{\mP}^2 = 1$.
The identity collapses to $\zeta = \chi \varsigma$ with $\chi = \eta S / 2$.
On a frozen quadratic, $\bar{\mH}_{\eta,\vu} = \mH$ and $\varsigma = \vg^\top \mH \vg / (S \|\vg\|^2)$, which equals $1$ if and only if $\vg$ is an eigen-direction for $\lambda_{\max}(\mH)$.
This is the occupation hypothesis of Section~\ref{sec:preliminary}, and is the content of the GD clause in Corollary~\ref{cor:coincidence}.
\end{proof}

\subsection{Proof of Corollary~\ref{cor:mode_locked_coincidence}}
\label{appx:proof:mode_locked}

\begin{proof}
\medskip\noindent\textit{Notations.}
Write $\vv = \vv_{\max}$ for the unit top eigenvector of $\mP^{-1/2} \mH \mP^{-1/2}$, so that
\begin{equation}
\label{eq:appx_ml_eigen}
\mP^{-1/2} \mH \mP^{-1/2} \vv \;=\; S \vv
\qquad\Longleftrightarrow\qquad
\mH \mP^{-1/2} \vv \;=\; S \mP^{1/2} \vv .
\end{equation}
This will be used throughout the proof.

\medskip\noindent\textit{The locked mode is invariant.}
By (Q1) the loss is quadratic over the horizon, so $\vg_s = \mH (\vtheta_s - \vtheta_\star)$.
With (Q3), we have $x_s \mH \mP^{-1/2} \vv = S x_s \mP^{1/2} \vv$ using~(\ref{eq:appx_ml_eigen}).
By (Q2) the preconditioner may be pulled out of the convolution, so the update~(\ref{eq:gradient_filter}) is
\begin{equation}
\label{eq:appx_ml_update}
\vu_t \;=\; \mP^{-1} (Q * \vg_{\le t})_t \;=\; S (Q * x_{\le t})_t \, \mP^{-1} \mP^{1/2} \vv \;=\; S (Q * x_{\le t})_t \, \mP^{-1/2} \vv .
\end{equation}
The update therefore stays in $\operatorname{span}(\mP^{-1/2}\vv)$, which is the locked mode.

\medskip\noindent\textit{Spatial factor.}
Abbreviate $q := (Q * x_{\le t})_t$ and use $\vv^\top \vv = 1$ together with~(\ref{eq:appx_ml_eigen}) to get:
\begin{equation}
\label{eq:appx_ml_quadratics}
\begin{aligned}
A \;&=\; \vg_t^\top \vu_t \;=\; S^2 x_t q, \\
\|\vu_t\|_{\mP}^2 \;&=\; \vu_t^\top \mP \vu_t \;=\; S^2 q^2,\\
\vu_t^\top \mH \vu_t \;&=\; S^2 q^2 \, \vv^\top \mP^{-1/2} \mH \mP^{-1/2} \vv \;=\; S^3 q^2 .
\end{aligned}
\end{equation}
Since $\bar{\mH}_{\eta,\vu} = \mH$ by (Q1), the spatial participation factor is
\begin{equation}
\label{eq:appx_ml_varsigma}
\varsigma \;=\; \frac{\vu_t^\top \bar{\mH}_{\eta,\vu} \vu_t}{S \|\vu_t\|_{\mP}^2} \;=\; \frac{S^3 q^2}{S \cdot S^2 q^2} \;=\; 1 .
\end{equation}
Therefore, the conditions (Q1)--(Q3) are sufficient for the spatial factor to be $\varsigma = 1$.
The temporal factor is $\tau = \Gamma A / \|\vu_t\|_{\mP}^2 = \Gamma S^2 x_t q / (S^2 q^2) = \Gamma x_t / q$, which recovers~(\ref{eq:mode_locked_tau}).

\medskip\noindent\textit{Temporal factor.}
By (Q4) the escape is at $z = -1$, so $\Gamma = Q(-1)$ by Proposition~\ref{prop:edge_preconditioner}, and the period-2 history $x_{t-k} = (-1)^k x_t$ gives
\begin{equation}
\label{eq:appx_ml_q}
q \;=\; \sum_{k \ge 0} Q_k x_{t-k} \;=\; \Big( \sum_{k \ge 0} Q_k (-1)^k \Big) x_t \;=\; Q(-1) \, x_t \;=\; \Gamma x_t .
\end{equation}
Hence $\tau = \Gamma x_t / q = 1$, and $A = S^2 x_t q = \Gamma S^2 x_t^2 > 0$ because $\Gamma > 0$ and $x_t \neq 0$, so $A \neq 0$ and the identity~(\ref{eq:appx_exact_decomposition}) applies.
Substituting $\varsigma = \tau = 1$ into the identity~(\ref{eq:appx_exact_decomposition}) gives $\zeta = \chi = \eta S \Gamma / 2 = \eta S Q(-1)/2$.
Note that~(\ref{eq:appx_ml_q}) is what requires the Nyquist frequency: a mode oscillating at any other escape frequency $\omega_\star$ contributes a phase shift, so no real gain satisfies $q = \Gamma x_t$ at every step.
\end{proof}

\subsection{A Counterexample to the Necessity of \texorpdfstring{$\varsigma = \tau = 1$}{sigma = tau = 1}}
\label{appx:remark:coincidence_counterexample}

The following instance lies entirely inside the framework of~(\ref{eq:gradient_filter}) and is exactly computable.
Take the quadratic objective $\gL(\vtheta) = \tfrac{1}{2} \vtheta^\top \mH \vtheta$ on $\R^2$ with $\mH = \operatorname{diag}(4, 1)$, so that $\bar{\mH}_{\eta,\vu} = \mH$ \emph{exactly} and $\vv_{\max} = \ve_1$.
Take $\mP = \mI$, so $S = 4$, and take the dc-normalized Heavy Ball filter $Q_k = (1 - \beta) \beta^k$ with $\beta = 1/2$, so $Q(1) = 1$ and $\Gamma = Q(-1) = (1 - \beta)/(1 + \beta) = 1/3$.
Let $\eta = 1/10$ and start from a zero momentum buffer, so that $\vu_0 = (1 - \beta) \vg_0$ and
\begin{equation}
\label{eq:appx_ce_update}
\vg_1 \;=\; (\mI - \eta (1 - \beta) \mH) \vg_0 \;=\; \operatorname{diag}(\tfrac{4}{5}, \tfrac{19}{20}) \vg_0,
\qquad
\vu_1 \;=\; \operatorname{diag}(\tfrac{13}{20}, \tfrac{29}{40}) \vg_0 .
\end{equation}
All four quantities are invariant to the scale of $\vg_0$, so only its direction matters.
Reading them at $t = 1$:
\begin{equation}
\label{eq:appx_ce_endpoints}
\begin{aligned}
\vg_0 = \ve_1 : \quad &\varsigma = 1, &\tau &= \tfrac{16}{39}, &\chi &= \tfrac{1}{15}, &\zeta &= \tfrac{13}{80}; \\
\vg_0 = \ve_2 : \quad &\varsigma = \tfrac{1}{4}, &\tau &= \tfrac{38}{87}, &\chi &= \tfrac{1}{15}, &\zeta &= \tfrac{29}{760}.
\end{aligned}
\end{equation}
Two conclusions can made from this.
First, the case $\vg_0 = \ve_1$ satisfies every geometric condition one would attach to coincidence, namely $\bar{\mH}_{\eta,\vu} = \mH$ exactly and $\vu_1 \in \operatorname{span}(\vv_{\max})$, hence $\varsigma = 1$; yet $\zeta / \chi = 39/16 \neq 1$, because $\Gamma$ is the loop gain at the root-locus escape frequency while the trajectory at $t = 1$ is not at that frequency.
Second, writing $\vg_0(\phi) = (\cos\phi, \sin\phi)$, the map $\phi \mapsto \varsigma - \tau$ is continuous on $[0, \pi/2]$ with $\varsigma - \tau = 23/39 > 0$ at $\phi = 0$ and $\varsigma - \tau = -65/348 < 0$ at $\phi = \pi/2$, so it has a root $\phi_\star \in (0, \pi/2)$.
At $\phi_\star$ we have $\varsigma = \tau$ and therefore $\zeta = \chi$, while $\varsigma < 1$ because $\varsigma = 1$ forces $\vu_1 \in \operatorname{span}(\ve_1)$, i.e. $\phi = 0$.
Numerically $\phi_\star = 1.01024954$ and $\varsigma = \tau = 0.43040137$, giving $\zeta = \chi = 1/15$ to ten significant digits.
Hence $\varsigma = \tau = 1$ is not necessary for $\zeta = \chi$.

\subsection{Proof of Theorem~\ref{thm:nonseparability}}
\label{appx:proof:nonseparability}

\begin{proof}
\medskip\noindent\textit{From the identity.}
Corollary~\ref{cor:coincidence} gives $\zeta = \chi \varsigma / \tau$ and $\zeta = \chi$ if and only if $\varsigma = \tau$.
The scalar $\chi = \eta S \Gamma / 2$ is a function of $(\eta, S, \Gamma)$ alone.
Therefore $\zeta$ is a function of $(\eta, S, \Gamma)$ if and only if the ratio $\varsigma / \tau$ is.

\medskip\noindent\textit{GD reduction.}
For $\mP = I$ and $Q \equiv 1$, the GD clause of Corollary~\ref{cor:coincidence} gives $\Gamma = 1$, $\tau \equiv 1$, and $\zeta = \chi \varsigma$ with $\chi = \eta S / 2$.
Hence $\chi$ determines $\zeta$ if and only if occupation $\varsigma$ is a function of $(\eta, S)$ alone.

\medskip\noindent\textit{Occupation is not a function of $S$.}
On a frozen quadratic, $\bar{\mH}_{\eta,\vu} = \mH$ and $\vu = \vg$, so
$\varsigma = \vg^\top \mH \vg / (S \|\vg\|^2)$.
This equals $1$ if and only if $\vg$ lies in the top eigenspace of $\mH$.
In dimension at least $2$ that is not implied by $S = \lambda_{\max}(\mH)$.
The instance $\mH = \operatorname{diag}(4,1)$ has $S = 4$ independently of $\vg$, but $\varsigma = 1$ along $\ve_1$ and $\varsigma = 1/4$ along $\ve_2$.
The same $(\eta, S, \Gamma = 1)$ therefore yields two values of $\zeta$.
\end{proof}

\section{Gauge Derivations}
\label{sec:more_math}
Proposition~\ref{prop:edge_preconditioner} assigns a scalar gauge $\Gamma := 2 / k_\star(Q)$ to every causal filter $Q$.
It is the loop gain at the first positive-gain unit-circle crossing of the root locus, abbreviated as the \emph{escape gain} in the main text.
Corollary~\ref{cor:cohen_separable_momentum_coefficient} is the special case $\Gamma = Q(-1)$, which holds for Heavy Ball and Nesterov's momentum and is proved in Appendix~\ref{appx:proof:cohen_gamma} from Lemmas~\ref{lem:appx_scaling} and~\ref{lem:appx_nyquist_mixture}.
This additional section extends this to include $Q(z)$ and $\Gamma$ for every optimizer that appears in the main text, especially those with $\Gamma \neq Q(-1)$, which are the filters that violate hypothesis (Q4) of Corollary~\ref{cor:mode_locked_coincidence} on the filter side.

\begin{table}[t]
\centering
\small
\setlength{\tabcolsep}{4.5pt}
\caption{
\small
Root-locus gauge of the \emph{one-pole and parallel} optimizer filters.
Every dc-normalized entry is a positive one-pole mixture~(\ref{eq:appx_mixture}), so Lemma~\ref{lem:appx_nyquist_mixture} certifies $\Omega_+ = \{\pi\}$ and $\Gamma = Q(-1)$.
Cascaded two-pole momenta are \emph{not} in this table.
The last column recovers the results from \citet{cohen2023adaptive}.
}
\label{tab:appx_gauge_table}
\begin{tabular}{@{}lcccc@{}}
\toprule
Optimizer & $Q(z)$ & $Q(1)$ & mixture $(w_j; \beta_j)$ & $\Gamma = Q(-1)$ \\
\midrule
GD
& $1$
& $1$
& $(1; 0)$
& $1$ \\[2pt]
Heavy Ball
& $\dfrac{1}{1 - \beta z^{-1}}$
& $\dfrac{1}{1-\beta}$
& ---
& $\dfrac{1}{1+\beta}$ \\[8pt]
normalized HB
& $\dfrac{1-\beta}{1 - \beta z^{-1}}$
& $1$
& $(1; \beta)$
& $\dfrac{1-\beta}{1+\beta}$ \\[8pt]
Nesterov
& $1 + \dfrac{\beta}{1 - \beta z^{-1}}$
& $\dfrac{1}{1-\beta}$
& ---
& $\dfrac{1+2\beta}{1+\beta}$ \\[8pt]
normalized Nesterov
& $(1-\beta) + \dfrac{\beta(1-\beta)}{1 - \beta z^{-1}}$
& $1$
& $(1-\beta, \beta; 0, \beta)$
& $\dfrac{(1-\beta)(1+2\beta)}{1+\beta}$ \\[8pt]
QHM, $\nu \in [0,1]$
& $(1-\nu) + \dfrac{\nu(1-\beta)}{1 - \beta z^{-1}}$
& $1$
& $(1-\nu, \nu; 0, \beta)$
& $\dfrac{1 + \beta(1-2\nu)}{1+\beta}$ \\[8pt]
parallel dual momentum, $\alpha > 0$
& $\displaystyle \sum_{j=1,2} \frac{w_j (1-\beta_j)}{1 - \beta_j z^{-1}}$
& $1$
& $\dfrac{(1, \alpha)}{1+\alpha}; (\beta_1, \beta_2)$
& $\displaystyle \sum_{j=1,2} w_j \frac{1-\beta_j}{1+\beta_j}$ \\
\bottomrule
\end{tabular}
\end{table}

\subsection{QHM and Parallel Dual Momentum}
\label{appx:proof:gamma_table}

We use the convention that $\vu_t$ is the vector multiplying $\eta$ in~(\ref{eq:gradient_filter}) with $\mP = \mI$, and $\vm_t$ denotes the momentum buffer.
Every dc-normalized filter in this subsection is a positive one-pole mixture~(\ref{eq:appx_mixture}), so Lemma~\ref{lem:appx_nyquist_mixture} applies and gives $\Omega_+ = \{\pi\}$ and $\Gamma = Q(-1)$.

\paragraph{QHM.}
QHM~\citep{ma2018quasi} uses the normalized buffer $\vm_t = \beta\vm_{t-1} + (1-\beta)\vg_t$ and $\vu_t = (1-\nu)\vg_t + \nu \vm_t$, which is~(\ref{eq:appx_mixture}) with weights $(1-\nu, \nu)$ and poles $(0, \beta)$; it is already dc-normalized, i.e., $Q(1) = 1$.
For $\nu \in (0,1)$ the weights are strictly positive.
Lemma~\ref{lem:appx_nyquist_mixture} gives
\begin{equation}
\label{eq:appx_gamma_qhm}
\Gamma^{\mathrm{QHM}}(\beta, \nu) \;=\; \frac{1 + \beta(1 - 2\nu)}{1 + \beta} .
\end{equation}
The endpoints $\nu = 0$ and $\nu = 1$ are one-pole reductions (GD and normalized Heavy Ball), both still with $\Gamma = Q(-1)$.
Two special cases are immediate from~(\ref{eq:appx_gamma_qhm}) and~(\ref{eq:appx_Q_nnag_mixture}):
$\nu = 1$ recovers normalized Heavy Ball, and $\nu = \beta$ recovers \emph{normalized Nesterov}, since $1 + \beta - 2\beta^2 = (1-\beta)(1+2\beta)$.
The experimental QHM arm of Section~\ref{sec:observation} uses $\nu = 0.7 \in (0,1)$, so it remains in this Nyquist class with $\Gamma = Q(-1)$.

\begin{figure}[t]
\centering
\includegraphics[width=0.7\linewidth]{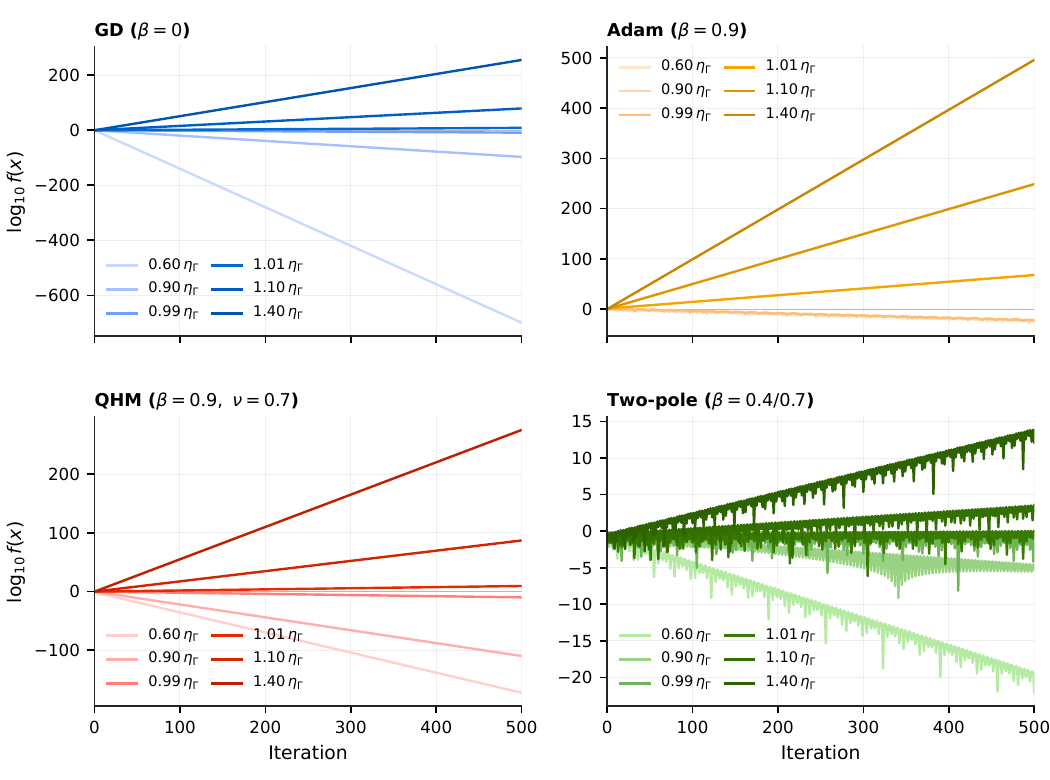}
\vspace{-1em}
\caption{\small
We optimize a quadratic loss $f(x) = \tfrac12 x^2$ with four types of optimizers: GD, Adam, QHM, and cascade two-pole at various learning rates $\eta$.
Observe that the stability bound of $\chi = \eta S \Gamma / 2 \le 1$ is always satisfied for this quadratic loss.
}
\label{fig:gaugetest}
\end{figure}

\begin{table}[t]
\centering
\small
\setlength{\tabcolsep}{3.5pt}
\caption{\small 
Predicted and observed quadratic thresholds.
$\Gamma_{\mathrm{exp}}$ is the experiment-report gauge.
The observed $\eta$ is the last stable and first unstable $\eta$ on a
$0.005$-relative grid.
$605$ sweep runs in total.}
\label{tab:appx_gamma_thresholds}
\begin{tabular}{@{}lccccc@{}}
\toprule
Filter & $\Gamma_{\mathrm{exp}}$ / 2 & $2/\Gamma_{\mathrm{exp}}$ & $2/Q(-1)$ & $\omega_\star/\pi$ & observed $\eta$ \\
\midrule
GD
  & $0.5$ & $2$ & $2$ & $1.000$ & $[2,\,2.01]$ \\
EMA-HB $\beta=0.9$
  & $0.02632$ & $38$ & $38$ & $1.000$ & $[37.81,\,38]$ \\
QHM $\beta=0.9,\nu=0.7$
  & $0.1684$ & $5.9375$ & $5.9375$ & $1.000$ & $[5.908,\,5.938]$ \\
two-pole $0.4/0.7$
  & $0.3889$ & $2.571$ & $26.44$ & $0.263$ & $[2.571,\,2.584]$ \\
two-pole $0.7/0.85$
  & $1.469$ & $0.681$ & $139.8$ & $0.088$ & $[0.681,\,0.684]$ \\
\bottomrule
\end{tabular}
\end{table}

\paragraph{Parallel Dual Momentum.}
Dual momentum in its \emph{parallel} realization, two buffers at $\beta_1 \neq \beta_2$ summed as $\vu_t = (\vm^{(1)}_t + \alpha \vm^{(2)}_t)/(1+\alpha)$, is again~(\ref{eq:appx_mixture}), now with $J = 2$, weights $(1, \alpha)/(1+\alpha)$ and poles $(\beta_1, \beta_2)$, so for $\alpha > 0$
Lemma~\ref{lem:appx_nyquist_mixture} applies verbatim, with no restriction on the number of poles.
Table~\ref{tab:appx_gauge_table} collects these results.
In our text, we did not use this construction for two-pole momenta (see the next section), but we included them in the table for completeness.

\subsection{Two-Pole Momentum: the Gauge is No Longer $Q(-1)$}
\label{appx:proof:two_pole}

Everything so far has been a \emph{parallel} construction: gradients are filtered by several one-pole blocks and the outputs are added.
The two-pole filters~\citep{lee2024grokfast} of Section~\ref{sec:observation} are instead built by \emph{cascading} a momentum buffer with a second one-pole block.
Define the two-pole momentum family, dc-normalized, with $\beta_1, \beta_2 \in (0,1)$ and $\alpha \ge 0$,
\begin{equation}
\label{eq:appx_two_pole_family}
Q_\alpha(z)
\;=\; \underbrace{\frac{1 - \beta_1}{1 - \beta_1 z^{-1}}}_{\text{momentum buffer}}
\cdot \underbrace{\frac{1}{1+\alpha}\left[ 1 + \frac{\alpha (1 - \beta_2)}{1 - \beta_2 z^{-1}} \right]}_{\text{second one-pole block}} ,
\end{equation}
so that $Q_\alpha(1) = 1$, the poles are $\beta_1$ and $\beta_2$, and there is one finite zero.
The parameter $\alpha$ interpolates between the two limits of interest: at $\alpha = 0$ the zero cancels the second pole and $Q_0$ is normalized Heavy Ball, while as $\alpha \to \infty$ the zero moves to the origin and $Q_\infty$ is the pure double-EMA cascade $Q_\infty(z) = \prod_{j=1,2} (1-\beta_j)/(1 - \beta_j z^{-1})$.

\begin{figure}[p]
\centering
\vspace*{\fill}
\includegraphics[width=\linewidth]{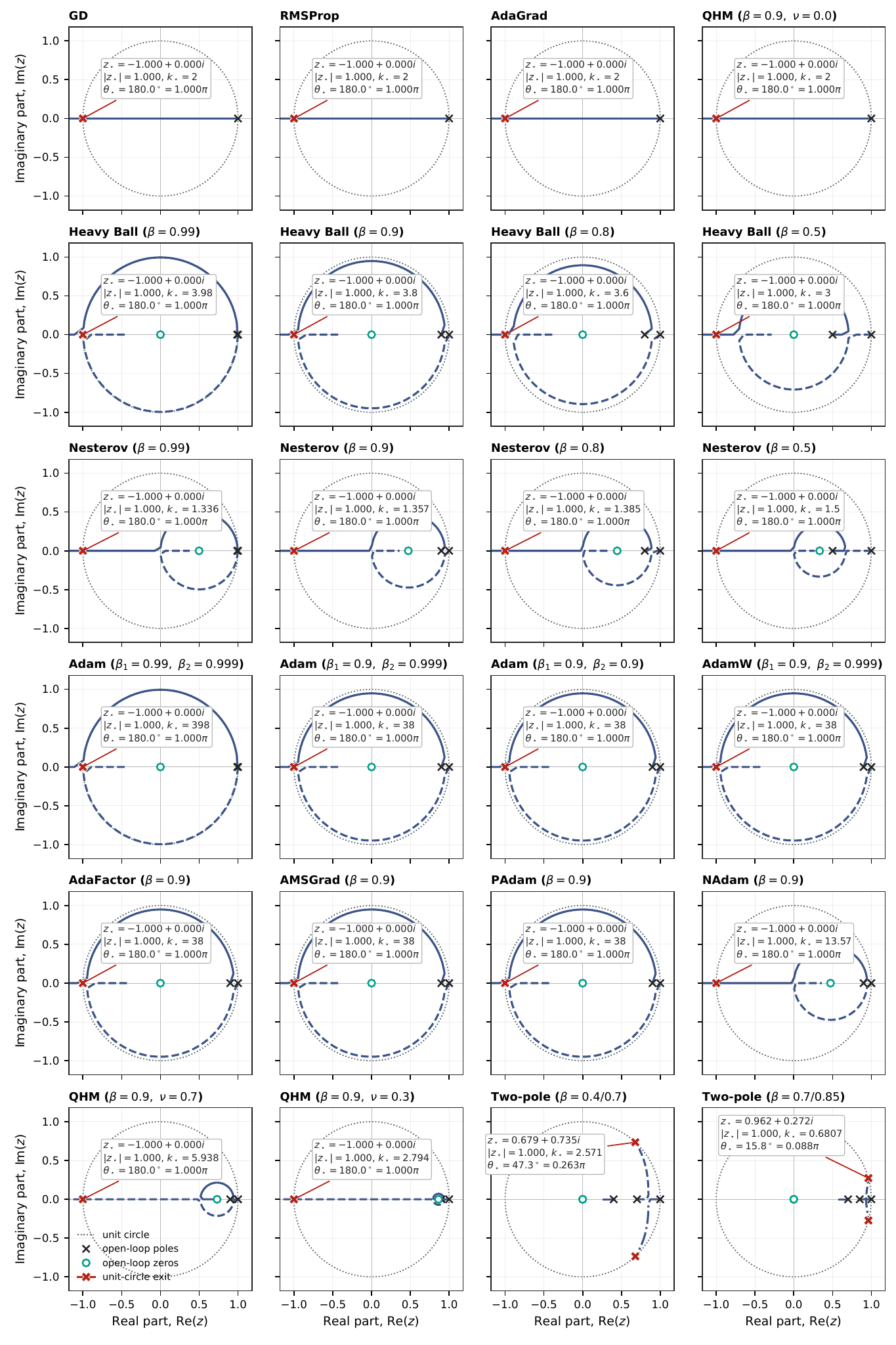}
\vspace{-1em}
\caption{\small
Complete root locus for the filters studied in this paper, extending Figure~\ref{fig:rootlocus} in the main text to all filters.}
\label{fig:rootlocusfull}
\vspace*{\fill}
\end{figure}

One EMA pole cannot reach a $90^\circ$ lag according to~(\ref{eq:appx_single_pole_margin_closed}).
A parallel sum stays inside the same cone, and no interior crossing is possible.
A cascade, by contrast, \emph{adds} lags, and two poles can jointly exceed the critical line.
Making this quantitative near $\omega = \pi$ already gives the exact threshold: writing $\omega = \pi - \epsilon$ and $\beta_1 = \beta_2 = \beta$, the cascade lag is $2\varphi_\beta(\pi-\epsilon) = 2\beta\epsilon/(1+\beta) + O(\epsilon^2)$ while the required lag is $\epsilon/2$, so an interior crossing detaches from $\omega = \pi$ if and only if
\begin{equation}
\label{eq:appx_third_threshold}
\frac{2\beta}{1+\beta} \;>\; \frac{1}{2}
\qquad\Longleftrightarrow\qquad
\beta \;>\; \tfrac{1}{3} .
\end{equation}
For two-pole momenta, we numerically evaluate the gauge and plot it in Figure~\ref{fig:rootlocusfull}.

\subsection{Stability Bound for Fixed Quadratic Loss}
\label{appx:proof:quadratic_toy}

To demonstrate that Proposition~\ref{prop:edge_preconditioner} yields the exact stability bound for a fixed quadratic loss, we conduct simple numerical experiments.
Four types of optimizers—GD, Adam, QHM, and cascade two-pole—are trained on a quadratic loss $f(x) = \tfrac12 x^2$ with various learning rates $\eta$.
Figure~\ref{fig:gaugetest} summarizes the results.
In the legend, we use the relative learning rate $\eta / \eta_{\text{r}}$, where $\eta_{\text{r}} = 2 / (S\Gamma)$ is determined by the stability condition $\chi = \eta S \Gamma / 2 \le 1$ in Proposition~\ref{prop:edge_preconditioner}.
For the two-pole momentum optimizer, we calculate the numerically found gauge, as shown in the root locus plot in Figure~\ref{fig:rootlocusfull}, which is not the Nyquist gain, i.e., $\Gamma \neq Q(-1)$.
Regardless of optimizer type, the stability bound is always satisfied on this fixed quadratic loss.

\begin{table}[t]
\centering
\footnotesize
\setlength{\tabcolsep}{3.5pt}
\caption{\small
Temporal filters of the executed families, in the paper-normalized convention $\chi=\eta S\Gamma/2$ of Appendix~\ref{appx:proof:gamma_table}.
$k_\star$ is the unit-curvature learning-rate edge.
$\omega_\star$ is the first unit-circle crossing in radians.
Adam, AdamW, Adafactor, AMSGrad and PAdam share one EMA filter; RMSProp,
AdaGrad and GD share the memoryless filter.
Note that two-pole momentum cases have $\Gamma \neq Q(-1)$.
}
\label{tab:appx_transfer}
\resizebox{\linewidth}{!}{
\begin{tabular}{@{}lcccccc@{}}
\toprule
Filter & Families & $Q(1)$ & $Q(-1)$ & $\Gamma$ & $k_\star$ & $\omega_\star/\pi$ \\
\midrule
GD
  & GD, RMSProp, AdaGrad
  & $1$ & $1$ & $1$ & $2$ & $1$ \\
EMA $\beta=0.9$
  & Adam, AdamW, Adafactor, AMSGrad, PAdam
  & $1$ & $0.05263$ & $0.05263$ & $38$ & $1$ \\
NAdam $\beta=0.9$
  & NAdam
  & $1$ & $0.1474$ & $0.1474$ & $13.57$ & $1$ \\
HB $\beta=0.9$
  & Heavy Ball
  & $10$ & $0.5263$ & $0.5263$ & $3.8$ & $1$ \\
NAG $\beta=0.9$
  & Nesterov
  & $10$ & $1.474$ & $1.474$ & $1.357$ & $1$ \\
QHM $\beta=0.9,\nu=0.7$
  & QHM
  & $1$ & $0.3368$ & $0.3368$ & $5.938$ & $1$ \\
two-pole $0.4/0.7$
  & Grokfast-style
  & $1$ & $0.07563$ & $0.7778$ & $2.571$ & $0.263$ \\
two-pole $0.7/0.85$
  & Grokfast-style
  & $1$ & $0.01431$ & $2.938$ & $0.681$ & $0.088$ \\
\bottomrule
\end{tabular}
}
\end{table}

\subsection{Complete Root Locus}
\label{appx:proof:complete_root_locus}

Complete root loci for the filters studied in this paper are shown in Figure~\ref{fig:rootlocusfull}, extending the demonstration in Figure~\ref{fig:rootlocus} in the main text to all filters.
See also Remark~\ref{rem:appx_positivity} in the previous section for a more detailed interpretation of the results.

\section{Stabilization from High-Order Terms}
\label{sec:third_order}
We begin by noting that the realized EoS~(\ref{eq:ray_stability_boundary}) is already exact.
Section~\ref{sec:theory} expresses the residual quantity in terms of a quadratic load $\vu^\top\bar{\mH}_{\eta,\vu}\vu$ and occasionally interprets this as an approximation of the second Taylor term $\bar{\mH}_{\eta,\vu}\simeq\mH$.
However, this simplification is only for our conceptual understanding of the quantity.
It does \emph{not} affect the exactness of our realized EoS $\zeta$.
This section provides a more detailed analysis to support this simplification.
Higher-order geometry does not explain \emph{why} the edge is occupied.
It determines \emph{how} an occupied quadratic load is realized as a finite step, and whether that realization rescues or amplifies a one-step crossing of $\zeta=1$.

\subsection{Exact nonlinear realization}
\label{appx:to:expansion}

Fix a step $\vtheta^+:=\vtheta-\eta\vu$ with $A:=\vg^\top\vu>0$ and $H_\vu:=\vu^\top\mH\vu>0$.
The quadratic and exact secant expenditures are defined as
\begin{equation}
\label{eq:appx_to_expenditures}
B_2 \;:=\; \frac{\eta}{2}\,H_\vu ,
\qquad
B
\;:=\;
A + \frac{\gL(\vtheta^+)-\gL(\vtheta)}{\eta}
\;=\;
\frac{\eta}{2}\,\vu^\top\bar{\mH}_{\eta,\vu}\vu ,
\end{equation}
the second equality being~(\ref{eq:curvature_load_chain_rule}).
We also define the corresponding loads $\zeta_2:=B_2/A$ and $\zeta:=B/A$.
Let the \emph{nonlinear realization factor} be the ratio of the two,
\begin{equation}
\label{eq:appx_to_h}
h \;:=\; \frac{\zeta}{\zeta_2} \;=\; \frac{B}{B_2} ,
\end{equation}
so that, identically,
\begin{equation}
\label{eq:appx_to_factorization}
\zeta \;=\; \zeta_2\, h .
\end{equation}
Note that the spatial factor $\varsigma$ of~(\ref{eq:optimizer_aware_eos_factor}) is built from the \emph{secant} Hessian, hence already contains $h$.
Write
\begin{equation}
\label{eq:appx_to_varsigma2}
\varsigma_2
\;:=\;
\frac{\vu^\top\mH\vu}{S\|\vu\|_{\mP}^2}
\end{equation}
for the quadratic occupancy of the same ray.
Then $\varsigma=\varsigma_2 h$, and the main-text factorization splits as
\begin{equation}
\label{eq:appx_to_hierarchy}
\zeta_2 \;=\; \chi\,\frac{\varsigma_2}{\tau} ,
\qquad
\zeta
\;=\; \chi\,\frac{\varsigma_2}{\tau}\,h .
\end{equation}
Thus we can further interpret that (1) $\chi$ is \emph{available load}, (2) $\varsigma_2/\tau$ occupies a \emph{quadratic-only load} along $\vu$, and (3) $h$ \emph{realizes} that load as a finite step.
We need not apply the correction factor $h$ to the main-text $\varsigma$ again, which is already exact as $\varsigma=\varsigma_2 h$.
Multiplying the main-text $\varsigma$ by a further $h$ would double-count the same correction.

Since we are dealing with directional derivatives along the update direction $\vu$ actuated by the optimizer, the factor $h$ is a Taylor series of one \emph{scalar}, not a collection of tensors.
Write $T_\vu:=\nabla^3\gL[\vu]^3$, $F_\vu:=\nabla^4\gL[\vu]^4$, $P_\vu:=\nabla^5\gL[\vu]^5$, and
\begin{equation}
\label{eq:appx_to_thetas}
\theta_3 := \frac{\eta\, T_\vu}{3 H_\vu} , \qquad
\theta_4 := \frac{\eta^2 F_\vu}{12 H_\vu} , \qquad
\theta_5 := \frac{\eta^3 P_\vu}{60 H_\vu} .
\end{equation}

\begin{lemma}[Secant Hessian expansion]
\label{lem:appx_to_secant}
Let $\gL$ be $C^{m+2}$ on a neighbourhood of the executed segment $\{\vtheta-s\eta\vu:s\in[0,1]\}$.
(If the activation is piecewise linear, this requires that the segment not cross a kink.)
Then the weighted secant Hessian~(\ref{eq:weighted_secant_hessian}) satisfies
\begin{equation}
\label{eq:appx_to_secant_series}
\bar{\mH}_{\eta,\vu}
\;=\;
2\sum_{k=0}^{m} \frac{(-\eta)^k}{(k+2)!}\, \nabla^{k+2}\gL(\vtheta)[\vu]^k
\;+\; O(\eta^{m+1}) ,
\end{equation}
where $\nabla^{j}\gL[\vu]^{k}$ denotes the $j$-th derivative tensor contracted $k$ times with $\vu$.
\end{lemma}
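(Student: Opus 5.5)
The plan is to Taylor-expand the Hessian inside the integral defining $\bar{\mH}_{\eta,\vu}$ and then integrate term by term against the weight $2(1-s)$. Since $\gL\in C^{m+2}$ near the segment, the map $s\mapsto \nabla^2\gL(\vtheta - s\eta\vu)$ is $C^{m}$ on $[0,1]$. Taylor's theorem in $s$ at $s=0$ gives
\begin{equation*}
\nabla^2\gL(\vtheta - s\eta\vu) \;=\; \sum_{k=0}^{m} \frac{(-s\eta)^k}{k!}\,\nabla^{k+2}\gL(\vtheta)[\vu]^k \;+\; R_m(s).
\end{equation*}
Here the $k$-th derivative in $s$ of $\nabla^2\gL(\vtheta - s\eta\vu)$ is $(-\eta)^k\nabla^{k+2}\gL[\vu]^k$ by the chain rule.

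For the remainder, I will use the integral (or Lagrange) form. It reads $R_m(s) = \frac{(-s\eta)^{m+1}}{(m+1)!}\nabla^{m+3}\gL(\cdot)[\vu]^{m+1}$ if one more derivative is available. Under exactly $C^{m+2}$, I will instead use the Peano form $o(\eta^m)$, or the mean-value form at order $m$ with the difference of $(m+2)$-th derivatives. This is the only delicate point. To obtain the stated $O(\eta^{m+1})$, I would either read the hypothesis as giving bounded $(m+3)$-th derivatives on a compact neighbourhood, or note that the $k=m$ term can be absorbed so that the remainder is $O(\eta^{m+1})$. I expect fixing this regularity bookkeeping to be the main obstacle. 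In either case, the remainder is uniformly bounded by a constant times $\eta^{m+1}\|\vu\|^{m+1}$ on the compact segment, with $\vu$ held fixed as $\eta$ varies.

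Next, I will substitute the expansion into $\bar{\mH}_{\eta,\vu} = 2\int_0^1(1-s)\nabla^2\gL(\vtheta-s\eta\vu)\,ds$ and use the Beta integral
\begin{equation*}
\int_0^1 (1-s)s^k\,ds \;=\; \frac{k!\,1!}{(k+2)!} \;=\; \frac{1}{(k+1)(k+2)}.
\end{equation*}
Each term then becomes $2\frac{(-\eta)^k}{k!}\cdot\frac{k!}{(k+2)!}\nabla^{k+2}\gL[\vu]^k = 2\frac{(-\eta)^k}{(k+2)!}\nabla^{k+2}\gL[\vu]^k$, which is exactly the claimed coefficient.

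The integrated remainder is bounded by $2\sup_s\|R_m(s)\|\int_0^1(1-s)\,ds = O(\eta^{m+1})$.

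As a sanity check, I will verify consistency with the definitions of $\theta_3,\theta_4,\theta_5$. Contracting with $\vu$ twice and dividing by $H_\vu$ gives $\vu^\top\bar{\mH}\vu/H_\vu = 1 - \theta_3 + \theta_4 - \theta_5 + \dots$. This holds because $2/3! = 1/3$, $2/4! = 1/12$, and $2/5! = 1/60$, which match the normalizations in~(\ref{eq:appx_to_thetas}).
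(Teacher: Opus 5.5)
Your argument is the same as the paper's: Taylor-expand $\nabla^2\gL(\vtheta-s\eta\vu)$ in $s$, integrate against $2(1-s)$ using $\int_0^1(1-s)s^k\,ds=1/((k+1)(k+2))$, and read off the coefficient $2(-\eta)^k/(k+2)!$. The paper writes the full series $\sum_{k\ge 0}$ and never handles a remainder, so your version is the more careful one.

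Your regularity worry is justified, but only your first fix works. Under exactly $C^{m+2}$, the remainder is only $o(\eta^m)$. This holds whether you use the Peano form or the mean-value form with $\nabla^{m+2}\gL(\xi_s)-\nabla^{m+2}\gL(\vtheta)$. ``Absorbing the $k=m$ term'' cannot upgrade this to $O(\eta^{m+1})$.

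In fact the stated rate can fail under the stated hypothesis. Take $\gL(\theta)=|\theta|^{5/2}$ on $\R$, which is $C^2$, with $\vtheta=0$, $\vu=1$ and $m=0$. Then $\bar{\mH}_{\eta,\vu}-\nabla^2\gL(0)=2\eta^{1/2}$, which is not $O(\eta)$.

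So drop the ``in either case'' sentence. Instead assume $C^{m+3}$, or a Lipschitz $\nabla^{m+2}\gL$, on the neighbourhood. Your bound $C\eta^{m+1}\|\vu\|^{m+1}$ then holds, and this matches how Corollary~\ref{cor:optimizer_aware_eos_approximation} treats the $m=0$ case with a Lipschitz Hessian.
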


\begin{proof}
Expand $\nabla^2\gL(\vtheta-s\eta\vu)=\sum_{k\ge0}\frac{(-s\eta)^k}{k!}\nabla^{k+2}\gL(\vtheta)[\vu]^k$ and integrate against the kernel $2(1-s)$ using $2\int_0^1(1-s)s^k\,ds=\frac{2}{(k+1)(k+2)}$.
The coefficient of $\nabla^{k+2}\gL[\vu]^k$ is therefore $\frac{2(-\eta)^k}{k!\,(k+1)(k+2)}=\frac{2(-\eta)^k}{(k+2)!}$.
\end{proof}

\begin{proposition}[Impedance series and its loss-only form]
\label{prop:appx_to_series}
With $A\neq 0$,
\begin{equation}
\label{eq:appx_to_zeta_series}
\zeta
\;=\;
\sum_{j\ge 2} \frac{(-1)^j \eta^{j-1}}{j!\,A}\, \nabla^j \gL(\vtheta)[\vu]^j
\;=\;
\zeta_2\bigl(1-\theta_3+\theta_4-\theta_5+O(\eta^4)\bigr) ,
\end{equation}
and therefore $h=1-\theta_3+\theta_4-\theta_5+O(\eta^4)$.
Moreover, with $\Delta\gL:=\gL(\vtheta-\eta\vu)-\gL(\vtheta)$,
\begin{equation}
\label{eq:appx_to_loss_form}
\zeta-1
\;=\;
\frac{\Delta\gL}{\eta A}
\;=\;
\frac{\gL(\vtheta-\eta\vu)-\gL(\vtheta)}{\eta\,\vg^\top\vu} ,
\end{equation}
so~(\ref{eq:appx_to_zeta_series}) expands a quantity that is exactly the one-step loss change normalized by the learning power.
\end{proposition}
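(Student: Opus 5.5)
The loss-only form (\ref{eq:appx_to_loss_form}) comes first because it is exact and needs no expansion. By definition $\zeta = B/A$. The identity (\ref{eq:curvature_load_chain_rule}), restated as the second equality in (\ref{eq:appx_to_expenditures}), gives $B = A + \Delta\gL/\eta$. Dividing by $A \neq 0$ yields $\zeta - 1 = \Delta\gL/(\eta A)$.

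For the series, I will work with the one-dimensional function $\phi(s) := \gL(\vtheta - s\eta\vu)$, which avoids tensor bookkeeping. Its derivatives at $s=0$ are
\[
\phi^{(j)}(0) = (-\eta)^j \nabla^j\gL(\vtheta)[\vu]^j .
\]
Taylor-expanding $\phi(1) - \phi(0) = \Delta\gL$ gives
\[
\Delta\gL = \sum_{j\ge1} \frac{(-\eta)^j}{j!}\nabla^j\gL[\vu]^j .
\]
The $j=1$ term equals $-\eta\,\vg^\top\vu = -\eta A$. Substituting into $\zeta = 1 + \Delta\gL/(\eta A)$, this term cancels the leading $1$. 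What remains is
\[
\zeta = \sum_{j\ge2} \frac{(-1)^j\eta^{j-1}}{j!\,A}\nabla^j\gL[\vu]^j ,
\]
which is the first equality in (\ref{eq:appx_to_zeta_series}).

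As a consistency check, the same result follows from Lemma~\ref{lem:appx_to_secant}. Contract (\ref{eq:appx_to_secant_series}) twice with $\vu$, multiply by $\eta/(2A)$, and reindex with $j = k+2$. Each term $\nabla^{k+2}\gL[\vu]^{k+2}$ then carries the coefficient $(-1)^k\eta^{k+1}/((k+2)!\,A)$. This matches the direct expansion, since $(-1)^k = (-1)^j$.

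Next I factor out $\zeta_2 = \eta H_\vu/(2A)$, the $j=2$ term. For $j = 3,4,5$ the ratio of the $j$-th term to $\zeta_2$ is
\[
\frac{2(-1)^j\eta^{j-2}\nabla^j\gL[\vu]^j}{j!\,H_\vu} .
\]
These give $-\eta T_\vu/(3H_\vu) = -\theta_3$, then $+\eta^2F_\vu/(12H_\vu) = \theta_4$, then $-\eta^3P_\vu/(60H_\vu) = -\theta_5$, exactly matching (\ref{eq:appx_to_thetas}). The expansion for $h$ then follows from $h = \zeta/\zeta_2$.

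The only delicate step is making the remainder claim honest. The full series need not converge, so I will use Taylor's theorem with remainder rather than infinite sums. I will assume $\gL \in C^{6}$ on a neighbourhood of the segment, as in Lemma~\ref{lem:appx_to_secant} with $m=4$. The remainder after the $j=5$ term is then bounded by $\eta^{5}\sup|\nabla^6\gL[\vu]^6|/(6!\,|A|)$. Dividing by $\zeta_2$ leaves $O(\eta^4)$, with the constant depending on $\|\vu\|$ and $H_\vu$; here I need $H_\vu > 0$, as stipulated. I will read the formal sum over $j\ge2$ as this truncated expansion, or as genuinely convergent when $\gL$ is analytic along the ray.
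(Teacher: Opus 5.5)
Your proof is correct and uses the same ingredients as the paper's proof: the exact identity $B = A + \Delta\gL/\eta$ for the loss-only form, the Taylor expansion of $\Delta\gL$ along the ray (whose $j=1$ term cancels the leading $1$), and the secant-Hessian lemma. The paper uses that lemma as its primary route to the first equality, whereas you use it only as a cross-check. Your Lagrange-remainder argument under $C^6$ regularity is what actually justifies the $O(\eta^4)$ term, with constant $2\sup|\nabla^6\gL[\vu]^6|/(6!\,H_\vu)$; the paper leaves this implicit in its formal series.
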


\begin{proof}
The first equality follows from Lemma~\ref{lem:appx_to_secant} and $\zeta=\frac{\eta}{2A}\vu^\top\bar{\mH}_{\eta,\vu}\vu$.
The factorization is~(\ref{eq:appx_to_thetas}) collected term by term.
For~(\ref{eq:appx_to_loss_form}), expand $\Delta\gL=\sum_{j\ge1}\frac{(-\eta)^j}{j!}\nabla^j\gL[\vu]^j=-\eta A+\sum_{j\ge2}\frac{(-\eta)^j}{j!}\nabla^j\gL[\vu]^j$ and divide by $\eta A$.
Equivalently, it is~(\ref{eq:curvature_load_chain_rule}) restated.
\end{proof}

Equation~(\ref{eq:appx_to_loss_form}) is Hessian-free, which was mentioned in the main text as well.
Only two forward passes and the inner product $A$ are needed, which are already computed by the optimizer.
Only the quadratic load $\zeta_2$ requires a Hessian-vector product.
The series~(\ref{eq:appx_to_zeta_series}) is simply the Taylor expansion of the exact ratio $h$, not an independent mechanism.
The following corollary upper bounds the approximation error when we treat the secant Hessian as the true Hessian.

\begin{corollary}[Approximation of the edge of stability]
\label{cor:optimizer_aware_eos_approximation}
If the Hessian $\mH=\nabla_{\vtheta}^2\gL(\vtheta)$ is $L_\mH$-Lipschitz on the segment
$\{\vtheta-s\eta\vu:s\in[0,1]\}$, then
\begin{equation}
\label{eq:optimizer_aware_eos_approximation_bound}
\left| \vu^\top (\bar{\mH}_{\eta, \vu} - \mH) \vu \right| \;\leq\; \frac{L_\mH \eta \|\vu\|^3}{3}.
\end{equation}
Writing
\begin{equation}
\label{eq:optimizer_aware_eos_approximation_estimation}
\bar\zeta \;:=\; \frac{\eta \vu^\top \mH \vu}{2 \vg^\top \vu}
\end{equation}
for the quadratic estimator (i.e.\ $\zeta_2$ of~(\ref{eq:appx_to_zeta_series})), the approximation error is bounded by
\begin{equation}
\label{eq:optimizer_aware_eos_approximation_error_bound}
\left| \zeta - \bar \zeta \right| \;\leq\; \frac{L_\mH \eta^2 \|\vu\|^3}{6 \,\lvert \vg^\top \vu \rvert}.
\end{equation}
\end{corollary}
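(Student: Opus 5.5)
The plan is to write the secant Hessian as a weighted average of Hessians along the executed segment and bound each deviation using the Lipschitz hypothesis. By definition (\ref{eq:weighted_secant_hessian}),
\begin{equation*}
\bar{\mH}_{\eta,\vu} - \mH \;=\; 2\int_0^1 (1-s)\bigl(\nabla^2\gL(\vtheta - s\eta\vu) - \nabla^2\gL(\vtheta)\bigr)\,ds ,
\end{equation*}
where we use $2\int_0^1(1-s)\,ds = 1$ to subtract $\mH$ inside the integral. Every point $\vtheta - s\eta\vu$ lies on the segment where $\mH$ is $L_\mH$-Lipschitz. Hence, in operator norm,
\begin{equation*}
\bigl\|\nabla^2\gL(\vtheta - s\eta\vu) - \mH\bigr\| \;\le\; L_\mH s\eta\|\vu\| .
\end{equation*}

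Contracting with $\vu$ on both sides gives
\begin{equation*}
\bigl|\vu^\top(\bar{\mH}_{\eta,\vu}-\mH)\vu\bigr| \;\le\; 2L_\mH\eta\|\vu\|^3\int_0^1(1-s)s\,ds .
\end{equation*}
The kernel integral is $\int_0^1(1-s)s\,ds = 1/6$, so the right-hand side equals $L_\mH\eta\|\vu\|^3/3$. This is exactly (\ref{eq:optimizer_aware_eos_approximation_bound}).

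For the second bound, I subtract the two ratios sharing the denominator $\vg^\top\vu$:
\begin{equation*}
\zeta - \bar\zeta \;=\; \frac{\eta}{2\,\vg^\top\vu}\,\vu^\top(\bar{\mH}_{\eta,\vu}-\mH)\vu .
\end{equation*}
Taking absolute values and inserting the first bound gives $\eta/(2|\vg^\top\vu|)\cdot L_\mH\eta\|\vu\|^3/3$, which is (\ref{eq:optimizer_aware_eos_approximation_error_bound}). Here $A\neq0$ is needed for $\zeta$ to be defined, as standing throughout.

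The only delicate point is the norm convention. The Lipschitz constant must be taken with respect to the Euclidean norm on parameters and the operator norm on Hessians, so that $|\vu^\top M\vu|\le\|M\|\,\|\vu\|^2$ applies. This is the reason the bound is stated in $\|\vu\|$ rather than $\|\vu\|_\mP$. With that convention fixed, the rest is routine.
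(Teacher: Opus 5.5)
Your proposal is correct and follows the paper's proof essentially line for line. You write $\vu^\top(\bar{\mH}_{\eta,\vu}-\mH)\vu$ as the $2(1-s)$-weighted integral of Hessian deviations along the segment, bound the integrand by $L_\mH s\eta\|\vu\|^3$ via Lipschitz continuity, use $2\int_0^1(1-s)s\,ds=1/3$, and divide by $2|\vg^\top\vu|/\eta$ to bound $|\zeta-\bar\zeta|$. Your explicit use of $2\int_0^1(1-s)\,ds=1$ and your remark on the Euclidean/operator-norm convention only make explicit what the paper leaves implicit.
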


\begin{proof}
By~(\ref{eq:weighted_secant_hessian}), $\vu^\top(\bar{\mH}_{\eta,\vu}-\mH)\vu = 2\int_0^1 (1-s)\, \vu^\top\bigl(\nabla^2\gL(\vtheta-s\eta\vu)-\mH\bigr)\vu\,ds$.
Lipschitz continuity of $\mH$ bounds the integrand by $L_\mH (s\eta\|\vu\|)\,\|\vu\|^2$, so the integral is at most $2 L_\mH \eta \|\vu\|^3 \int_0^1 (1-s)s\,ds = L_\mH \eta \|\vu\|^3 / 3$.
Dividing by $2\lvert A \rvert / \eta$ yields~(\ref{eq:optimizer_aware_eos_approximation_error_bound}).
This is the $m=0$ remainder of Lemma~\ref{lem:appx_to_secant} under a Lipschitz rather than a $C^3$ hypothesis.
\end{proof}

All higher-order content in $\zeta$ is this one scalar.
It is the ratio of the averaged directional curvature over the step to the instantaneous directional curvature at its start.
The coefficients $-\theta_3+\theta_4-\cdots$ are its Taylor coefficients.
This scalar can act in either side of the edge.

Let $\hat\vu:=\vu/\|\vu\|$, let $r:=\eta\|\vu\|$ be the step length, and define the \emph{directional curvature profile}
\begin{equation}
\label{eq:appx_to_profile_defs}
\psi(s) \;:=\; \hat\vu^\top \nabla^2\gL(\vtheta - s\hat\vu)\,\hat\vu ,
\qquad
g_\vu \;:=\; \vg^\top\hat\vu \;>\; 0 .
\end{equation}

\begin{proposition}[Profile form]
\label{prop:appx_to_profile}
The impedance is the $(r-s)$-weighted mean of the directional curvature over the executed step,
\begin{equation}
\label{eq:appx_to_profile}
\zeta \;=\; \frac{1}{r\,g_\vu}\int_0^r (r-s)\,\psi(s)\,ds \;=\; \frac{r\,\bar\psi_r}{2\,g_\vu},
\qquad
\bar\psi_r := \frac{2}{r^2}\int_0^r (r-s)\,\psi(s)\,ds ,
\end{equation}
and therefore, with $\zeta_2 = r\psi(0)/(2g_\vu)$,
\begin{equation}
\label{eq:appx_to_master_ratio}
h \;=\; \frac{\zeta}{\zeta_2} \;=\; \frac{\bar\psi_r}{\psi(0)} .
\end{equation}
\end{proposition}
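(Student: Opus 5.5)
The plan is to reduce everything to the scalar integral form of $\bar{\mH}_{\eta,\vu}$ in (\ref{eq:weighted_secant_hessian}) and then change variables from the normalized time $s\in[0,1]$ to the arclength along $\hat\vu$. Starting from the definition $\zeta = \frac{\eta}{2A}\,\vu^\top\bar{\mH}_{\eta,\vu}\vu$, I will substitute
\[
\vu^\top\bar{\mH}_{\eta,\vu}\vu = 2\int_0^1 (1-\sigma)\,\vu^\top\nabla^2\gL(\vtheta-\sigma\eta\vu)\vu\,d\sigma .
\]
Writing $\vu=\|\vu\|\hat\vu$, the integrand becomes $\|\vu\|^2\psi(\sigma\eta\|\vu\|)=\|\vu\|^2\psi(\sigma r)$. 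The assumed $g_\vu>0$ fixes the sign, so $A=\|\vu\|\,g_\vu>0$.

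Next comes the substitution $s=\sigma r$, so that $d\sigma = ds/r$ and $1-\sigma=(r-s)/r$. This gives
\[
\vu^\top\bar{\mH}_{\eta,\vu}\vu = \frac{2\|\vu\|^2}{r^2}\int_0^r (r-s)\psi(s)\,ds .
\]
Multiplying by $\eta/(2A)=\eta/(2\|\vu\|g_\vu)$ and using $\eta\|\vu\|=r$ collapses the prefactor:
\[
\frac{\eta\|\vu\|}{g_\vu r^2}=\frac{1}{r\,g_\vu}.
\]
This is the first expression in (\ref{eq:appx_to_profile}). The second expression is just a rewriting using the definition of $\bar\psi_r$: since $\int_0^r(r-s)\psi\,ds = r^2\bar\psi_r/2$, we get $\zeta=r\bar\psi_r/(2g_\vu)$.

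For (\ref{eq:appx_to_master_ratio}), $\zeta_2 = \eta H_\vu/(2A)$ from (\ref{eq:appx_to_expenditures}), with $H_\vu=\|\vu\|^2\psi(0)$. The same prefactor cancellation gives $\zeta_2 = r\psi(0)/(2g_\vu)$. Taking the ratio with the previous display and recalling $h=\zeta/\zeta_2$ from (\ref{eq:appx_to_h}), the factor $r/(2g_\vu)$ cancels, leaving $h=\bar\psi_r/\psi(0)$. This step requires $\psi(0)>0$, which is the standing assumption $H_\vu>0$.

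There is no genuine obstacle. The only care needed is bookkeeping of the three scales $\eta$, $\|\vu\|$ and $r$, and checking that the kernel normalization $2\int_0^1(1-\sigma)\,d\sigma=1$ matches $\frac{2}{r^2}\int_0^r(r-s)\,ds=1$, so that $\bar\psi_r$ is a genuine weighted mean. I would also note that only continuity of $\nabla^2\gL$ along the segment is used, which is weaker than the $C^{m+2}$ hypothesis of Lemma~\ref{lem:appx_to_secant}.
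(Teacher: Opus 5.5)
Your proof is correct, but it reaches the profile form from the other side of the identity for the curvature load. You start from the Hessian-side expression $B = \frac{\eta}{2}\vu^\top\bar{\mH}_{\eta,\vu}\vu$, insert the integral definition (\ref{eq:weighted_secant_hessian}), and obtain (\ref{eq:appx_to_profile}) by the change of variables $s = \sigma r$. The paper instead starts from the loss-side form $\zeta - 1 = \Delta\gL/(\eta A)$ of (\ref{eq:appx_to_loss_form}). It sets $\phi(s) = \gL(\vtheta - s\hat\vu)$, applies Taylor's theorem with integral remainder, $\phi(r)-\phi(0) = -r g_\vu + \int_0^r (r-s)\psi(s)\,ds$, and uses $\eta A = r g_\vu$. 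The two computations are the same identity read from opposite ends, because the kernel $2(1-s)$ in $\bar{\mH}_{\eta,\vu}$ is exactly the Taylor remainder kernel. Your route is self-contained from the definition of $\bar{\mH}_{\eta,\vu}$ and does not need the equality $\frac{\eta}{2}\vu^\top\bar{\mH}_{\eta,\vu}\vu = A + \Delta\gL/\eta$ of (\ref{eq:curvature_load_chain_rule}). It also derives $\zeta_2 = r\psi(0)/(2g_\vu)$ explicitly and notes that the division by $\psi(0)$ needs $H_\vu>0$; the paper leaves both implicit. The paper's route instead makes visible that the weighted mean $\bar\psi_r$ is exactly what the measured, Hessian-free one-step loss change encodes, which is the point the surrounding discussion emphasizes.
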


\begin{proof}
Let $\phi(s):=\gL(\vtheta-s\hat\vu)$, so $\phi'(0)=-g_\vu$ and $\phi''(s)=\psi(s)$.
Taylor's theorem with integral remainder gives
$\phi(r)-\phi(0)=-r g_\vu+\int_0^r(r-s)\psi(s)\,ds$, and $\eta A=r g_\vu$, so
by~(\ref{eq:appx_to_loss_form}) $\zeta=1+\frac{\phi(r)-\phi(0)}{r g_\vu}$, which
is~(\ref{eq:appx_to_profile}).
\end{proof}

\begin{corollary}[Monotone softening and stiffening]
\label{cor:appx_to_monotone}
If $\psi$ is nonincreasing on $[0,r]$ then $h\le 1$, and if $\psi$ is nondecreasing then $h\ge 1$.
Both inequalities are strict if the monotonicity is strict on a set of positive measure.
More quantitatively, if $\psi(s)\le(1-\mu)\psi(0)$ for $s\ge\varrho r$ with $\mu\in(0,1)$ and
$\varrho\in(0,1)$, then
\begin{equation}
\label{eq:appx_to_softening_bound}
h \;\le\; 1 - \mu (1-\varrho)^2 .
\end{equation}
\end{corollary}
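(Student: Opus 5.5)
The plan is to work entirely from the master ratio $h = \bar\psi_r/\psi(0)$ of Proposition~\ref{prop:appx_to_profile}. Introduce the normalized weight $w(s) := 2(r-s)/r^2$ on $[0,r]$. It is nonnegative, integrates to $1$, and satisfies $\bar\psi_r = \int_0^r w(s)\psi(s)\,ds$. I assume $\psi(0) = H_\vu/\|\vu\|^2 > 0$, which holds under the standing hypothesis $H_\vu > 0$ of Appendix~\ref{appx:to:expansion}. With this, $h \le 1$ is equivalent to $\bar\psi_r \le \psi(0)$, and $h \ge 1$ is equivalent to the reverse inequality.

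For the monotone claims, write
\[
\bar\psi_r - \psi(0) = \int_0^r w(s)\bigl(\psi(s) - \psi(0)\bigr)\,ds .
\]
If $\psi$ is nonincreasing, the integrand is pointwise $\le 0$, so $h \le 1$. The nondecreasing case is symmetric and gives $h \ge 1$. For strictness, suppose monotonicity is strict on a set $E$ of positive measure. Then there is a point $s_0 < r$ with $\psi(s_0) < \psi(0)$, and monotonicity propagates this to $\psi(s) \le \psi(s_0) < \psi(0)$ for all $s \in [s_0, r]$. That interval has positive measure and $w > 0$ on it except at $s = r$, so the integral is strictly negative. The step that needs care is extracting such an $s_0$ from ``strict on a set of positive measure''. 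I would argue that if $\psi(s) = \psi(0)$ for every $s < r$, then $\psi$ is constant on $[0,r)$, which contradicts strictness on $E$.

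For the quantitative bound, split the integral at $\varrho r$. On $[0, \varrho r)$, use $\psi(s) \le \psi(0)$, which follows from monotonicity; I take the nonincreasing hypothesis as implicitly in force here. If that hypothesis is not intended, I would instead need $\psi \le \psi(0)$ on the first piece, and I would flag this as the main gap in the statement. On $[\varrho r, r]$, use $\psi(s) \le (1-\mu)\psi(0)$. This gives
\[
\bar\psi_r \le \psi(0)\bigl[\,W_1 + (1-\mu) W_2\,\bigr] = \psi(0)\bigl[\,1 - \mu W_2\,\bigr],
\]
where $W_2 := \int_{\varrho r}^r w(s)\,ds$ and $W_1 := 1 - W_2$ is the weight on the first piece. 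Computing,
\[
W_2 = \frac{2}{r^2}\int_{\varrho r}^{r} (r-s)\,ds = \frac{2}{r^2}\cdot\frac{(r-\varrho r)^2}{2} = (1-\varrho)^2 .
\]
Dividing by $\psi(0) > 0$ then yields $h \le 1 - \mu(1-\varrho)^2$.
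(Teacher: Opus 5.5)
Your proposal is correct and follows the paper's argument: treat $2(r-s)/r^2$ as a probability density so that $h=\bar\psi_r/\psi(0)$ compares a weighted average of $\psi$ with $\psi(0)$, then split at $\varrho r$ using $\int_{\varrho r}^{r} 2(r-s)/r^2\,ds=(1-\varrho)^2$. Your extra care is warranted. The paper's proof also silently relies on $\psi\le\psi(0)$ on $[0,\varrho r)$ (the nonincreasing hypothesis you flag, without which the bound can fail) and on $\psi(0)=H_\vu/\|\vu\|^2>0$, and it does not spell out the strictness argument that you supply.
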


\begin{proof}
The kernel $2(r-s)/r^2$ is a probability density on $[0,r]$, so $\bar\psi_r$ is an average of $\psi$
and~(\ref{eq:appx_to_master_ratio}) gives the two monotone statements.
For the bound, $\int_{\varrho r}^{r} \frac{2(r-s)}{r^2}\,ds=(1-\varrho)^2$, and on that sub-interval
$\psi$ is below $(1-\mu)\psi(0)$ while elsewhere it is at most $\psi(0)$.
\end{proof}

Thus we can call $h<1$ \emph{directional softening} and $h>1$ \emph{directional stiffening}.

\subsection{Entry, return, and maintenance}
\label{appx:to:crossing}

Because $\zeta=\zeta_2 h$, a step lies below, on, or above the realized edge according as $h$
lies below, on, or above $1/\zeta_2$:
\begin{equation}
\label{eq:appx_to_crossing}
\zeta<1
\;\;\Longleftrightarrow\;\;
h<\frac{1}{\zeta_2} ,
\qquad
\zeta=1
\;\;\Longleftrightarrow\;\;
h=\frac{1}{\zeta_2} ,
\qquad
\zeta>1
\;\;\Longleftrightarrow\;\;
h>\frac{1}{\zeta_2} .
\end{equation}
These are simple identities.
When we are discussing the higher-order term $h$, we are mainly interested in the condition whether this term rescues or destabilizes a one-step crossing of $\zeta=1$, i.e., whether $h$ is below, on, or above $1/\zeta_2$.

\begin{theorem}[Nonlinear rescue and destabilization]
\label{thm:appx_to_below}
Assume $A>0$ and $H_\vu>0$.
\begin{enumerate}
\item If $\zeta_2=1+\delta$ with $\delta>0$, the step is quadratically supercritical.
It is rescued to $\zeta\le 1$ if and only if
\begin{equation}
\label{eq:appx_to_below_condition}
h \;\le\; \frac{1}{1+\delta}
\qquad\text{i.e.}\qquad
1-h \;\ge\; \frac{\delta}{1+\delta} .
\end{equation}
To the order retained in~(\ref{eq:appx_to_zeta_series}), the same comparison reads $\theta_3-\theta_4+O(\eta^3)\ge\delta/(1+\delta)$, with remainder controlled by Corollary~\ref{cor:optimizer_aware_eos_approximation} when only the Lipschitz modulus is assumed.
\item If $\zeta_2=1-\delta$ with $\delta\in(0,1)$, the step is quadratically subcritical.
Nonlinear stiffening pushes it above the edge if and only if $h>1/(1-\delta)$.
\item If $\psi$ is nonincreasing and $\zeta_2\le 1$, then $\zeta\le\zeta_2\le 1$.
If $\psi$ is nondecreasing and $\zeta_2\ge 1$, then $\zeta\ge\zeta_2\ge 1$.
\end{enumerate}
\end{theorem}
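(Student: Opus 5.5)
The plan is to reduce all three items to the factorization $\zeta = \zeta_2 h$ of~(\ref{eq:appx_to_factorization}) together with the crossing identities~(\ref{eq:appx_to_crossing}). Two positivity facts are needed throughout. First, $A>0$ and $H_\vu>0$ give $\zeta_2 = \eta H_\vu/(2A) > 0$, so dividing or multiplying by $\zeta_2$ preserves inequalities. Second, $h = B/B_2$ is well defined because $B_2 = \eta H_\vu/2 > 0$. I will not need the sign of $h$ for items 1 and 2, since those are pure rearrangements.

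\textbf{Item 1.} Substitute $\zeta_2 = 1+\delta$ into $\zeta = \zeta_2 h$. This gives $\zeta \le 1 \iff h \le 1/(1+\delta)$, which is the same as $1-h \ge \delta/(1+\delta)$. For the series form, invoke Proposition~\ref{prop:appx_to_series}, which gives $h = 1-\theta_3+\theta_4+O(\eta^3)$. Then $1-h = \theta_3-\theta_4+O(\eta^3)$, and the comparison follows. Here $\theta_5$ is already $O(\eta^3)$, so it is absorbed into the remainder. If only the Lipschitz modulus of the Hessian is available, the remainder statement instead comes from Corollary~\ref{cor:optimizer_aware_eos_approximation}: it bounds $|\zeta-\zeta_2|$ by $L_\mH\eta^2\|\vu\|^3/(6A)$, and dividing by $\zeta_2$ gives $|h-1| \le L_\mH\eta\|\vu\|^3/(3H_\vu)$. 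That is the sense in which the remainder is controlled, and I will state it as such.

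\textbf{Item 2.} The same rearrangement with $\zeta_2 = 1-\delta > 0$ gives $\zeta > 1 \iff h > 1/(1-\delta)$.

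\textbf{Item 3.} Use the profile form, Proposition~\ref{prop:appx_to_profile}, together with Corollary~\ref{cor:appx_to_monotone}. If $\psi$ is nonincreasing, then $h \le 1$. Since $\zeta_2 > 0$, this yields $\zeta = \zeta_2 h \le \zeta_2 \le 1$. The nondecreasing case is symmetric: $h \ge 1$, so $\zeta \ge \zeta_2 \ge 1$.

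One consistency check is needed for the profile form. It uses $g_\vu = \vg^\top\hat\vu > 0$, which follows from $A > 0$. It also uses $\psi(0) = H_\vu/\|\vu\|^2 > 0$, so $h = \bar\psi_r/\psi(0)$ is well defined and agrees with $B/B_2$.

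The main obstacle is not the algebra but making the hypotheses match. Proposition~\ref{prop:appx_to_profile} is stated via Taylor's theorem with integral remainder, which requires $\phi$ to be $C^2$ along the segment. The series statement in item 1 additionally needs $C^{5}$ smoothness from Lemma~\ref{lem:appx_to_secant}. I will state these regularity requirements explicitly and treat the $O(\eta^3)$ comparison as asymptotic rather than exact. The exact content of item 1 is the inequality $h \le 1/(1+\delta)$.
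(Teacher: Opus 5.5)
Your proposal is correct and follows the paper's proof. Items 1 and 2 are the crossing identities~(\ref{eq:appx_to_crossing}), i.e.\ rearrangements of $\zeta=\zeta_2 h$ using $\zeta_2=\eta H_\vu/(2A)>0$, and item 3 is Corollary~\ref{cor:appx_to_monotone} combined with $\zeta_2>0$; your extra checks ($\psi(0)=H_\vu/\|\vu\|^2>0$, $g_\vu>0$, and the explicit bound $|h-1|\le L_\mH\eta\|\vu\|^3/(3H_\vu)$ for the Lipschitz clause) are correct and make explicit what the paper leaves implicit.
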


\begin{proof}
Claims 1--2 are~(\ref{eq:appx_to_crossing}) rewritten in $\delta$.
Claim 3 is Corollary~\ref{cor:appx_to_monotone}.
\end{proof}

A quadratically supercritical step is therefore rescued precisely when the fractional drop in path-averaged directional curvature exceeds the fractional linear excess.
A quadratically subcritical step is destabilized when stiffening consumes more than the quadratic margin.
We only need to consider two event indicators, both measurable from $(\zeta_2,\zeta)$ alone:
\emph{nonlinear rescue} is $\zeta_2>1$ and $\zeta\le 1$;
\emph{nonlinear destabilization} is $\zeta_2\le 1$ and $\zeta>1$.
An $\epsilon$-band $1-\epsilon\le\zeta\le 1+\epsilon$ is equivalent to $(1-\epsilon)/\zeta_2\le h\le(1+\epsilon)/\zeta_2$.
That is a one-step constraint on $h$, not a statement that a period-$2$ orbit remains in $1\le\zeta<1+\epsilon$ (a nontrivial two-cycle with $A_t>0$ must change the sign of $\zeta-1$).

The factorization~(\ref{eq:appx_to_hierarchy}) also splits \emph{changes} of $\zeta$.
On the positive domain,
\begin{equation}
\label{eq:appx_to_log_split}
\log\zeta
\;=\;
\log\chi + \log\varsigma_2 - \log\tau + \log h ,
\end{equation}
hence
\begin{equation}
\label{eq:appx_to_log_drift}
\Delta\log\zeta
\;=\;
\Delta\log\chi + \Delta\log\varsigma_2 - \Delta\log\tau + \Delta\log h .
\end{equation}
Basically, this is the same as the one in the main text, but with one additional granularity, decomposing the spatial participation $\varsigma$ into the quadratic participation $\varsigma_2$ and higher-order maintenance $h$.
Each summand is a channel of diagnostics, we may not treat them as causes while analyzing the dynamics.
Therefore, we can read the diagnostics by following interpretations:
\begin{center}
\small
\begin{tabular}{@{}lll@{}}
\toprule
Channel & raises $\zeta$ & lowers $\zeta$ \\
\midrule
$\chi$ & spectral sharpening & spectral relief \\
$\varsigma_2$ & occupancy of a sharper direction & rotation toward lower curvature \\
$\tau$ & loss of temporal calibration & recovery of alignment/calibration \\
$h$ & pathwise stiffening & pathwise softening \\
\bottomrule
\end{tabular}
\end{center}
Entry from below ($\zeta_t<1$ and $\Delta\log\zeta_t>0$) is $\Delta\log\chi+\Delta\log\varsigma_2+\Delta\log h>\Delta\log\tau$, 
return from above is the opposite inequality $\Delta\log\chi+\Delta\log\varsigma_2+\Delta\log h<\Delta\log\tau$, and
maintenance near $\zeta\simeq 1$ is approximate balance of the four increments.
These are measurable channel-balance conditions.
Note that these are not an attractor theorem: the quantities do not assert that higher-order terms should drive $\zeta$ toward $1$, nor that $h$ is the dominant channel.

\section{Additional Experiments}
\label{sec:experiment}
This section presents the experimental setup and all experimental results.

\subsection{Implementation Details}
\label{appx:protocol}

\begin{table}[t]
\centering
\footnotesize
\setlength{\tabcolsep}{2.5pt}
\caption{\small
Optimizer grids on used in the paper.
Learning rates are five geometrically spaced values.
The displayed endpoints are the grid minima and maxima.
``BC'' is bias correction.
$p$ is the preconditioner exponent on the second moment.
$\mathrm{wd}$ is decoupled weight decay.}
\label{tab:appx_hyperparams}
\begin{tabular}{@{}lcccccc@{}}
\toprule
Family & $\eta$ grid & $\beta_1$ / $\beta$ & $\beta_2$ & $\varepsilon$ & BC / $p$ & $\mathrm{wd}$ \\
\midrule
Adam
  & $3\times 10^{-5}$--$10^{-3}$
  & $0.9$ & $0.999$ & $10^{-7}$ & yes / $0.5$ & $0$ \\
AdamW
  & $3\times 10^{-5}$--$10^{-3}$
  & $0.9$ & $0.999$ & $10^{-7}$ & yes / $0.5$ & $0.01$ \\
Adafactor
  & $10^{-5}$--$10^{-3}$
  & $0.9$ & $0.8$ & $10^{-7}$ & no / $0.5$ & $0$ \\
AMSGrad
  & $10^{-5}$--$10^{-3}$
  & $0.9$ & $0.999$ & $10^{-7}$ & no / $0.5$ & $0$ \\
PAdam
  & $10^{-3}$--$10^{-1}$
  & $0.9$ & $0.999$ & $10^{-7}$ & no / $0.25$ & $0$ \\
NAdam
  & $10^{-5}$--$10^{-3}$
  & $0.9$ & $0.999$ & $10^{-7}$ & no / $0.5$ & $0$ \\
RMSProp
  & $10^{-4}$--$10^{-3}$
  & $0$ & $0.995$ & $10^{-7}$ & no / $0.5$ & $0$ \\
AdaGrad
  & $10^{-2}$--$10^{-1}$
  & $0$ & $1$ & $10^{-10}$ & no / $0.5$ & $0$ \\
GD
  & $0.02$--$0.2$
  & --- & --- & --- & --- & $0$ \\
Heavy Ball
  & $0.002$--$0.02$
  & $0.9$ & --- & --- & --- & $0$ \\
Nesterov
  & $0.002$--$0.02$
  & $0.9$ & --- & --- & --- & $0$ \\
QHM
  & $0.059375$--$0.59375$
  & $0.9$ & --- & --- & $\nu=0.7$ & $0$ \\
two-pole $0.4/0.7$
  & $0.0257$--$0.257$
  & --- & --- & --- & cascade & $0$ \\
two-pole $0.7/0.85$
  & $0.00681$--$0.0681$
  & --- & --- & --- & cascade & $0$ \\
\bottomrule
\end{tabular}
\end{table}

\paragraph{Dataset.}
We use CIFAR-10~\citep{krizhevsky2009learning} dataset, all training samples are used.
Every update uses one deterministic $50{,}000$-example full batch without shuffling.

\paragraph{Architecture.}
The network flattens the $3\times 32\times 32$ input, applies five fully connected hidden layers of width $200$ with $\tanh$ activations, and produces $10$ logits.
The objective is mean categorical cross-entropy on the full training set.
This is exactly the same as the $5\times 200$ architecture of \citet{cohen2023adaptive}, not the $2\times 200$ network of \citet{cohen2021gradient}.

\paragraph{Optimizers and their hyperparameters.}
Table~\ref{tab:appx_hyperparams} displays the learning rate grids and optimizer hyperparameters.
AdamW weight decay is $0.01$ and AdaGrad $\varepsilon$ is $10^{-10}$.
Adam uses bias correction.
Adafactor, AMSGrad, PAdam, NAdam, RMSProp and AdaGrad do not.
PAdam uses partial-adaptivity exponent $0.25$.
QHM uses $\beta=0.9$, $\nu=0.7$.
The two-pole filters are cascaded EMA poles at $(0.4,0.7)$ and $(0.7,0.85)$, i.e., Grokfast-style~\citep{lee2024grokfast} cascades rather than parallel mixtures (Appendix~\ref{appx:proof:two_pole}).

\paragraph{Temporal filters $Q$.}
The $14$ optimizer names collapse to eight distinct causal filters $Q$, whose transfer functions are calculated in Table~\ref{tab:appx_transfer}.
Heavy Ball and Nesterov are the classical (not dc-normalized) recurrences, so $Q(1)=1/(1-\beta)=10$ at $\beta=0.9$.
Adam-family first moments are dc-normalized and share the EMA filter of the dc-normalized Heavy Ball.
The two-pole rows are the only filters whose first root-locus exit is off Nyquist.
Refer to Figure~\ref{fig:rootlocusfull} for the full root locus.

\begin{table}[t]
\centering
\footnotesize
\setlength{\tabcolsep}{3pt}
\caption{\small
Measured relative edge $\chi=\eta S\Gamma/2$ offsets for different optimizers.
Each cell is the median of $\chi$ between first entry above $0.75$ and last exit below $0.75$ (open windows remain open at the horizon).
A dash means the trajectory never exceeded $0.75$.
Extreme cases $\chi > 1.2$ are highlighted in boldface.}
\label{tab:offsets}
\begin{tabular}{@{}lccccccr@{}}
\toprule
Optimizer Type & $\eta_1$ & $\eta_2$ & $\eta_3$ & $\eta_4$ & $\eta_5$ \\
\midrule
Adafactor
  & ---
  & $1.052$
  & $1.103$
  & $1.102$
  & $1.191$ \\
AdaGrad
  & \textbf{1.424}
  & \textbf{1.556}
  & \textbf{1.845}
  & \textbf{4.047}
  & \textbf{21.070} \\
Adam
  & $0.927$
  & $0.980$
  & $0.971$
  & $0.976$
  & $0.974$ \\
AdamW
  & $0.956$
  & $0.981$
  & $0.975$
  & $1.023$
  & $0.971$ \\
AMSGrad
  & $0.789$
  & $0.931$
  & $0.979$
  & $0.946$
  & $0.949$ \\
GD
  & $0.904$
  & $1.130$
  & \textbf{1.246}
  & \textbf{1.478}
  & \textbf{1.624} \\
Heavy Ball
  & ---
  & $0.820$
  & $0.960$
  & $0.934$
  & $0.964$ \\
NAdam
  & $1.036$
  & $1.050$
  & $0.884$
  & $1.183$
  & $1.198$ \\
Nesterov
  & $0.836$
  & $1.078$
  & $1.088$
  & $1.123$
  & $1.121$ \\
PAdam
  & $0.895$
  & $0.959$
  & $0.966$
  & $0.927$
  & $0.950$ \\
QHM
  & $0.961$
  & \textbf{1.238}
  & \textbf{1.309}
  & \textbf{1.454}
  & \textbf{1.576} \\
RMSProp
  & \textbf{1.349}
  & \textbf{1.452}
  & \textbf{1.568}
  & \textbf{1.579}
  & \textbf{1.646} \\
two-pole $0.4/0.7$
  & $0.982$
  & \textbf{1.432}
  & \textbf{1.819}
  & \textbf{2.294}
  & \textbf{2.646} \\
two-pole $0.7/0.85$
  & ---
  & $0.988$
  & \textbf{1.354}
  & \textbf{1.625}
  & \textbf{2.049} \\
\bottomrule
\end{tabular}
\end{table}

\subsection{Full Comparison of Worst-Case EoS $\chi$ and Realized EoS $\zeta$}
\label{appx:comparison_chi_zeta}

This section shows a full comparison of worst-case EoS $\chi$ and realized EoS $\zeta$, extending Figure~\ref{fig:observation} in the main text.
Figures~\ref{fig:eos-1-3} and Figures~\ref{fig:eos-4-6} show the results.
Table~\ref{tab:offsets} is the complete numerical results, measured as the median of $\chi$ on the closed interval from first $\chi>0.75$ to last $\chi<0.75$, or to the terminal checkpoint if the trajectory never returns below $0.75$.
Two settings never exceed $0.75$ (Adafactor $\eta=10^{-5}$, Heavy Ball $\eta=0.002$, two-pole $0.7/0.85$ at $\eta=0.00681$) and are recorded as dashes.
Besides the discussion in Section~\ref{sec:observation}, Table~\ref{tab:offsets} also suggests that the offsets are not an artefact of decoupled weight decay.
Adam ($\lambda=0$) and AdamW ($\lambda=0.01$) share the same temporal filter, the same $\eta$ grid, and the same architecture; their edge-conditioned family centers are $0.960$ and $0.971$.

\begin{figure}[p]
    \centering
    \vspace*{\fill}
    \includegraphics[width=\linewidth]{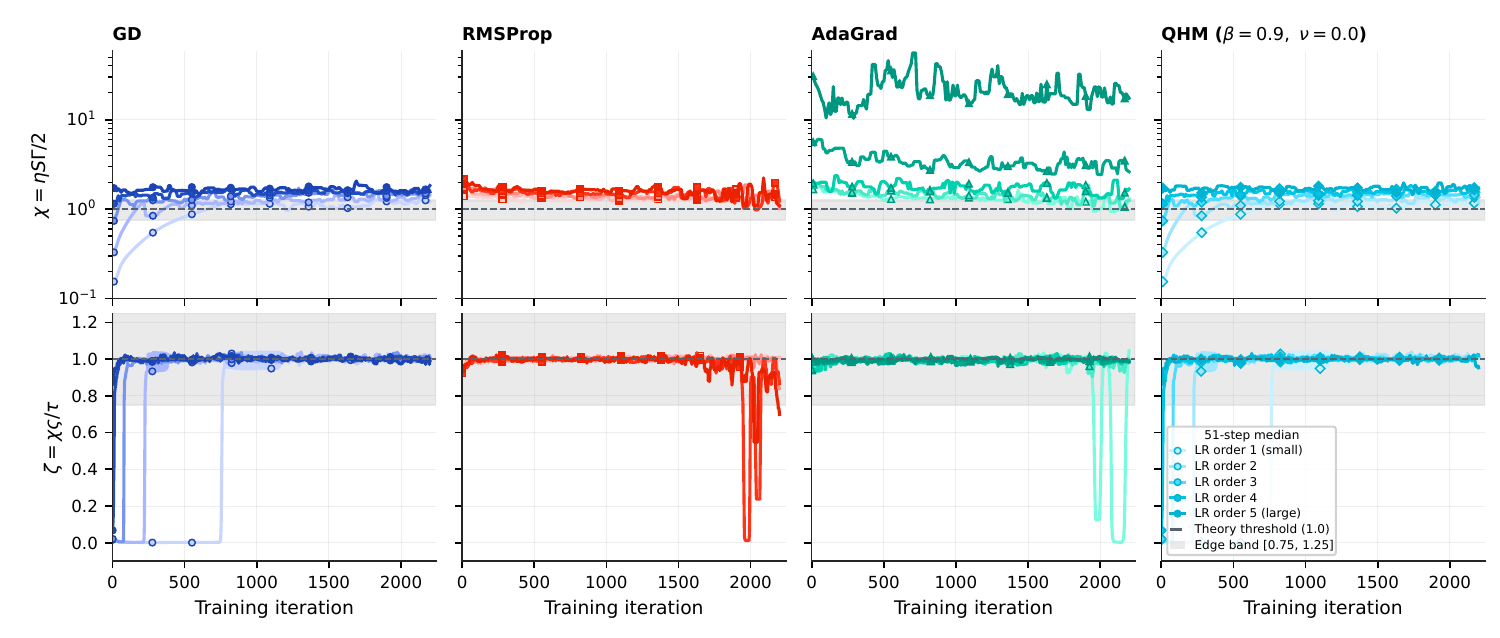}
    \includegraphics[width=\linewidth]{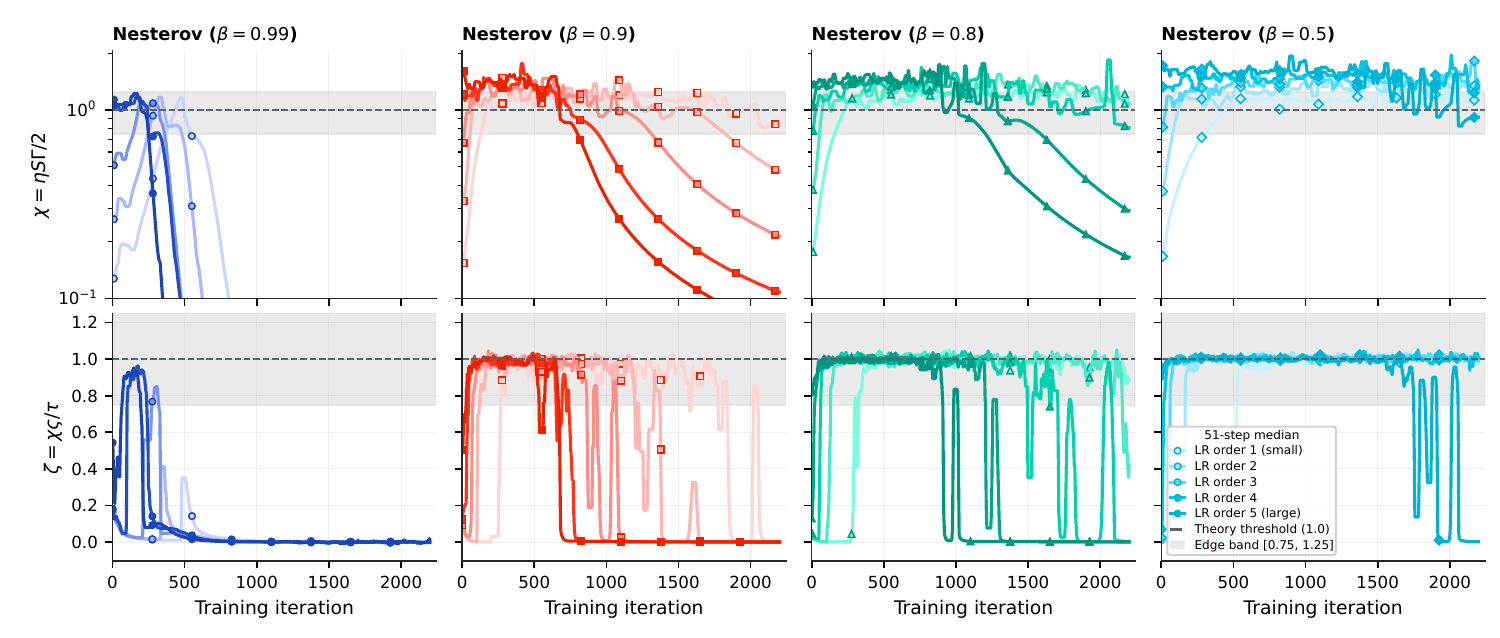}
    \includegraphics[width=\linewidth]{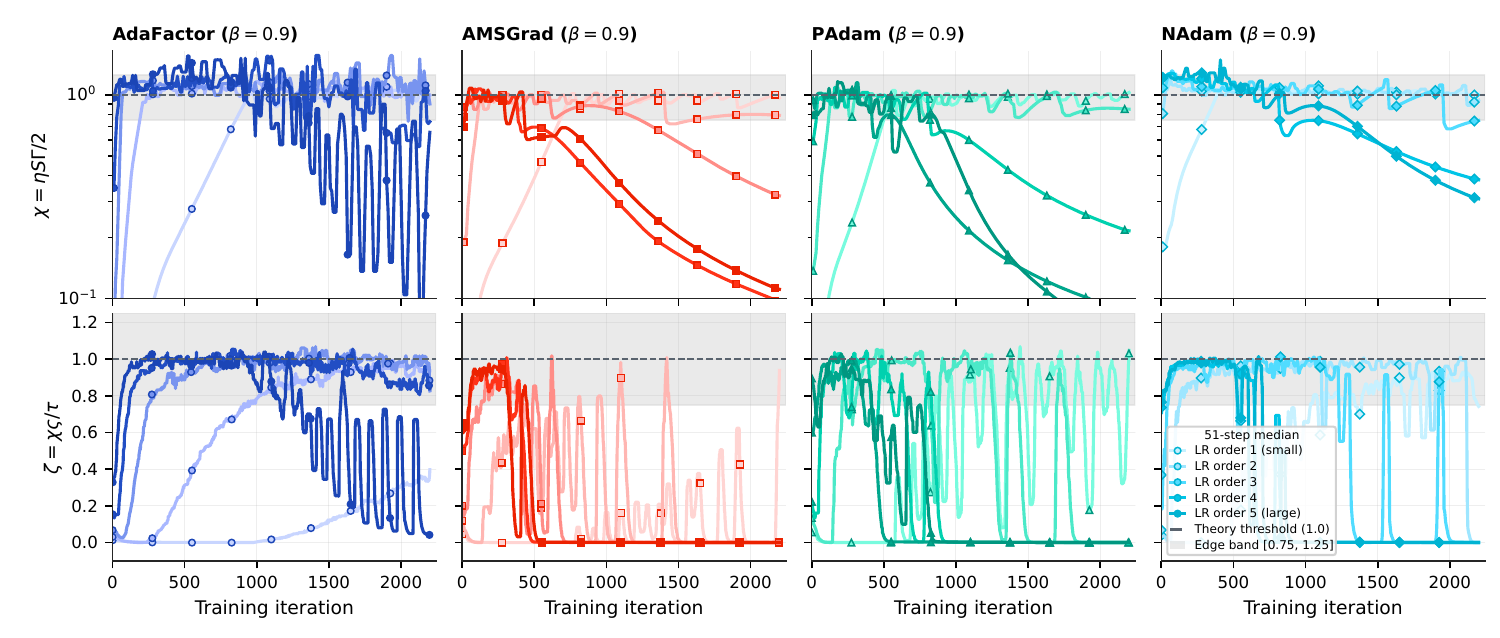}
    \vspace{-1em}
    \caption{\small
    Comparison of worst-case EoS $\chi$ and realized EoS $\zeta$ on the primary CIFAR-10 experiments.}
    \label{fig:eos-1-3}
    \vspace*{\fill}
\end{figure}

\begin{figure}[p]
    \centering
    \vspace*{\fill}
    \includegraphics[width=\linewidth]{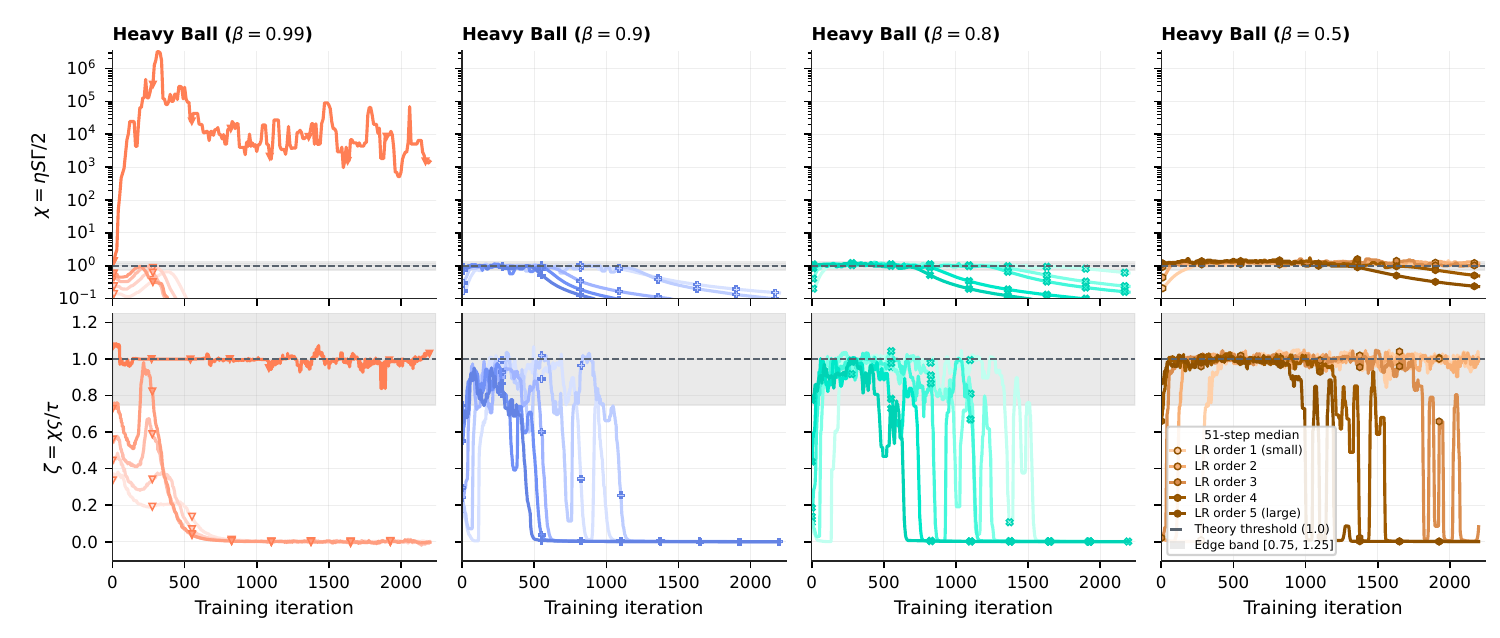}
    \includegraphics[width=\linewidth]{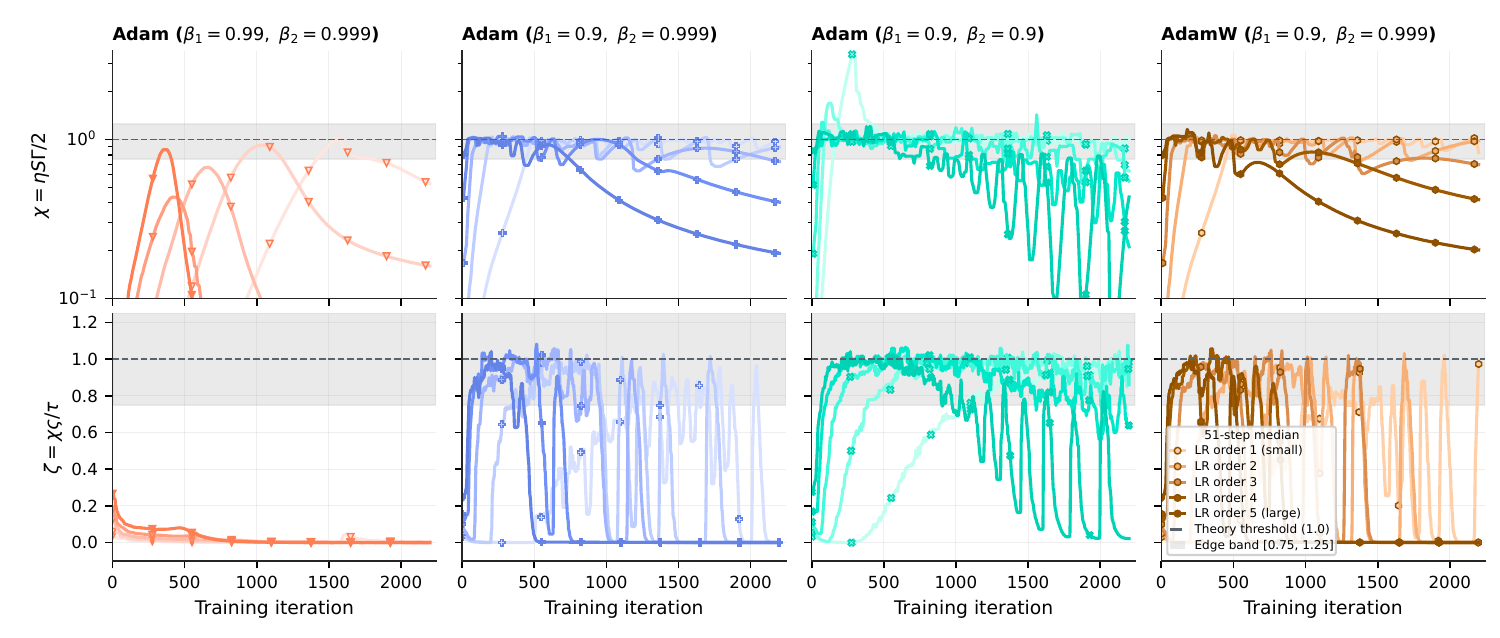}
    \includegraphics[width=\linewidth]{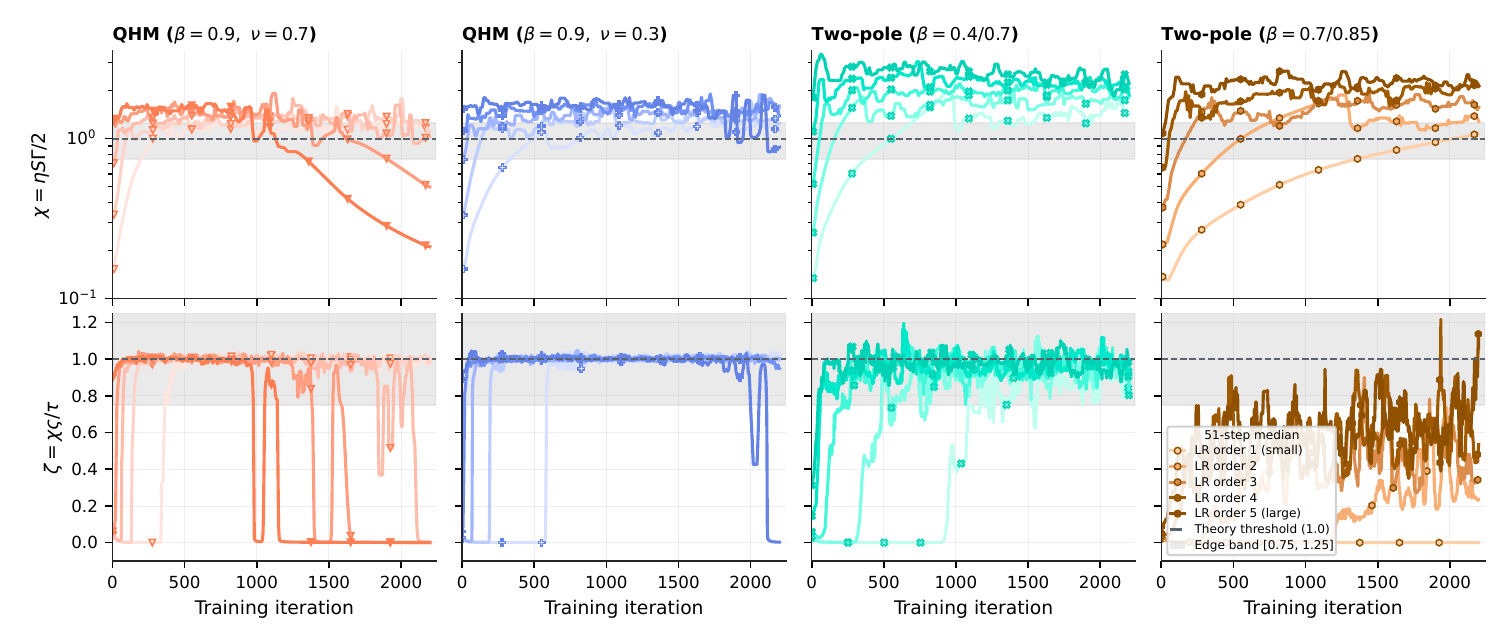}
    \vspace{-1em}
    \caption{\small
    Comparison of worst-case EoS $\chi$ and realized EoS $\zeta$ on the primary CIFAR-10 experiments.}
    \label{fig:eos-4-6}
    \vspace*{\fill}
\end{figure}

\subsection{Full Decomposition of Realized EoS $\zeta$}
\label{appx:decomposition_zeta}

This section shows a full decomposition of realized EoS $\zeta$ for each individual setting of (optimizer, learning rate), extending Figure~\ref{fig:decomposition_of_learning_impedance} in the main text.
Full results are visualized in Figures~\ref{fig:decomposition-1} through Figure~\ref{fig:decomposition-4}.

\begin{figure}[p]
\centering
\vspace*{\fill}
\includegraphics[width=\linewidth]{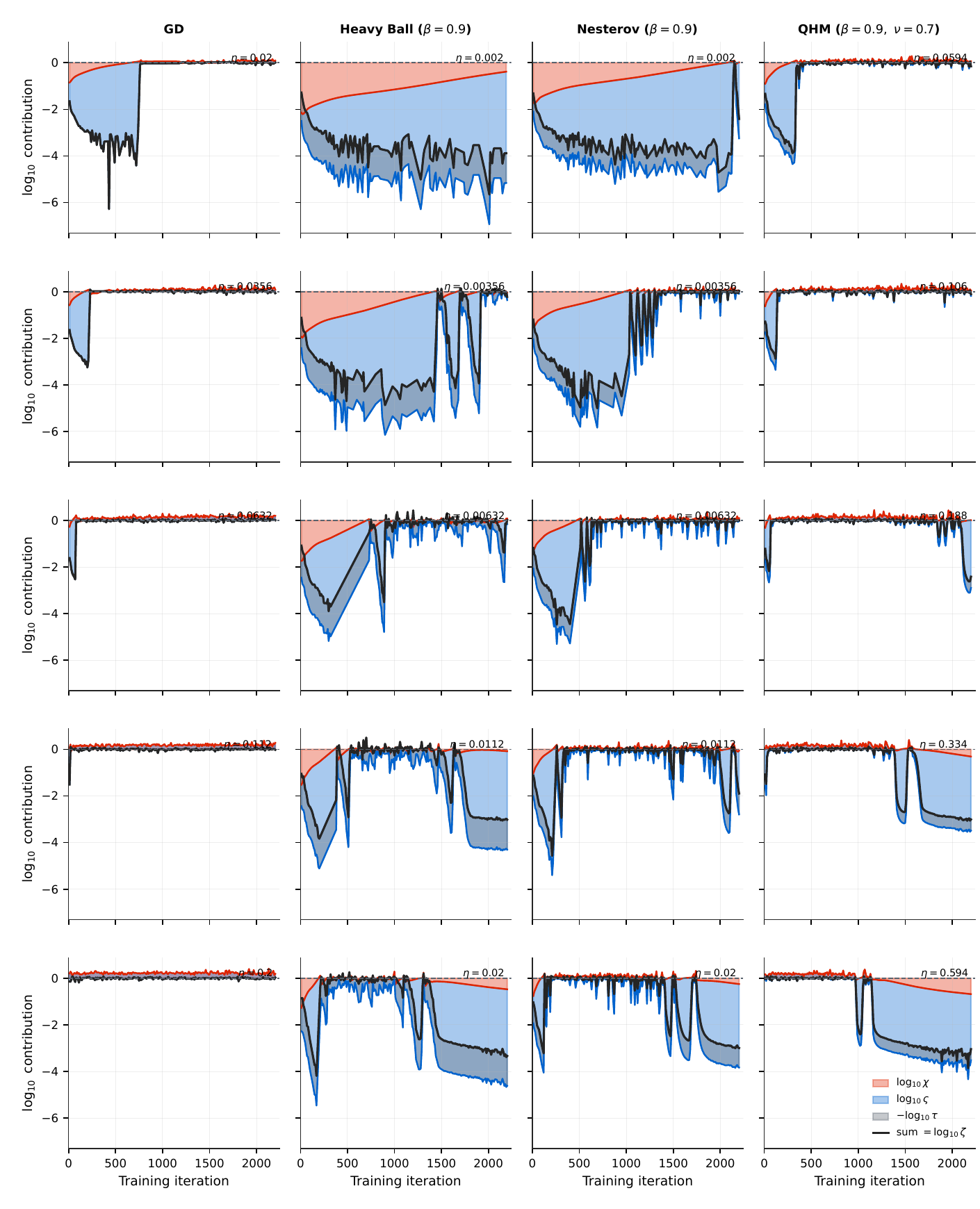}
\vspace{-1em}
\caption{\small
Decomposition of the realized EoS $\zeta$ into factors in log-scale, involving the (relative) edge of stability $\chi$ ({\color{red!50} red} area and line), the spatial participation factor $\varsigma$ ({\color{blue!50} blue} area and line), and the temporal calibration factor $\tau$ ({\color{white!30!black} grey} overlay and black line).
Note that $\log \tau$ operates in the opposite direction by subtraction.
}
\label{fig:decomposition-1}
\vspace*{\fill}
\end{figure}

\begin{figure}[p]
\centering
\vspace*{\fill}
\includegraphics[width=\linewidth]{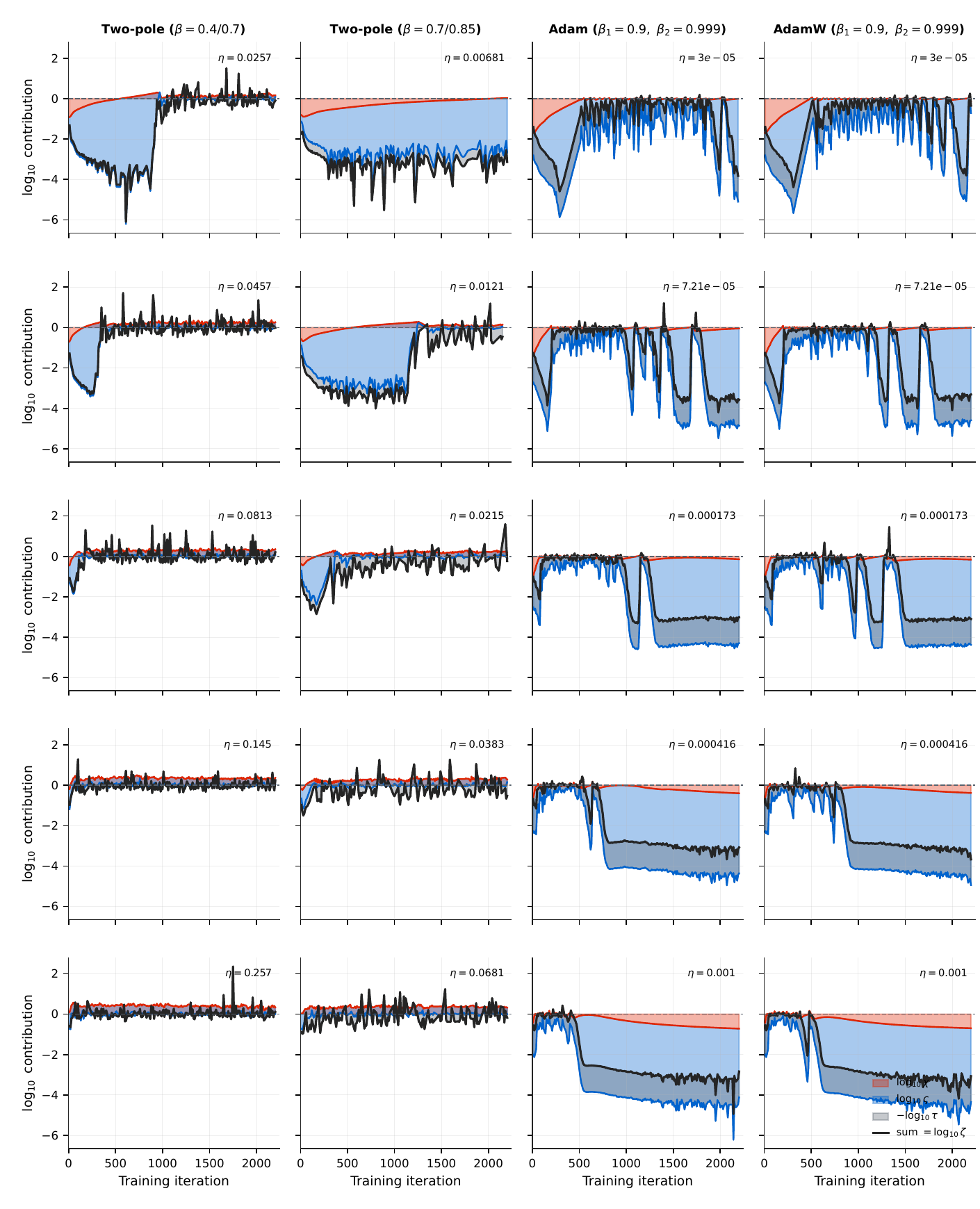}
\vspace{-1em}
\caption{\small
Decomposition of the realized EoS $\zeta$ into factors in log-scale, involving the (relative) edge of stability $\chi$ ({\color{red!50} red} area and line), the spatial participation factor $\varsigma$ ({\color{blue!50} blue} area and line), and the temporal calibration factor $\tau$ ({\color{white!30!black} grey} overlay and black line).
Note that $\log \tau$ operates in the opposite direction by subtraction.
}
\label{fig:decomposition-2}
\vspace*{\fill}
\end{figure}

\begin{figure}[p]
\centering
\vspace*{\fill}
\includegraphics[width=\linewidth]{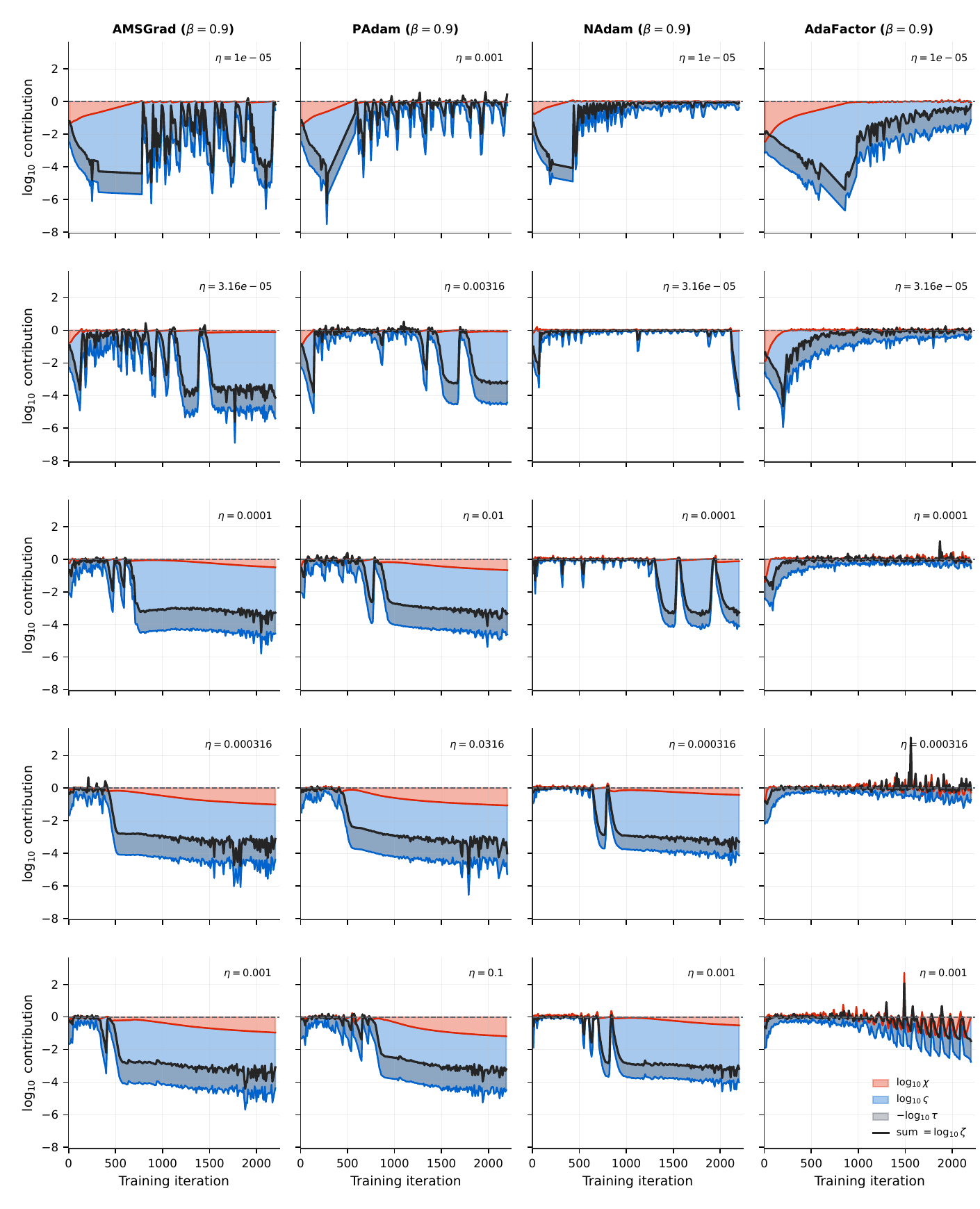}
\vspace{-1em}
\caption{\small
Decomposition of the realized EoS $\zeta$ into factors in log-scale, involving the (relative) edge of stability $\chi$ ({\color{red!50} red} area and line), the spatial participation factor $\varsigma$ ({\color{blue!50} blue} area and line), and the temporal calibration factor $\tau$ ({\color{white!30!black} grey} overlay and black line).
Note that $\log \tau$ operates in the opposite direction by subtraction.
}
\label{fig:decomposition-3}
\vspace*{\fill}
\end{figure}

\begin{figure}[p]
\centering
\vspace*{\fill}
\includegraphics[width=\linewidth]{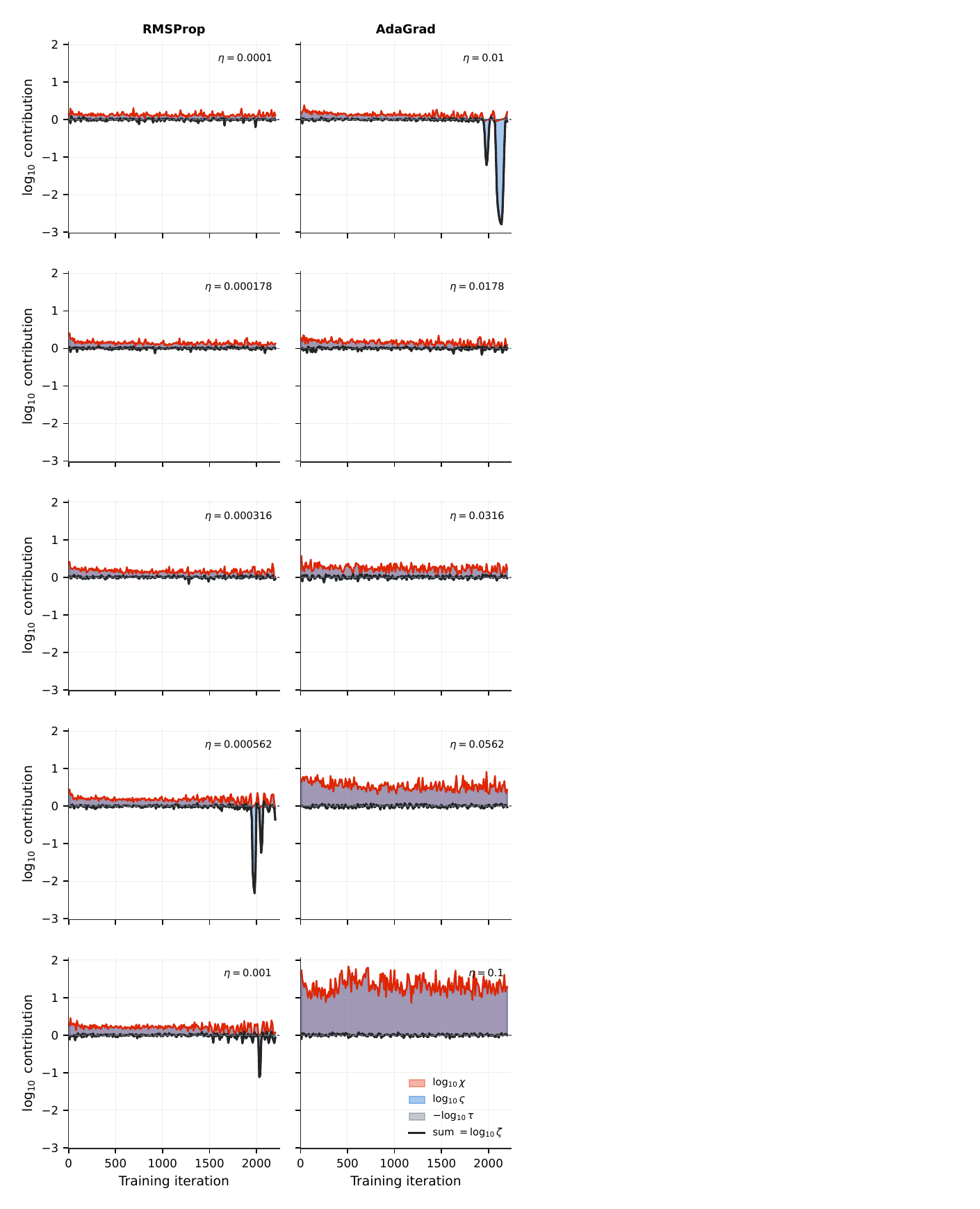}
\vspace{-1em}
\caption{\small
Decomposition of the realized EoS $\zeta$ into factors in log-scale, involving the (relative) edge of stability $\chi$ ({\color{red!50} red} area and line), the spatial participation factor $\varsigma$ ({\color{blue!50} blue} area and line), and the temporal calibration factor $\tau$ ({\color{white!30!black} grey} overlay and black line).
Note that $\log \tau$ operates in the opposite direction by subtraction.
}
\label{fig:decomposition-4}
\vspace*{\fill}
\end{figure}

\end{document}